\theoremstyle{plain}
\newtheorem{theorem}{Theorem}[section]
\newtheorem{proposition}[theorem]{Proposition}
\newtheorem{lemma}[theorem]{Lemma}
\newtheorem{corollary}[theorem]{Corollary}
\theoremstyle{definition}
\newtheorem{definition}[theorem]{Definition}
\newtheorem{assumption}[theorem]{Assumption}
\theoremstyle{remark}
\newtheorem{remark}[theorem]{Remark}
\DeclareMathOperator*{\argmax}{arg\,max}
\DeclareMathOperator*{\argmin}{arg\,min}
\newcommand{\iid}{ \stackrel{\mathrm{i.i.d}}{\sim} }
\newcommand{\pa}{\mathrm{\pa}}
\newcommand{\RN}[1]{%
  \textup{\uppercase\expandafter{\romannumeral#1}}%
}
\newcommand{\kibitz}[2]{\ifnum\Comments=1\textcolor{#1}{#2}\fi}
\title{Asymptotically Optimal Fixed-Budget Best Arm Identification with Variance-Dependent Bounds}
\author[1]{Masahiro Kato}
\author[1]{Masaaki Imaizumi}
\author[2]{Takuya Ishihara}
\author[3]{Toru Kitagawa}
\affil[1]{Department of Basic Science, the University of Tokyo}
\affil[2]{Graduate School of Economics and Management, Tohoku University}
\affil[3]{Department of Economics, Brown University and Department of Economics, University College London}
\begin{document}
\maketitle

\begin{abstract}
We investigate the problem of fixed-budget \emph{best arm identification} (BAI) for minimizing expected simple regret. In an adaptive experiment, a decision maker draws one of multiple treatment arms based on past observations and observes the outcome of the drawn arm. After the experiment, the decision maker recommends the treatment arm with the highest expected outcome. We evaluate the decision based on the \emph{expected simple regret}, which is the difference between the expected outcomes of the best arm and the recommended arm. Due to inherent uncertainty, we evaluate the regret using the \emph{minimax} criterion. First, we derive asymptotic lower bounds for the worst-case expected simple regret, which are characterized by the variances of potential outcomes (leading factor). Based on the lower bounds, we propose the \emph{Two-Stage} (TS)-\emph{Hirano-Imbens-Ridder} (HIR) strategy, which utilizes the HIR estimator \citep{hirano2003} in recommending the best arm. Our theoretical analysis shows that the TS-HIR strategy is asymptotically minimax optimal, meaning that the leading factor of its worst-case expected simple regret matches our derived worst-case lower bound. Additionally, we consider extensions of our method, such as the asymptotic optimality for the probability of misidentification. Finally, we validate the proposed method's effectiveness through simulations.
\end{abstract}

\section{Introduction}
We consider adaptive experiments with multiple treatment arms, such as slot machine arms, different therapies, and distinct unemployment assistance programs. In industrial applications, interactive learning with human feedback for identifying a treatment arm with the highest expected outcome (best treatment arm) is of great interest. This problem is known as the best arm identification (BAI) problem and is a variant of the stochastic multi-armed bandit (MAB) problem \citep{Thompson1933,Lai1985}. In this study, we investigate \emph{fixed-budget BAI}, where we aim to minimize the expected simple regret after a fixed number of rounds of an adaptive experiment known as a \textit{budget} \citep{Bubeck2009,Bubeck2011,Audibert2010}. In particular, we focus on the worst-case performance of BAI strategies to reflect the uncertainty of human feedback.

In our setting, a decision maker sequentially draws one of the treatment arms based on past observations in each round of an adaptive experiment and recommends an estimated best treatment arm for future experimental subjects after the experiment. We measure the performance of the decision maker's strategy using the expected simple regret, which is the difference between the maximum expected outcome that could be achieved with complete knowledge of the distributions and the expected outcome of the treatment arm recommended by the strategy. Due to the inherent uncertainty, we evaluate the performance using the worst-case criterion among a given class of bandit models \citep{Bubeck2011}. For nonparametric bandit models with finite variances, we derive worst-case lower bounds. The lower bound's leading factor is characterized by the variances of potential outcomes. We then propose the Two-Stage (TS)-Hirano-Imbens-Ridder (HIR) strategy and show that it is asymptotically minimax optimal, meaning that the leading factor of its worst-case expected simple regret matches that of the lower bound.

In \citet{Bubeck2011}, worst-case lower bounds for the expected simple regret are derived for bandit models with bounded supports. These lower bounds only rely on the boundedness of the bandit models and do not depend on any other distributional information. In this study, we derive lower bounds that depend on the variances of the bandit models, which require the use of distributional information for an optimal strategy. Our lower bounds are based on change-of-measure arguments using the Kullback-Leibler (KL) divergence, which has been used to derive tight lower bounds in existing studies \citep{Kaufman2016complexity}. Note that the variance appears as the second-order expansion of the KL divergence.

Furthermore, to improve efficiency in this task, we consider a scenario where a decision-maker can draw a treatment arm based on contextual information. We assume that the contextual information can be observed just before drawing a treatment arm and that arm draws can be optimized using this information. Unlike in the conventional setting of contextual bandits, our goal is to identify the treatment arm with the highest unconditional expected outcome rather than the conditional expected outcome. This setting is motivated by the goals of ATE estimation \citep{Laan2008TheCA, Hahn2011} and BAI with fixed confidence and contextual information \citep{Russac2021, Kato2021Role}. Our findings indicate that utilizing contextual information can reduce the expected simple regret, even if our focus is on the unconditional best treatment arm. Note that this setting is a generalization of fixed-budget BAI without contextual information, and our result holds novelty even in the absence of contextual information.

In BAI, target sample allocation ratios play a critical role in determining the proportion of times each treatment arm is drawn. In many BAI settings, target allocation ratios do not have closed-form solutions and require numerical analysis. However, in our analysis, we derive analytical solutions for several specific cases characterized by the standard deviations or variances of the outcomes. When there are only two treatment arms, the target allocation ratio is the ratio of the standard deviation of outcomes. When there are three or more treatment arms without contextual information, the target allocation ratio is the ratio of the variance of outcomes. This contrasts with the findings of \citet{Bubeck2011}, which explores the minimax evaluation for bandit models with bounded outcome supports and finds that a strategy of drawing each treatment arm with an equal ratio and recommending a treatment arm with the highest sample average of observed outcomes is minimax optimal. Our results differ from theirs and suggest drawing treatment arms based on the ratio of the standard deviations or variances of outcomes.

Our problem is also closely related to theories of statistical decision-making \citep{Wald1949,Manski2000,Manski2002,Manski2004}, limits of experiments \citep{LeCam1972,Vaart1998}, and semiparametric theory \citep{hahn1998role}, not only to BAI. Among them, semiparametric theory plays an essential role because it allows us to characterize the lower bounds with the semiparametric analogue of the Fisher information \citep{Vaart1998}. A more detailed survey is presented in Appendix~\ref{appdx:related}.

In summary, our contributions include: (i) a lower bound for the worst-case expected simple regret; (ii) an analytical solution for the target allocation ratio, characterized by the variances of outcomes; (iii) the TS-HIR strategy; (iv) an asymptotic minimax optimality of the TS-HIR strategy; 
These findings contribute to a variety of subjects, including statistical decision theory and causal inference, in addition to BAI.

\textbf{Organization.} In Section~\ref{sec:problem_setting}, we formulate our problem. In Section~\ref{sec:minimax_lower}, we establish lower bounds for the worst-case expected simple regret and a target allocation ratio. In Section~\ref{sec:ts_hir}, we introduce the TS-HIR strategy. Then, in Section~\ref{sec:asymp_opt}, we demonstrate upper bounds of the strategy and its asymptotic minimax optimality. Finally, we discuss related literature in Section~\ref{sec:related}.

\section{Problem Setting}
\label{sec:problem_setting}
We investigate the following setup of fixed-budget BAI. Given a fixed number of rounds $T$, referred to as a budget, in each round $t = 1,2,\dots, T$, a decision maker observes contextual information (covariate) $X_t\in\mathcal{X}$ and draws a treatment arm $A_t \in [K] = \{1,2,\dots, K\}$. Here, $\mathcal{X}\subset \mathbb{R}^D$ is a space of contextual information\footnote{BAI without contextual information corresponds to case where $\mathcal{X}$ is a singleton.}. The decision maker then immediately observes an outcome (or reward) $Y_t$ linked to the drawn treatment arm $A_t$. This setting is referred to as the bandit feedback or Neyman-Rubin causal model \citep{Neyman1923,Rubin1974}, in which the outcome in round $t$ is $Y_t= \sum_{a\in[K]}\mathbbm{1}[A_t = a]Y^a_{t}$, where $Y^a_{t}\in\mathbb{R}$ is a potential independent (random) outcome, and $Y^1_t,Y^2_t,\dots, Y^K_t$ are conditionally independent given $X_t$. We assume that $X_t$ and $Y^a_{t}$ are independent and identically distributed (i.i.d.) over $t \in [T] = \{1,2,\dots, T\}$. Our goal is to find a treatment arm with the highest expected outcome marginalized over the contextual distribution of $X_t$ with a minimal expected simple regret after observing the outcome in the round $T$.

We define our goal formally. Let $P$ be a joint distribution of $(Y^1, Y^2, \dots, Y^K, X)$, and $(Y^1_t, Y^2_t, \dots, Y^K_t, X_t)$ be an i.i.d. copy of $(Y^1, Y^2, \dots, Y^K, X)$ in round $t$.  We refer to the distribution of the potential outcome $(Y^1, Y^2, \dots, Y^K, X)$ a full-data bandit model \citep{Tsiatis2007semiparametric}.  For a given full-data bandit model $P$, let $\mathbb{P}_{P}$ and $\mathbb{E}_{P}$ denote the probability law and expectation with respect to $P$, respectively. Besides, let $\mu^a(P) = \mathbb{E}_{P}[Y^a_t]$ denote the expected outcome marginalized over $X$. Let $\mathcal{P}$ denote the set of all $P$. An algorithm in BAI is referred to as a \emph{strategy}, which recommends a treatment arm $\widehat{a}_T \in [K]$ after sequentially drawing treatment arms in $t = 1,2,\dots,T$. With the sigma-algebras $\mathcal{F}_{t} = \sigma(X_1, A_1, Y_1, \ldots, X_{t}, A_t, Y_t)$, we define a BAI strategy as a pair $ ((A_t)_{t\in[T]}, \widehat{a}_T)$, where $(A_t)_{t\in[T]}$ is a sampling rule and $\widehat{a}_T$ is a recommendation rule. The sampling rule draws a treatment arm $A_t \in [K]$ in each round $t$ based on the past observations $\mathcal{F}_{t-1}$ and observed context $X_t$. The recommendation rule returns an estimator $\widehat{a}_T$ of the best treatment arm $\widehat{a}^*(P)$ based on observations up to round $T$. Here, $\widehat{a}_T$ is $\mathcal{F}_T$-measurable. For a bandit model $P\in\mathcal{P}$ and a strategy $\pi\in\Pi$, let us define the simple regret as
\begin{align*}
    &r_T(\pi)(P) := \max_{a\in[K]}\mu^a(P) - \mu^{\widehat{a}_T}(P).
\end{align*}
Our goal is to find a strategy that minimizes the worst-case expected simple regret $\sup_{P\in\widetilde{\mathcal{P}}} \mathbb{E}_P[r_T(\pi)(P)]$, where the expectation is taken over the randomness of $\widehat{a}_T\in[K]$ and $\widetilde{\mathcal{P}}\subseteq \mathcal{P}$ is a specific class of bandit models. First, we derive the worst-case lower bounds in Section~\ref{sec:minimax_lower}. Then, we propose an algorithm in Section~\ref{sec:ts_hir} and show the minimax optimality for the expected simple regret in Section~\ref{sec:asymp_opt}.



\textbf{Notation.} Let $\mu^a(P)(x) := \mathbb{E}_P[Y^a|X=x]$ be  the conditional expected outcome given $x\in\mathcal{X}$ for $a\in[K]$. Let $\left(\sigma^a(P)\right)^2$ be the variance of $Y^a$ under $P\in\mathcal{P}$. Let $\Delta^a(P)$ be a gap $\max_{a\in[K]}\mu^a(P) - \mu^{\widehat{a}_T}(P)$. 



\section{Lower Bounds for the Worst-case Expected Simple Regret}
\label{sec:minimax_lower}
In this section, we derive lower bounds for the worst-case expected simple regret. The expected simple regret can be expressed as 
\[\mathbb{E}_P[r_T(\pi)(P)] = \sum_{b\in[K]}\Delta^b(P)\mathbb{P}_{P}\left(\widehat{a}_T = b\right).\]
Here, for each fixed $\mu^a(P), P\in\widetilde{\mathcal{P}}$, independent of $T$, the \emph{probability of misidentification}, \[\mathbb{P}_{P}\left(\widehat{a}_T \notin \argmax_{a\in[K]}\mu^a(P)\right),\] converges to zero with an exponential rate, while $\Delta^{\widehat{a}_T}(P)$ is the constant. Therefore, we disregard $\Delta^{\widehat{a}_T}(P)$, and the convergence rate of  $\mathbb{P}_{P}\left(\widehat{a}_T \notin \argmax_{a\in[K]}\mu^a(P)\right)$ dominates the expected simple regret. In this case, to evaluate the convergence rate of $\mathbb{P}_{P}\left(\widehat{a}_T \notin \argmax_{a\in[K]}\mu^a(P)\right)$, we utilize large deviation upper bounds. In contrast, if we examine the worst case among $\widetilde{\mathcal{P}}$, which includes a bandit model such that the gaps between the expected outcomes converge to zero with a certain order of the sample size $T$, a bandit model $P$ whose gaps converge to zero at a rate of $O(1/\sqrt{T})$ dominates the evaluation of the expected simple regret. In general, for the gap $\Delta^{b}(P)$, the worst-case simple regret is approximately given as 
\begin{align*}
&\sup_{P\in\widetilde{\mathcal{P}}} \mathbb{E}_P[r_T(\pi)(P)] \approx \sup_{P\in\widetilde{\mathcal{P}}} \sum_{b\in[K]}\Delta^{b}(P)\exp\Big(-T\left(\Delta^{b}(P)\right)^2/ C^b\Big),
\end{align*} 
where $(C^b)_{b\in[b]}$ are constants \citep{Bubeck2011}. This is because $\mathbb{P}_{P}\left(\widehat{a}_T = b\right)$ is approximately $\exp\Big(-T\left(\Delta^{b}(P)\right)^2/ C^b\Big)$. Then, the maximum is obtained when $\Delta^b(P) = \sqrt{\frac{C^b}{T}}$ for a constant $C^b > 0$, which balances the regret caused by the gap $\Delta^{b}(P)$ and probability of misidentification $\mathbb{P}_{P}\left(\widehat{a}_T = b\right)$.

\subsection{Recap: Lower Bounds for Bandit Models with Bounded Supports}
\label{sec:bubeck_lower}
\citet{Bubeck2011} shows a (non-asymptotic) lower bound for bandit models with bounded support, where a strategy with the uniform sampling rule is optimal. Let us denote the class of bandit models with bounded outcomes by $P^{[0, 1]}$, where each potential outcome $Y^a_t$ is in $[0, 1]$. Then, the authors show that for all $T \geq K \geq 2$, any strategy $\pi\in \Pi$ satisfies
$\sup_{P\in P^{[0, 1]}} \mathbb{E}_P[r_T(\pi)(P)] \geq \frac{1}{20} \sqrt{\frac{K}{T}}$. For this worst-case lower bound, the authors show that a strategy with the uniform sampling rule and empirical best arm (EBA) recommendation rule is optimal, where we draw $A_t = a$ with probability $1/K$ for all $a\in[K]$ and $t\in[T]$ and recommend a treatment arm with the highest sample average of the observed outcomes, which is referred to as the uniform-EBA strategy. Under the uniform-EBA strategy $\pi^{\mathrm{Uniform}\mathchar`-\mathrm{EBM}}$, for $T = K\lfloor T / K \rfloor$, $\sup_{P\in\mathcal{P}^{[0, 1]}} \mathbb{E}_P\left[r_T\left(\pi^{\mathrm{Uniform}\mathchar`-\mathrm{EBM}}\right)(P)\right] \leq 2 \sqrt{\frac{K\log K}{T + K}}$.
Thus, the upper bound matches the lower bound if we ignore the $\log K$ and constant terms. 

Although the uniform-EBA strategy is nearly optimal, a question remains whether more knowledge about the class of bandit models could be used to derive a tight lower bound and propose another optimal strategy consistent with the novel lower bound.
As the worst-case lower bound in \citet{Bubeck2011} is referred to as a distribution-free lower bound, the lower bound does not utilize distributional information, such as variance. In this study, although we still consider worst-case expected simple regret, we develop lower and upper bounds depending on distributional information. Specifically, we characterize the bounds by the variances of potential outcomes. In a worst-case analysis of simple regret, the evaluation of the simple regret is dominated by an instance of a bandit model such that the gaps between the best and suboptimal treatment arms are $O(1/\sqrt{T})$. Here, recall that tight lower bounds depend on the KL divergence \citep{Lai1985,Kaufman2016complexity}. Additionally, the second-order Taylor series expansion of the KL divergence with regard to the gaps can be interpreted as the variance (inversed Fisher information). Therefore, the tight lower bounds employing distributional information in the worst-case expected simple regret should be characterized by the variances (the second-order Taylor series expansion of the KL divergence). In the following sections, we consider worst-case lower bounds characterized by the variances of bandit models. 

\subsection{Local Location-Shift Bandit Models}
\label{sec:local_location}
In this section, we derive asymptotic lower bounds for the worst-case expected simple regret. To derive lower bounds, we often utilize an alternative hypothesis. We consider a bandit model whose expected outcomes are the same among the $K$ treatment arms. We refer it to as the null bandit model.
\begin{definition}[Null bandit models]
A bandit model $P\in\mathcal{P}^*$ is called a null bandit model if the expected outcomes are equal: $\mu^1(P) = \mu^2(P) = \cdots = \mu^K(P)$.
\end{definition}
Then, we consider a class of bandit models with unique fixed variances for null bandit models, called local location-shift bandit models. Furthermore, we assume that the moments of potential outcomes are bounded. We define our bandit models as follows.
\begin{definition}[Local location-shift bandit models]
\label{def:ls_bc}
A class of bandit models $\mathcal{P}^*$ contains all bandit models that satisfy the following conditions:
\begin{description}
    \item[(i) Absolute continuity.] For all $P, Q \in \mathcal{P}^*$ and $a\in[K]$, let $P^a$ and $Q^a$ be the joint distributions of $(Y^a, X)$ of a treatment arm $a$ under $P$ and $Q$, respectively. The distributions $P^a$ and $Q^a$ are mutually absolutely continuous and have density functions with respect to the Lebesgue measure.
    \item[(ii) Invariant contextual information.] For all $P\in\mathcal{P}^*$, the distributions of contextual information $X$ are the same. Let $\mathbb{E}^X$ be an expectation operator over $X$.\\
    \item[(iii) Lipschitz continuity.] For all $a\in[K]$, $x\in\mathcal{X}$, there exists a constant $C > 0$ independent of $T$ such that for all $P, P' \in \mathcal{P}^*$, $\left|\left(\sigma^a(P)(x)\right)^2 - \left(\sigma^a(P')(x)\right)^2\right| < C\left| \mu^a(P)(x) - \mu^a(P')(x) \right|$.
    \item[(iv) Unique conditional variance.] For all null bandit models $P^\sharp\in\mathcal{P}^*$ such that $\mu^1(P^\sharp) = \mu^2(P^\sharp) = \cdots = \mu^K(P^\sharp)$, the conditional variance is uniquely determined and lower bounded by a positive constant; that is, for all $P^\sharp\in\mathcal{P}^*$, there exists a constant $\sigma^a(x) > C$ such that for all $P^\sharp\in\mathcal{P}^*$, $\left(\sigma^a(P^\sharp)(x)\right)^2 = \left(\sigma^a(x)\right)^2$ and $\zeta_P(x) = \zeta(x)$, where $C > 0$ is a constant independent of $T$.
    \item[(v) Boundedness of the moments.] There exists a constant $C > 0$ independent of $T$ such that for all $P\in\mathcal{P}^*$, $a\in[K]$, and $\gamma\in\mathbb{N}$, $\mathbb{E}_P\left[|Y^a|^\gamma\right] < C$.
    \item[(vi) Parallel shift.] There exists a constant $B > 0$, independent from $P$, such that for all $P\in\mathcal{P}^*$ and $a,b\in[K]^2$, $\left|\left(\mu^a(P)(x) - \mu^b(P)(x)\right)\right| \leq B \left|\mu^a(P) - \mu^b(P)\right|$.
\end{description}
\end{definition}
We refer to this class of bandit models as local location-shift bandit models. Our lower bounds are characterized by $\sigma^a(x)$, a conditional variance of null bandit models.

Local location-shift models are a commonly employed assumption in statistical analysis \citep{LehmCase98}. Two key examples are Gaussian and Bernoulli distributions. Under Gaussian distributions with fixed variances, for all $P$, the variances are fixed and only mean parameters shift. Such models are generally called location-shift models. Additionally, we can consider Bernoulli distributions if $\mathcal{P}^*$ includes one instance of $\mu^1(P) = \cdots = \mu^K(P)$ to specify one fixed variance $\sigma^a(x)$. Furthermore, our bandit models are nonparametric within the class and include a more wide range of bandit models, similar to the approach of \citet{Barrier2022}. 

When contextual information is unavailable, condition (v) can be omitted. Although condition (v) may seem restrictive, its inclusion is not essential for achieving efficiency gains through the utilization of contextual information; that is, the upper bound can be smaller even when this condition is not met. However, it is required in order to derive a matching upper bound for the following lower bounds. Note that the same lower bounds can be derived without condition (v).

\subsection{Asymptotic Lower Bounds for Local Location-Shift Bandit Models}
\label{sec:asymptotics}
We consider a restricted class of strategies such that under null bandit models, any strategy in this class recommends one of the arms with an equal probability ($1/K$).
\begin{definition}[Null consistent strategy]
For any $P\in\mathcal{P}^*$ such that $\mu^1(P) = \mu^2(P) = \cdots = \mu^K(P)$, any null consistent strategy satisfies that for any $a, b \in [K]$, $\Big|\mathbb{P}_{P}\left(\widehat{a}_T = a\right) - \mathbb{P}_{P}\left(\widehat{a}_T = b\right)\Big| \to 0$ as $T\to\infty$. 
This implies that $\big|\mathbb{P}_{P}\left(\widehat{a}_T = a\right) - 1/K\big| = o(1)$. 
\end{definition}
For each $x\in\mathcal{X}$, let us define an average allocation ratio under a bandit model $P\in\mathcal{P}$ and a strategy $\pi\in\Pi$ as $\kappa_{T, P}(a|x) = \frac{1}{T}\sum^T_{t=1}\mathbb{E}_{P}\left[ \mathbbm{1}[A_t = a]| X_t = x\right]$. 
This quantity represents the average sample allocation to each treatment arm $a$ under a strategy. Let $\mathcal{W}$ be a set of all measurable functions $\kappa_{T, P}:\mathcal{X} \to (0, 1)$ such that $\sum_{a\in[K]} \kappa_{T, P}(a|x) = 1$ for each $x\in \mathcal{X}$. 
Then, we prove the following lower bound. The proof is shown in Appendix~\ref{sec:proof_thms}. 
\begin{theorem}
\label{thm:null_lower_bound}
Under any null consistent
strategy, as $T\to \infty$,
\begin{align*}
    &\sup_{P\in\mathcal{P}^*} \sqrt{T}\mathbb{E}_P[r_T(\pi)(P)] \geq \frac{1}{12}\inf_{w\in\mathcal{W}}\max_{a\in[K]}\sqrt{\mathbb{E}^X\left[{\left(\sigma^{d}(X)\right)^2}/{w(a|X)}\right]} + o(1).
\end{align*}
\end{theorem}
We refer to $w^* = \argmin_{w\in\mathcal{W}}\max_{a\in[K]}\sqrt{\mathbb{E}^X\left[\frac{\left(\sigma^{a}(X)\right)^2}{w(d|X)}\right]}$ as the target allocation ratio, which is an optimal sample allocation ratio that an optimal strategy aims to achieve. The target sample allocation ratios are used to define a sampling rule in our proposed strategy. In some specific cases, we can obtain analytical solutions for $w^*$. In this study, we consider two cases. As an example, in the following section, we consider a case without contextual information. 

\begin{remark}[Asymptotic Lower Bounds with Allocation Constraints]
When we restrict target allocation ratio, we can restrict the class $\mathcal{W}$. For example, when we need to draw specific treatment arms at a predetermined ratio, we consider a class $\mathcal{W}^\dagger$ such that for all $w\in\mathcal{W}^\dagger$, $\sum_{a\in[K]}w(a|x) = 1$ and $w(b|x) = C$ for all $x\in\mathcal{X}$, some $b\in[K]$, and a constant $C \in (0, 1)$.
\end{remark}

\subsection{Lower Bounds without Contextual Information}
Our result generalizes BAI without contextual information, where $\mathcal{X}$ is a singleton. Let $(\sigma^a)^2$ be the unconditional variance of $Y^a_t$. When there is no contextual information, we can obtain the following lower bound with an analytical solution of the target allocation ratio $w^*\in\mathcal{W}$.
\begin{corollary}
\label{cor:null_lower_bound}
Under any null consistent
strategy, \[\sup_{P\in\mathcal{P}^*} \sqrt{T}\mathbb{E}_P[r_T(\pi)(P)] \geq \frac{1}{12}\sqrt{\sum_{a\in[K]}\left(\sigma^a\right)^2} + o(1)\] as $T\to\infty$, where the target allocation ratio is $w^*(a) = \frac{\left(\sigma^a\right)^2}{\sum_{b\in[K]} \left(\sigma^b\right)^2}$. 
\end{corollary}

When contextual information is available, for $w(a|x) = \frac{\left(\sigma^a(x)\right)^2}{\sum_{b\in[K]}\left(\sigma^b(x)\right)^2}$, the lower bound is 
\[\frac{1}{12}\sqrt{\sum_{a\in[K]}\mathbb{E}^X\left[\left(\sigma^a(X)\right)^2\right]} + o(1).\] 
It is worth noting that by using the law of total variance, $(\sigma^a)^2 \geq \mathbb{E}^X\left[\left(\sigma^{a}(X)\right)^2\right]$. Therefore, by utilizing contextual information, we can tighten the lower bounds as \[\sqrt{\sum_{a\in[K]}\left(\sigma^{a}\right)^2} \geq \sqrt{\sum_{a\in[K]}\mathbb{E}^X\left[\left(\sigma^{a}(X)\right)^2\right]} \geq \min_{w\in\mathcal{W}}\max_{d\in[K]}\sqrt{\mathbb{E}_X\left[\frac{\left(\sigma^{d}(X)\right)^2}{w(d|X)}\right]}.\] This improvement is efficiency gain by using contextual information. 



\subsection{Refined Lower Bounds for Two-Armed Bandits}
For two-armed bandits ($K=2$), we can refine the lower bound as follows. In this case, we can also obtain an analytical solution of the target  allocation ratio even when there is contextual information. 
\begin{theorem}
\label{thm:twoarmed_null_lower_bound}
When $K = 2$, under any null consistent strategy, 
\begin{align*}
&\sup_{P\in\mathcal{P}^*} \sqrt{T}\mathbb{E}_P[r_T(\pi)(P)]\geq \frac{1}{12}\sqrt{\mathbb{E}^X\left[\left(\sigma^{1}(X) + \sigma^{2}(X)\right)^2\right]} + o\left(1\right)
\end{align*}
as $T \to \infty$, 
where the target allocation ratio is $w^*(a|x) = \sigma^a(x) / (\sigma^1(x) + \sigma^2(x))$ for all $x\in\mathcal{X}$. 
\end{theorem}
Here, note that $\sqrt{\sum_{a\in[2]}\left(\sigma^{a}\right)^2} \geq \sqrt{\left(\sigma^{1} + \sigma^{2}\right)^2}$. This target allocation ratio is the same as that in \citet{Kaufman2016complexity} for the probability of misidentification minimization. Also see Section~\ref{sec:compare}. The proofs is shown in Appendix~\ref{appdx:thm:twoarmed_null_lower_bound}. 

Note that for $K\geq 3$, we have an analytical solution of the target allocation ratio only when there is no contextual information, and the target allocation ratio is the ratio of the variances. In contrast, for $K=2$, we can obtain analytical solutions of the target sample allocation ratio even when there is contextual information, and it is the ratio of the (conditional) standard deviation.

\section{The TS-HIR Strategy}
\label{sec:ts_hir}
This section introduces our strategy, which is inspired by the adaptive experimental design of \citet{Hahn2011}. Our strategy comprises the following sampling and recommendation rules: First, we divide the budget into two parts. In the first stage, we uniformly randomly draw a treatment arm. In the second stage, we draw treatment arms to achieve the target allocation ratio. After the second stage, we recommend the best arm using the HIR estimator. We refer to this strategy as the TS-HIR strategy.

\subsection{Target allocation ratio}
\label{sec:target_allocation}
First, we define a target allocation ratio, which is used to determine our sampling rule. As discussed in Section~\ref{sec:minimax_lower}, for certain cases, the target allocation ratio has analytical solutions: for $a\in[K]$, $w^*(a|x) =  \frac{\sigma^{a}(x)}{\sigma^{1}(x) + \sigma^{2}(x)}$ if $K = 2$, and $w^*(a) = \frac{ \left(\sigma^{a}\right)^2}{\sum_{b\in[K]} \left(\sigma^{b}\right)^2}$ if $K \geq 3$ and there is no contextual information. 
When the variances are unknown, this target allocation ratio is also unknown. Therefore, we estimate it during the adaptive experiment and use the estimator as the probability of drawing a treatment arm.

\subsection{The TS-HIR Strategy}
The TS-HIR strategy consists of the following two stage experiments. For each $a\in[K]$ and $x\in\mathcal{X}$, let $w^{(1)}(a|x)$ and $w^{(2)}(a|x)$ be sample allocation ratios in the first and second stages, respectively. When there is no contextual information, let $w^{(s)}(a|x)$ be $w^{(s)}(a)$ fore each $s\in\{1, 2\}$.  We first fix $r\in(0, 1)$ independent from $T$ and $w^{(1)}(a|x)$ such that $w^{(1)}(a|x) > C$ and $\sum_{a\in[K]}w^{(1)}(a|x) = 1$, where $C$ is independent from $T$ \footnote{Although $r$ is assumed to be independent from $T$, we use $r$ such that $\lceil rT \rceil > K + 1$ in the following sections.}. In Stage~1, after drawing each treatment arm $1,2,\dots, K$ at each round $1,2,\dots,K$, we draw treatment arm $A_t = a$ with probability $w^{(1)}(a)$ until $\lceil rT\rceil$. 

After Stage~1, we draw treatment arms with probability $w^{(2)}$, chosen to minimize empirical version of the lower bounds as follows: 
\begin{align}
\label{est:opt_target}
&w^{(2)} = \argmin_{w\in\mathcal{H}}\begin{cases}
        \sqrt{\frac{\left(\widehat{\sigma}^{1}(X_t)\right)^2}{rw^{(1)}(1|X_t) + (1 - r)w(1|X_t)} + \frac{\left(\widehat{\sigma}^{2}(X_t)\right)^2}{rw^{(1)}(2|X_t) + (1 - r)w(2|X_t)}}\\
        \ \ \ \ \ \ \ \ \ \ \ \ \ \ \ \ \ \ \ \ \ \ \ \ \ \ \ \ \ \ \ \ \ \ \ \ \ \ \ \ \ \ \ \ \ \ \ \ \ \ \ \ \ \ \ \ \ \ \ \ \ \ \ \ \ \ \ \ \mathrm{if}\ \ K = 2\\
        \max_{d\in[K]} \sqrt{\frac{1}{\lceil rT\rceil}\sum^{\lceil rT\rceil}_{t=1}\left\{ \frac{\left(\widehat{\sigma}^d(X_t)\right)^2}{rw^{(1)}(d|X_t) + (1 - r)w(d|X_t)} \right\}}\\
        \ \ \ \ \ \ \ \ \ \ \ \ \ \ \ \ \ \ \ \ \ \ \ \ \ \ \ \ \ \ \ \ \ \ \ \ \ \ \ \ \ \ \ \ \ \ \ \ \ \ \ \ \ \ \ \ \ \ \ \ \ \ \ \ \ \ \ \ \ \mathrm{if}\ \ K \geq 3
     \end{cases},
\end{align}
where $\mathcal{H}$ is a class of models of $w^{(2)}$ and can be constrained if there are constraints on the target allocation ratio. Suppose that $r$ is sufficiently small. If $K= 2$ or $K \geq 3$ and there are not contextual information, we have the following analytical solutions for each $a\in [K]$: 
\[w^{(2)}(a|x) = \left\{\frac{\widehat{\sigma}^a(x)}{\widehat{\sigma}^1(x) + \widehat{\sigma}^2(x)} - rw^{(1)}(a|x)\right\}/(1-r)\] if $K = 2$, and 
\[w^{(2)}(a) = \left\{\frac{\left(\widehat{\sigma}^a\right)^2}{\sum_{b\in[K]}\left(\widehat{\sigma}^b\right)^2} - rw^{(1)}(a)\right\}/(1-r)\]
if $K \geq 3$ and there is no contextual information. Specifically, at each $t\in [T]$, we first draw $\gamma_t$ from a uniform distribution with support $[0, 1]$. Then, we draw $A_t = 1$ if $\gamma_t \leq w^{(2)}(1|X_t)$ and $A_t = a$ for $a \geq 2$ if $\gamma_t \in \left(\sum^{a-1}_{b=1}w^{(2)}(b|X_t), \sum^a_{b=1}w^{(2)}(b|X_t)\right]$. 

After Stage~2 (after round $T$), for each $a\in[K]$, we estimate $\mu^a$ for each $a\in[K]$ and recommend the maximum. 
To estimate $\mu^a$, we use the HIR \citep[Hirano-Imbens-Ridder,][]{hirano2003,Hahn2011} estimator defined as
\begin{align}
\label{eq:hir}
&\widehat{\mu}^{\mathrm{HIR}, a}_{T} = \frac{1}{T}\sum^T_{s=1}\frac{\mathbbm{1}[A_s = a]Y^a_s}{\widehat{\pi}(a| X_s)},\qquad \mathrm{where}\quad \widehat{\pi}(a| x) = \frac{\sum^T_{s=1}\mathbbm{1}[A_s = a, X_s = x]}{\sum^{T}_{s=1}\mathbbm{1}[X_s = x]}.
\end{align}
Then, we recommend $\widehat{a}^{\mathrm{HIR}}_T \in [K]$ as
\begin{align*}
\widehat{a}^{\mathrm{HIR}}_T = \argmax_{a\in[K]} \widehat{\mu}^{\mathrm{HIR}, a}_{T}.
\end{align*}

Note that when there is no contextual information, the HIR estimator is equivalent to the following Sample Average (SA) estimator: $\widehat{\mu}^{\mathrm{SA}, a}_{T} =\frac{1}{\sum^T_{s=1}\mathbbm{1}[A_s = a]}\sum^T_{s=1}\mathbbm{1}[A_s = a]Y^a_s$.

\begin{algorithm}[tb]
   \caption{TS-HIR strategy.}
   \label{alg}
\begin{algorithmic}
   \STATE {\bfseries Parameter:} Positive constants $\gamma \in (0, 1)$.
   \STATE {\bfseries Initialization:} {\bfseries for} $t=1$ {\bfseries do} Draw $A_t=t$. For each $a\in[K]$, set $\widehat{w}_{t}(a|X_t) = 1/K$. {\bfseries end for}
    \STATE {\bfseries Stage~1:}
   \FOR{$t=K+1$ to $\lceil rT \rceil$}
   \STATE Draw a treatment arm $a$ with probability $w^{(1)}$, irrespective of the contextual information $X_t$. 
   \ENDFOR
   \STATE Construct $w^{(2)}$ by using the estimators of the variances as \eqref{est:opt_target}.
   \STATE {\bfseries Stage~2:}
   \FOR{$t=\lceil rT \rceil +1$ to $T$}
   \STATE Observe $X_t$. 
   \STATE Draw $\gamma_t$ from a uniform distribution with a support $[0, 1]$. 
   \STATE $A_t = 1$ if $\gamma_t \leq w^{(2)}(1|X_t)$ and $A_t = a$ for $a \geq 2$ if $\gamma_t \in \left(\sum^{a-1}_{b=1}w^{(2)}(b|X_t), \sum^a_{b=1}w^{(2)}(b|X_t)\right]$. 
   \ENDFOR
   \STATE Construct $\widehat{\mu}^{\mathrm{HIR}, a}_{T}$ for each $a\in[K]$ as \eqref{eq:hir}.
   \STATE Recommend $\widehat{a}^{\mathrm{HIR}}_T = \argmax_{a\in[K]} \widehat{\mu}^{\mathrm{HIR}, a}_{T}$.
\end{algorithmic}
\end{algorithm}

\section{Asymptotic Minimax Optimality of the TS-HIR Strategy}
\label{sec:asymp_opt}
In this section, we derive upper bounds for the worst-case expected simple regret under local location-shift models with our proposed TS-HIR strategy. Let us define $\Delta^{a,b}(P) = \mu^a(P) - \mu^b(P)$, $\Delta^{a,b}(P)(x) = \mu^a(P)(x) - \mu^b(P)(x)$, and $\widehat{\Delta}^{\mathrm{HIR}, a, b}_{T} = \widehat{\mu}^{\mathrm{HIR}, a}_{T} - \widehat{\mu}^{\mathrm{HIR}, b}_{T}$ for all $a,b\in]K]$ and $x\in\mathcal{X}$. The asymptotic normality of the HIR estimator, which is derived by \citet{Hahn2011} using arguments of empirical process, plays an important role in our proof.
\begin{proposition}[Asymptotic normality of the HIR estimator. From Theorem~1 of \citet{Hahn2011}]
\label{prp:asymp_normal_hir}
Assume that $w^*$ smoothly depends on $\sigma^a(x)$. Then, 
\begin{align*}
\sqrt{T}\left(\widehat{\Delta}^{\mathrm{HIR}, a, b}_{T} - \Delta^{a,b}(P)\right) \xrightarrow{\mathrm{d}} \mathcal{N}\left(0, V^{a,b}(P)\right),
\end{align*}
where 
\begin{align*}
&V^{a,b}(P) = \mathbb{E}^X\Bigg[\frac{\left(\sigma^a(P)(X)\right)^2}{w^*(a|X)}+ \frac{\left(\sigma^b(P)(X)\right)^2}{w^*(b|X)} + \left(\Delta^{a,b}(P)(X) - \Delta^{a,b}(P)\right)^2\Bigg].
\end{align*} 
\end{proposition}
Then, to prove the upper bound for the expected simple regret, we first derive the that for the probability of misidentification by the Chernoff bound an Proposition~\ref{prp:asymp_normal_hir}. After that, we derive the worst-case upper bound as well as \citet{Bubeck2011}.

\subsection{Upper Bounds for the Probability of Misidentification}
First, we show the upper bound for the probability of misidentification, which is needed for deriving the upper bound for the expected simple regret. Let $V^{a}(P) = V^{a^*(P),a}(P)$ for each $a\in[K]$. 

\begin{theorem}[Upper bound for the probability of misidentification]\label{thm:upper_pom}
For each $P\in\mathcal{P}^*$, $a\in[K]\backslash\{a^*(P)\}$, and any $\varepsilon > 0$, there exists $T_0 > 0$ such that for all $T > T_0$, 
\begin{align*}
&\mathbb{P}_{P}\left(\widehat{\mu}^{\mathrm{HIR}, a^*(P)}_{T} \leq \widehat{\mu}^{\mathrm{HIR}, a}_{T}\right)\leq \exp\left(-\frac{T(\Delta^{a}(P))^2}{2V^{a}(P)} + \left\{\frac{\sqrt{T}\Delta^{a}(P)}{\sqrt{V^{a}(P)}} + \frac{T(\Delta^{a}(P))^2}{2V^{a}(P)}\right\}\varepsilon\right).
\end{align*}
\end{theorem}
The asymptotic optimality for the probability of misidetification is also intriguing problem. There are several arguments for the lower bounds \citep{kaufmann2020hdr}. In Section~\ref{sec:compare}, we show that the upper bound in Theorem~\ref{thm:upper_pom} is asymptotically optimal when $K=2$ and $\Delta^{1, 2}(P) \to 0$. Also see Section~\ref{appdx:related}. The proof is shown in Appendix~\ref{appdx:proof_pom}, which utilizes the asymptotic normality of $\widehat{\Delta}^{\mathrm{HIR}, a, b}_{T}$. 

\subsection{Upper Bounds for the Expected Simple Regret}
Theorem~\ref{thm:upper_pom} yields the following theorem on the upper bound for the expected simple regret.  
\begin{theorem}[Upper bound for the expected simple regret]
\label{thm:upper_exp_sr}
For the TS-HIR strategy $\pi^{\mathrm{HIR}}$ and any $\varepsilon > 0$, there exists $T_0 > 0$ such that for all $T > T_0$,  
\begin{align*}
&\mathbb{E}_P\left[r_T\left(\pi^{\mathrm{HIR}}\right)(P)\right] \leq \sum_{a\in[K]}\Delta^{a}(P)\exp\Bigg(-\frac{T(\Delta^{a}(P))^2}{2V^{a}(P)} + \left\{\frac{\sqrt{T}\Delta^{a}(P)}{\sqrt{V^{a}(P)}} + \frac{T(\Delta^{a}(P))^2}{2V^{a}(P)}\right\}\varepsilon\Bigg).
\end{align*} 
\end{theorem}

\begin{figure*}[t]
    \centering
        \includegraphics[width=120mm]{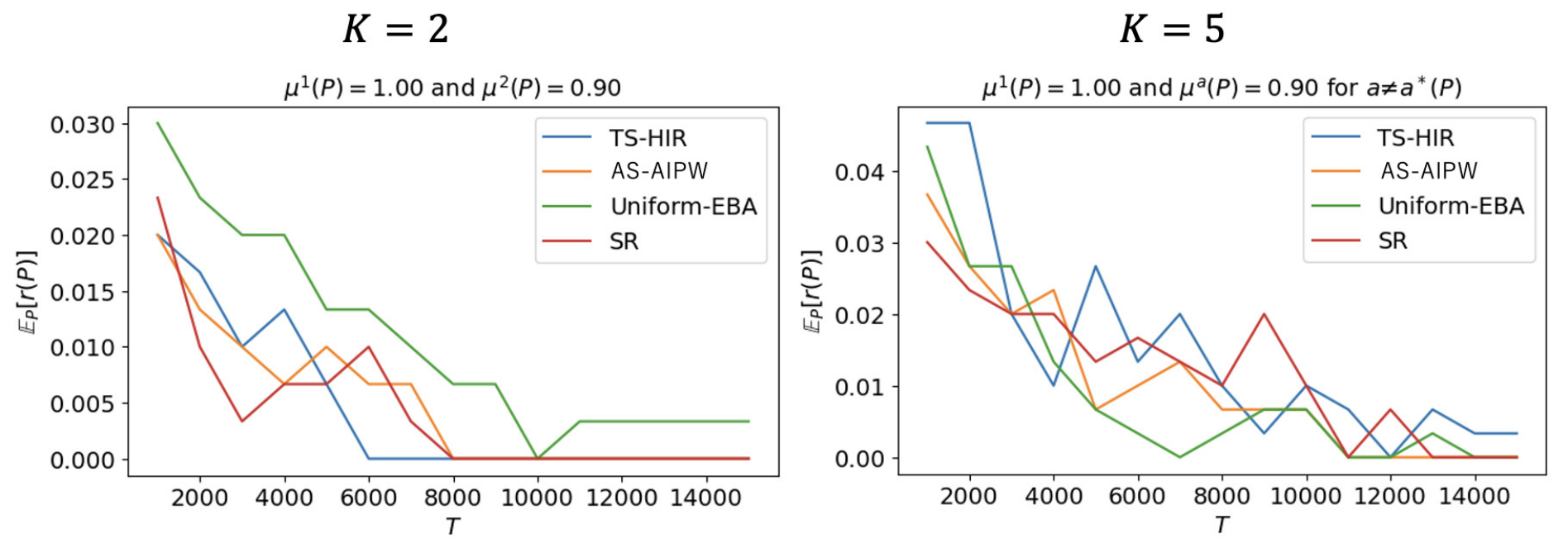}
        \caption{Experimental results. The $y$-axis and $x$-axis denote the expected simple regret $\mathbb{E}_P[r_T(\pi)(P)]$ under each strategy and $T$, respectively. }
\label{fig:synthetic_results}
\vspace{-0.5cm}
\end{figure*}

\subsection{Asymptotic Minimax Optimality}
We first present an asymptotic worst-case upper bound of our proposed TS-HIR strategy. Suppose that there is no contextual information or $\mathcal{X}$ is discrete. We put the following assumption on the convergence rate of estimators of the nuisance parameters. We show the following theorem on the worst-case upper bound and the asymptotic minimax optimality. The proof is shown in Appendix~\ref{appdx:thm_minimax_opt_hir}.  
\begin{theorem}[Asymptotic minimax optimality of the TS-HIR Strategy]
\label{thm:minimax_opt_HIR}
For the TS-HIR strategy $\pi^{\mathrm{HIR}}$, when $K\geq 3$, as $T\to\infty$,
\begin{align*}
&\sup_{P\in\mathcal{P}^*}\sqrt{T}\mathbb{E}_P\left[r_T\left(\pi^{\mathrm{HIR}}\right)(P)\right]\leq \max_{a,b\in[K]:a\neq b}\sqrt{\log K\mathbb{E}^X\left[\frac{\left(\sigma^a(X)\right)^2}{w^*(a|X)} + \frac{\left(\sigma^b(X)\right)^2}{w^*(b|X)}\right]} + o(1)
\end{align*}
\end{theorem}
Because we can obtain an analytical solution of $w^*$ when $K = 2$ with and without contextual information and $K \geq 3$ without contextual information, we have the following upper bound in those case: 
\begin{align*}
&\sup_{P\in\mathcal{P}^*}\sqrt{T}\mathbb{E}_P\left[r_T\left(\pi^{\mathrm{HIR}}\right)(P)\right]\leq \frac{1}{2}\sqrt{\mathbb{E}^X\left[\sigma^1(X) + \sigma^2(X)\Big)^2\right]} + o(1)
\end{align*}
when $K = 2$, and 
\begin{align*}
&\sup_{P\in\mathcal{P}^*}\sqrt{T}\mathbb{E}_P\left[r_T\left(\pi^{\mathrm{HIR}}\right)(P)\right]\leq 2\log K\left(\sqrt{\sum_{a\in[K]}\mathbb{E}^X\left[\left(\sigma^{a}(X)\right)^2\right]}\right) + o(1)
\end{align*} when $K \geq 3$, and there is no contextual information. Thus, when $T\to \infty$, the upepr and lower bound match.


\section{Discussion and Related Work}
\label{sec:related}

\subsection{On the Asymptotic Optimality for the Probability of Misidentification}
\label{sec:compare}
Although we do not discuss the lower bounds for the probability of misidentification, our upper bound for the probability of misidentification in Theorem~\ref{thm:upper_pom} aligns with the lower bound established by \citet{Kaufman2016complexity} for two-armed Gaussian bandits. Let $\mathcal{P}^G \subset \mathcal{P}^*$ denote two-armed Gaussian bandits defined in \citet{Kaufman2016complexity} with fixed variance $(\sigma^1)^2. (\sigma^2)^2 > 0$. Their work demonstrates that $\limsup_{T \to \infty} - \frac{1}{T}\log \mathbb{P}_{P}(\widehat{a}^{\mathrm{HIR}}_T \neq a^*(P)) \le {(\Delta^{1,2}(P))^2}/{2 \big(\sigma^1 + \sigma^2\big)^2}$ when $P\in\mathcal{P}^G$. Our upper bound coincides with the lower bound when $\Delta^{1,2}(P) \to 0$.

\begin{theorem}[Asymptotic optimally for the probability of misidentification]\label{thm:upper_pom2}
For each $P\in\mathcal{P}^{\mathrm{G}}$, 
\begin{align*}
&\liminf_{T \to \infty} - \frac{1}{T}\log \mathbb{P}_{P}\left(\widehat{a}^{\mathrm{HIR}}_T \neq a^*(P)\right)\geq \frac{\left(\Delta^{1,2}(P)\right)^2}{2(\sigma^1 + \sigma^2)^2} - o\left(\left(\Delta^{1,2}(P)\right)^2\right)
\end{align*} 
\end{theorem}

From Theorem~\ref{thm:upper_pom}, $- \frac{1}{T}\log \mathbb{P}_{P}\left(\widehat{a}^{\mathrm{HIR}}_T \neq a^*(P)\right)\geq \frac{(\Delta^{1,2}(P))^2}{2V^{1, 2}(P)} - \left\{\frac{\Delta^{1, 2}(P)}{\sqrt{TV^{1, 2}(P)}} + \frac{(\Delta^{1, 2}(P))^2}{2V^{1, 2}(P)}\right\}\varepsilon$.

When for two-armed Gaussian bandits, the upper bound matches the lower bound in \citet{Kaufman2016complexity}. 
We do not discuss the details of the lower bounds for probability of misidentification for more general cases because there are technical difficulties and many open problems. Also see \citet{kaufmann2020hdr}, \citet{Ariu2021}, \citet{Qin2022open}, and \citet{degenne2023existence} for the details. 

\subsection{The AS-AIPW Strategy}
When the contextual information is continuous, we cannot apply the TS-HIR strategy. In this case, we use the AS-AIPW strategy, which consists of the \textit{Adaptive Sampling} (AS) and \textit{Augmented Inverse Probability Weighting} (AIPW) rules. 
Under the AS rule, at each $t$, we estimate $w^*$ and draw treatment arm $a$ with probability $\widehat{w}_{t}(a|X_t)$, where $\widehat{w}_{t}$ is an estimates of $w^*$. Then, at each $T$, we estimate $\mu^{a}(P)$ by using the AIPW estimator defined in Appendix~\ref{appdx:details_as_aipw} and recommend a treatment arm with the highest estimate value as the best arm. The AIPW estimator debiases the sample selection bias resulting from arm draws based on contextual information. The details is shown in Appendix~\ref{appdx:details_as_aipw}. 

\section{Experiments}
\label{sec:exp}
We compare our TS-HIR and AS-AIPW strategies with the Uniform-EBA \citep[Uniform,][]{Bubeck2011}, and Successive Rejection \citep[SR,][]{Audibert2010} \citep{Gabillon2012}.

In this section, we investigate two setups with $K=2, 5$ without contextual information. The best treatment arm is arm $1$ and $\mu^1(P) = 1$. The expected outcomes of suboptimal treatment arms are equivalent and denoted by $\widetilde{\mu} = \mu^2(P) = \cdots = \mu^K(P)$. We use $\widetilde{\mu} = 0.90$. When $K=2$, we fix the variances as $((\sigma^1)^2, (\sigma^2)^2) = (5, 1)$; when $K = 5$, we fix the variances as $((\sigma^1)^2, (\sigma^2)^2, (\sigma^3)^2, (\sigma^4)^2, (\sigma^5)^2) = (5, 1, 2, 3, 4)$. We continue the experiments until $T = 5,000$ when $\widetilde{\mu} = 0.80$ and $T=15,000$ when $\widetilde{\mu} = 0.90$. We conduct $100$ independent trials for each setting. At each $t\in\{1000, 2000, 3000, \cdots, 14000, 15000\}$, we plot the empirical simple regret in Figure~\ref{fig:synthetic_results}. Additional results with other settings, including contextual information, are presented in Appendix~\ref{appdx:add_res}.

From Figure~\ref{fig:synthetic_results} and Appendix~\ref{appdx:add_res}, we can observe that our proposed strategies  perform well when $K=2$. When there exists a contextual information, our methods tends to show better performances than the others. Although our strategies show preferable performances in many settings, other strategies also perform well. We conjecture that our strategies exhibit superiority against other methods when $K$ is small (mismatching term in the upper bound), the gap between the best and suboptimal arms is small, and the variances significantly vary across arms. As the superiority depends on the situation, we recommend a practitioner to use several strategies in a hybrid way.

\section{Conclusion}
We conducted an asymptotic worst-case analysis of the simple regret in fixed-budget BAI with contextual information. Initially, we obtained lower bounds for local location-shift bandit models, where the variances of potential outcomes characterize the asymptotic lower bounds as a second-order approximation of the KL divergence. Based on these lower bounds, we derived target allocation ratios, which were used to define a sampling rule in the TS-HIR strategy. Finally, we demonstrated that the TS-HIR strategy achieves minimax optimality for the expected simple regret.

\bibliographystyle{icml2023}
\bibliography{arXiv.bbl}

\clearpage

\onecolumn


\appendix

\section{Related Work}
\label{appdx:related}
In this section, we review the literature on the related field.

For the expected simple regret, \citet{Bubeck2011} shows that the Uniform-Empirical Best Arm (EBA) strategy is minimax optimal for bandit models with bounded supports. \citet{Kock2020} extends the results to cases where parameters of interest are functionals of the distribution, and finds that optimal sampling rules are not uniform sampling. \citet{adusumilli2022minimax,Adusumilli2021risk} considers a different minimax evaluation of bandit strategies for both regret minimization and BAI problems, which is based on a formulation using a diffusion process, as proposed by \citet{Wager2021diffusion}. 

\subsection{Literature of BAI}
The MAB problem embodies an abstraction of the sequential decision-making process \citep{Thompson1933,Robbins1952,Lai1985}.
BAI constitutes a paradigm of this problem \citep{EvanDar2006,Audibert2010,Bubeck2011}, with its variants dating back to the 1950s in the context of sequential testing, ranking, and selection problems \citep{bechhofer1968sequential}. Additionally, ordinal optimization has been extensively studied in the field of operations research, with a modern formulation established in the early 2000s \citep{chen2000,glynn2004large}. Most of these studies have focused on determining optimal strategies under the assumption of known target allocation ratios. Within the machine learning community, the problem has been reframed as the BAI problem, with a particular emphasis on performance evaluation under unknown target allocation ratios. \citep{Audibert2010,Bubeck2011}. 

For fixed-budget BAI has been extensively studied, with notable contributions from\citet{Bubeck2011} who demonstrates minimax optimal strategies for the expected simple regret in a non-asymptotic setting, and \citet{Audibert2010} who proposes the UCB-E and Successive Rejects (SR) strategies. \citet{Kock2020} generalizes the results of \citet{Bubeck2011} to cases where parameters of interest are functionals of the distribution and find that target allocation ratios are not uniform, in contrast to the results of \citet{Bubeck2011}. 

\citet{Kaufman2016complexity} also contributes to this field by deriving distribution-dependent lower bounds for BAI with fixed confidence and a fixed budget, using the change-of-measure arguments and building upon the work of \citep{Lai1985}. \citet{Garivier2016} proposes an optimal strategy for BAI with fixed confidence, however, in the fixed-budget setting, there is currently a lack of strategies whose upper bound matches the lower bound established by \citet{Kaufman2016complexity}.
\citet{Carpentier2016} examines the lower bound and show the optimality of the method proposed by \citet{Audibert2010} in terms of leading factors in the exponent. 

The lower bound of \citet{Carpentier2016} is based on a minimax evaluation of the probability of misidentification under a large gap. \citet{yang2022minimax} proposes minimax optimal linear BAI with a fixed budget by extending the result of \citet{Carpentier2016}. 

\citet{Kato2023fixedbudget} summarizes our result and discusses the asymptotic optimality of the AS-AIPW strategy for the probabilty of misidentification in more details. 

In addition to minimax evaluation, \citet{Komiyama2021} develops an optimal strategy whose upper bound for a simple Bayesian regret lower bound matches their derived lower bound. \citet{atsidakou2023bayesian} proposes a Bayes optimal strategy for minimizing the probability of misidentification, which shows a surprising result that $1/\sqrt{T}$-factor dominates the evaluation.

In \citet{Russo2016}, \citet{Qin2017}, and \citet{Shang2020} respectively, the authors propose Bayesian BAI strategies that are optimal in terms of posterior convergence rate. However, it has been shown by \citet{Kasy2021} and \citet{Ariu2021} that such optimality does not extend to asymptotic optimality for the probability of misidentification in fixed-budget BAI.

\citet{adusumilli2022minimax,Adusumilli2021risk} present an alternative minimax evaluation of bandit strategies for both regret minimization and BAI, which is based on a formulation utilizing a diffusion process proposed by \citet{Wager2021diffusion}. Furthermore, \citet{Armstrong2022} extends the results of \citet{Hirano2009} to a setting of adaptive experiments. The results of \citet{adusumilli2022minimax,Adusumilli2021risk} and \citet{Armstrong2022} employ arguments on local asymptotic normality \citep{lecam1960locally,LeCam1972,LeCam1986,Vaart1991,Vaart1998}, where the class of alternative hypotheses comprises of ``local models,'' in which parameters of interest converge to true parameters at a rate of $1/\sqrt{T}$.

\citet{Tekin2015}, \citet{GuanJiang2018}, and \citet{Deshmukh2018} consider BAI with contextual information, but their analysis and setting are different from those employed in this study.

\subsection{Efficient Average Treatment Effect Estimation}
Efficient estimation of ATE via adaptive experiments constitutes another area of related literature. \citet{Laan2008TheCA} and \citet{Hahn2011} propose experimental design methods for more efficient estimation of ATE by utilizing covariate information in treatment assignment. Despite the marginalization of covariates, their methods are able to reduce the asymptotic variance of estimators. \citet{Karlan2014} applies the method of \citet{Hahn2011} to examine the response of donors to new information regarding the effectiveness of a charity. Subsequently, \citet{Meehan2022} and \citet{kato2020efficienttest} have sought to improve upon these studies, and more recently, \citet{gupta2021efficient} have proposed the use of instrumental variables in this context.

\subsection{Other Related Work}
\label{app_subsec:diff_limit_dec}
Our arguments are inspired by limit-of-experiments framework \citep{LeCam1986,Vaart1998,Hirano2009}. Within this framework, we can approximate the statistical experiment by a Gaussian distribution using the CLT. \citet{Hirano2009} relates the asymptotic optimality of statistical decision rules \citep{Manski2000,Manski2002,Manski2004,DEHEJIA2005} to the framework.

The AIPW estimator has been extensively used in the fields of causal inference and semiparametric inference \citep{Tsiatis2007semiparametric,BangRobins2005,ChernozhukovVictor2018Dmlf}. More recently, the estimator has also been utilized in other MAB problems, as seen in \citet{Kim2021} and \citet{Ito2022}.

\section{Preliminaries}
\label{appdx:prelim}
Let $W_i$ be a random variable with probability measure $P$. Let $\mathcal{F}_n = \sigma(W_1, W_2,\dots, W_n)$.

\begin{definition}\label{dfn:uniint}[Uniform integrability, \citet{Hamilton1994}, p.~191] Let $W_t \in \mathbb{R}$ be a random variable with a probability measure $P$.  A sequence $\{W_t\}$  is said to be uniformly integrable if for every $\epsilon > 0$ there exists a number $c>0$ such that 
\begin{align*}
\mathbb{E}_{P}[|W_t|\cdot \mathbbm{1}[|W_t| \geq c]] < \epsilon
\end{align*}
for all $t$.
\end{definition}
\begin{proposition}[Sufficient conditions for uniform integrability; Proposition~7.7, p.~191. \citet{Hamilton1994}]\label{prp:suff_uniint} Let $W_t, Z_t \in\mathbb{R}$ be random variables. Let $P$ be a probability measure of $Z_t$. (a) Suppose there exist $r>1$ and $M<\infty$ such that $\mathbb{E}_{P}[|W_t|^r]<M$ for all $t$. Then $\{W_t\}$ is uniformly integrable. (b) Suppose there exist $r>1$ and $M < \infty$ such that $\mathbb{E}_{P}[|Z_t|^r]<M$ for all $t$. If $W_t = \sum^\infty_{j=-\infty}h_jZ_{t-j}$ with $\sum^\infty_{j=-\infty}|h_j|<\infty$, then $\{W_t\}$ is uniformly integrable.
\end{proposition}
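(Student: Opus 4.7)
The plan is to treat the two parts separately, with part (b) reducing to part (a) via a moment bound. For part (a), the strategy is a one-line application of Markov-type truncation: on the event $\{|W_t| \ge c\}$ we have $1 \le |W_t|^{r-1}/c^{r-1}$, so that
\begin{equation*}
\mathbb{E}_P\bigl[|W_t|\, \mathbbm{1}[|W_t|\ge c]\bigr] \;\le\; \frac{1}{c^{r-1}}\, \mathbb{E}_P\bigl[|W_t|^r \, \mathbbm{1}[|W_t|\ge c]\bigr] \;\le\; \frac{M}{c^{r-1}}.
\end{equation*}
Since $r>1$, the right-hand side tends to $0$ as $c\to\infty$ uniformly in $t$, and for any prescribed $\epsilon>0$ we can pick $c$ with $M/c^{r-1}<\epsilon$ to meet Definition~\ref{dfn:uniint}. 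This step is routine; the only care needed is that the bound $M$ is the same for every $t$ so that $c$ can be chosen uniformly.

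For part (b), I would show that the same uniform $L^r$-bound is inherited by $W_t$, and then invoke part (a). Write $H := \sum_{j=-\infty}^\infty |h_j|<\infty$ and set $p_j := |h_j|/H$, so $\{p_j\}$ is a probability mass function on the integers. By the triangle inequality $|W_t|\le \sum_j |h_j|\,|Z_{t-j}| = H \sum_j p_j |Z_{t-j}|$, and since $x\mapsto x^r$ is convex for $r>1$, Jensen's inequality gives
\begin{equation*}
|W_t|^r \;\le\; H^r \Bigl(\sum_j p_j |Z_{t-j}|\Bigr)^r \;\le\; H^r \sum_j p_j |Z_{t-j}|^r \;=\; H^{r-1}\sum_j |h_j|\, |Z_{t-j}|^r.
\end{equation*}
Taking expectations and using the uniform bound $\mathbb{E}_P[|Z_{t-j}|^r]<M$ yields $\mathbb{E}_P[|W_t|^r] \le H^{r-1}\sum_j |h_j|\, M = H^r M$, a constant independent of $t$. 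Applying part (a) with the constants $r$ and $M' := H^r M$ then furnishes the uniform integrability of $\{W_t\}$.

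Two small technical points will need attention but are not serious obstacles. First, Jensen's inequality for a countable convex combination must be justified; I would handle this by first considering partial sums $W_t^{(N)} := \sum_{|j|\le N} h_j Z_{t-j}$ (for which the finite Jensen bound is immediate), observing that $W_t^{(N)}\to W_t$ almost surely by absolute summability and the uniform $L^r$-bound on $\{Z_t\}$, and then applying Fatou's lemma to pass the $L^r$-bound to the limit. Second, to invoke Fatou I need to know $W_t$ is well-defined almost surely, which follows from $\sum_j |h_j|\,\mathbb{E}_P[|Z_{t-j}|]\le H M^{1/r}<\infty$ by Hölder and Tonelli. These are the only places where a bit of care is required; the rest is the direct truncation argument of part (a).
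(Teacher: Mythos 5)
Your proof is correct. Note that the paper itself supplies no proof of this proposition---it is imported verbatim from \citet{Hamilton1994} (Proposition~7.7) as a background tool---so there is nothing to compare against; your argument is the standard one: part (a) by the truncation bound $\mathbb{E}_P[|W_t|\mathbbm{1}[|W_t|\ge c]]\le M/c^{r-1}$, and part (b) by reducing to (a) via the normalized-weights Jensen bound $\mathbb{E}_P[|W_t|^r]\le H^r M$, with the countable Jensen step and the almost-sure convergence of the series correctly justified through partial sums and Fatou.
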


\begin{proposition}[$L^r$ convergence theorem, p~165, \citet{loeve1977probability}]
\label{prp:lr_conv_theorem}
Let $W_t$ be a random variable with probability measure $P$ and $w$ be a constant. 
Let $0<r<\infty$, suppose that $\mathbb{E}_{P}\big[|W_t|^r\big] < \infty$ for all $t$ and that $W_t \xrightarrow{\mathrm{p}}z$ as $n\to \infty$. The following are equivalent: 
\begin{description}
\item{(i)} $W_t\to w$ in $L^r$ as $t\to\infty$;
\item{(ii)} $\mathbb{E}_{P}\big[|W_t|^r\big]\to \mathbb{E}_{P}\big[|w|^r\big] < \infty$ as $t\to\infty$; 
\item{(iii)} $\big\{|W_t|^r, t\geq 1\big\}$ is uniformly integrable.
\end{description}
\end{proposition}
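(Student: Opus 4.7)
The plan is to establish the cycle (i) $\Rightarrow$ (ii), (ii) $\Rightarrow$ (iii), (iii) $\Rightarrow$ (i), using the standing hypothesis $W_t \xrightarrow{\mathrm{p}} w$ in every implication. A pleasant feature of this setting is that $w$ is a \emph{constant}: $|w|$ is deterministic, so $\mathbbm{1}[|W_t| \leq c] \to \mathbbm{1}[|w| \leq c]$ in probability for every $c \neq |w|$, and the singleton $\{|w|^r\}$ is trivially uniformly integrable, thereby avoiding the usual continuity-of-distribution technicalities.

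For the two easy directions: (i) $\Rightarrow$ (ii) follows from the reverse Minkowski inequality when $r \geq 1$ (giving $|\,\|W_t\|_r - \|w\|_r\,| \leq \|W_t - w\|_r$ and hence convergence of $r$-th powers), and from the concave power bound $|\,|a|^r - |b|^r\,| \leq |a - b|^r$ when $0 < r < 1$ (integrating then gives $|\mathbb{E}|W_t|^r - \mathbb{E}|w|^r| \leq \mathbb{E}|W_t - w|^r \to 0$). For (iii) $\Rightarrow$ (i) I would invoke Vitali's convergence theorem: convergence in probability together with uniform integrability of $\{|W_t - w|^r\}$ yields $W_t \to w$ in $L^r$. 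The needed UI of $|W_t - w|^r$ is supplied by the elementary inequality $|W_t - w|^r \leq 2^{(r-1)_+}(|W_t|^r + |w|^r)$, the assumed UI of $\{|W_t|^r\}$ from (iii), and the triviality of the constant $|w|^r$.

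The main obstacle is (ii) $\Rightarrow$ (iii), which I would handle by a direct truncation argument. Fix any $c > |w|$ and decompose
\begin{equation*}
\mathbb{E}|W_t|^r \;=\; \mathbb{E}\bigl[|W_t|^r \mathbbm{1}[|W_t| \leq c]\bigr] + \mathbb{E}\bigl[|W_t|^r \mathbbm{1}[|W_t| > c]\bigr].
\end{equation*}
The first integrand is bounded by $c^r$ and converges in probability to $|w|^r$ (since $W_t \xrightarrow{\mathrm{p}} w$ and $c > |w|$), so the bounded convergence theorem gives $\mathbb{E}[|W_t|^r \mathbbm{1}[|W_t| \leq c]] \to |w|^r$. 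Combined with the moment convergence assumed in (ii), this forces $\limsup_t \mathbb{E}[|W_t|^r \mathbbm{1}[|W_t| > c]] = 0$ for every $c > |w|$. Uniform integrability of $\{|W_t|^r\}$ then falls out: given $\varepsilon > 0$, a single $c > |w|$ drives the tail below $\varepsilon$ for all $t \geq T_0$, and the finitely many exceptional indices $t < T_0$ are absorbed by enlarging $c$ (possible because $\mathbb{E}|W_t|^r < \infty$ for each such $t$, by monotone convergence in $c$). This step is the main subtlety of the proof: it is precisely where convergence in probability does the essential work of converting a moment-convergence statement into a uniform tail estimate.
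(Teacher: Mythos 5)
The paper does not prove this proposition: it is imported verbatim from \citet{loeve1977probability} as a known result, so there is no in-paper argument to compare against. Your cyclic proof (i)$\Rightarrow$(ii)$\Rightarrow$(iii)$\Rightarrow$(i) is the standard textbook argument and is correct --- the truncation step for (ii)$\Rightarrow$(iii) and the Vitali/uniform-integrability step for (iii)$\Rightarrow$(i) are both handled properly, and you correctly exploit the fact that $w$ is a constant to sidestep the continuity-point issues that arise for a general random limit.
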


\begin{definition}
For $\mathcal{F}_{t}$ equal to the $\sigma$-field generated by $\xi_1,\dots, \xi_t$, $\{W_t, \mathcal{F}_t, t\geq 1\}^\infty_{t=1}$ is a martingale if for all $t\geq 1$, we have
\begin{align*}
    \mathbb{E}[W_{t+1}|\mathcal{F}_{t}] = W_{t} .
\end{align*}
If $\mathbb{E}[W_{t+1}|\mathcal{F}_{t}] = 0$, $\{W_t, \mathcal{F}_t, t\geq 1\}^\infty_{t=1}$ is a martingale difference sequence. 
\end{definition}

\begin{proposition}
[Weak Law of Large Numbers for Martingale, \citet{hall1980martingale}]\label{prp:mrtgl_WLLN}
Let $\{S_t = \sum^t_{s=1} W_s, \mathcal{F}_{t}, t\geq 1\}$ be a martingale and $\{b_t\}$ a sequence of positive constants with $b_t\to\infty$ as $t\to\infty$. Then, writing $W_{ts} = W_s\mathbbm{1}[|W_s|\leq b_t]$, $1\leq s \leq t$, we have that $b^{-1}_t S_t \xrightarrow{\mathrm{p}} 0$ as $t\to \infty$ if 
\begin{description}
\item[(i)] $\sum^t_{s=1}P(|W_s| > b_t)\to 0$;
\item[(ii)] $b^{-1}_t\sum^t_{s=1}\mathbb{E}[W_{ts}| \mathcal{F}_{s-1}] \xrightarrow{\mathrm{p}} 0$, and;
\item[(iii)] $b^{-2}_t \sum^t_{s=1}\big\{\mathbb{E}[W^2_{ts}] - \mathbb{E}\big[\mathbb{E}\big[W_{ts}| \mathcal{F}_{s-1}\big]\big]^2\big\}\to 0$.
\end{description}
\end{proposition}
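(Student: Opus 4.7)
The plan is to use the classical truncation argument together with the $L^2$ orthogonality of martingale differences. Writing $T_t = \sum_{s=1}^t W_{ts}$, I would split $b_t^{-1} S_t = b_t^{-1}(S_t - T_t) + b_t^{-1} T_t$ and further decompose $T_t$ via its Doob (predictable) decomposition. Each of the three resulting pieces will be driven to zero in probability using exactly one of the three listed hypotheses, and the overall conclusion follows by combining the three limits.

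First I would dispatch the tail piece $b_t^{-1}(S_t - T_t)$. Since $S_t - T_t = \sum_{s=1}^t W_s \mathbbm{1}[|W_s| > b_t]$, the event $\{S_t \ne T_t\}$ is contained in $\bigcup_{s=1}^t \{|W_s| > b_t\}$, so by the union bound $\mathbb{P}(S_t \ne T_t) \le \sum_{s=1}^t \mathbb{P}(|W_s| > b_t)$, which tends to $0$ by hypothesis (i). This immediately yields $b_t^{-1}(S_t - T_t) \xrightarrow{\mathrm{p}} 0$, independently of how fast $b_t$ diverges.

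Next I would handle the truncated sum via $T_t = M_t + A_t$, where $A_t = \sum_{s=1}^t \mathbb{E}[W_{ts} \mid \mathcal{F}_{s-1}]$ and $M_t = \sum_{s=1}^t (W_{ts} - \mathbb{E}[W_{ts} \mid \mathcal{F}_{s-1}])$. Hypothesis (ii) gives $b_t^{-1} A_t \xrightarrow{\mathrm{p}} 0$ directly. For $M_t$, the summands form a bounded (hence square-integrable, since $|W_{ts}| \le b_t$) martingale difference sequence with respect to $\{\mathcal{F}_s\}$, so by orthogonality of the increments, $\mathbb{E}[M_t^2] = \sum_{s=1}^t \mathbb{E}\big[(W_{ts} - \mathbb{E}[W_{ts} \mid \mathcal{F}_{s-1}])^2\big] = \sum_{s=1}^t \big\{\mathbb{E}[W_{ts}^2] - \mathbb{E}[(\mathbb{E}[W_{ts} \mid \mathcal{F}_{s-1}])^2]\big\}$. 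Chebyshev's inequality then gives $\mathbb{P}(|M_t|/b_t > \epsilon) \le \mathbb{E}[M_t^2] / (\epsilon^2 b_t^2)$, which vanishes by hypothesis (iii). Assembling the three limits via a single union-bound argument completes the proof.

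The main subtlety to keep in mind is that $W_{ts}$ depends on $t$ through the truncation level $b_t$, so $\{W_{ts}: 1 \le s \le t\}$ is a triangular array rather than a single sequence; this is the reason hypothesis (iii) is phrased as a sum along the row indexed by $t$ rather than as a Kolmogorov-type series condition. This does not obstruct the argument because each row is a bounded martingale difference sequence with respect to the fixed filtration $\{\mathcal{F}_s\}$, so the orthogonality identity used for $\mathbb{E}[M_t^2]$ remains valid row by row. With that row-wise perspective in place the three hypotheses plug in cleanly and there is no further obstacle.
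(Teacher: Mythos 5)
Your proof is correct, and it is the standard truncation--Doob-decomposition--Chebyshev argument for the martingale weak law (Hall and Heyde's Theorem~2.13), with each hypothesis correctly matched to one of the three pieces and the orthogonality identity for the centered truncated increments applied row by row in the triangular array. The paper itself states this proposition as a cited preliminary from \citet{hall1980martingale} and gives no proof, so there is nothing to diverge from; your argument is essentially the canonical one.
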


\begin{proposition}[Central Limit Theorem for a Martingale Difference Sequence; from Proposition~7.9, p.~194, \citet{Hamilton1994}; also see \citet{White1994}]
\label{prp:marclt} 
Let $\{(S_t = \sum^{t}_{s=1} W_t, \mathcal{F}_{t})\}^\infty_{t=1}$ be a martingale with $\mathcal{F}_{t}$ equal to the $\sigma$-field generated by $W_1,\dots, W_t$. Suppose that 
\begin{description}
\item[(a)] $\mathbb{E}[W^2_t] = \sigma^2_t$, a positive value with $(1/T)\sum^T_{t=1}\sigma^2_t\to\sigma^2$, a positive value; 
\item[(b)] $\mathbb{E}[|W_t|^r] < \infty$ for some $r>2$;
\item[(c)] $(1/T)\sum^{T}_{t=1}W^2_t\xrightarrow{p}\sigma^2$. 
\end{description}
Then $S_T\xrightarrow{d}\mathcal{N}(\bm{0}, \sigma^2)$.
\end{proposition}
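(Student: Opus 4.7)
My plan is to bound the worst-case expected simple regret of the RS-AIPW strategy in four steps: (i) decompose the regret as a sum of gap times misidentification probability; (ii) control each misidentification probability by a martingale Chernoff bound whose conditional-variance proxy tends to $V^{a^\star,b*}(P)$; (iii) optimize the gap-dependent bound over admissible $\Delta^{a^\star,b}(P)$; (iv) identify the leading constant through the analytical form of the target allocation $w^\star$.

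First, writing $a^\star=a^\star(P)=\argmax_{a\in[K]}\mu^a(P)$,
\begin{align*}
\mathbb{E}_P[r_T(\pi^{\mathrm{AIPW}})]
&=\sum_{b\neq a^\star}\Delta^{a^\star,b}(P)\,\mathbb{P}_P(\widehat{a}_T=b)\\
&\leq \sum_{b\neq a^\star}\Delta^{a^\star,b}(P)\,\mathbb{P}_P\!\left(\widehat{\mu}^{\mathrm{AIPW},b}_T-\widehat{\mu}^{\mathrm{AIPW},a^\star}_T\geq 0\right).
\end{align*}
Using the martingale-difference sequence $\xi^{b,a^\star}_t(P)$ already identified in the paper, the event above equals $\{\sum_{t=1}^T \xi^{b,a^\star}_t(P)\geq \sqrt{T/V^{a^\star,b*}(P)}\,\Delta^{a^\star,b}(P)\}$. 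Sub-Gaussianity of $Y^a_t-\mu^a(P)(X_t)$ (Def 3.1(iii)), together with the truncation $1/C_{\sigma^2}\leq(\widehat{\sigma}^a_t)^2\leq C_{\sigma^2}$ and $|\widehat{\mu}^a_t|\leq C_\mu$ from Def 3.1(iv), makes each $\xi^{b,a^\star}_t(P)$ conditionally sub-Gaussian with a uniformly controlled proxy, so a martingale Chernoff bound (Proposition~\ref{prp:chernoff}) yields
\begin{align*}
\mathbb{P}_P\!\left(\widehat{\mu}^{\mathrm{AIPW},b}_T\geq\widehat{\mu}^{\mathrm{AIPW},a^\star}_T\right)
\leq \exp\!\left(-\frac{T\,(\Delta^{a^\star,b}(P))^2}{V^{a^\star,b*}(P)}\right)+o(1),
\end{align*}
where Assumption~\ref{asm:almost_sure_convergence} ensures $\Omega^{b,a^\star}_T(P)/T\to V^{a^\star,b*}(P)$ in probability through continuity of $w^\star$ in the estimated variances.

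Third, as a function of $\Delta\geq 0$, the quantity $\sqrt{T}\,\Delta\exp(-T\Delta^2/V)$ is maximized at $\Delta=\sqrt{V/(2T)}$ with value $\sqrt{V/(2e)}$. Because the worst case inside $\mathcal{P}^\ast$ is attained by bandit models with gap $\Delta^{a^\star,b}(P)\sim 1/\sqrt{T}$, this gives the per-pair bound $\sqrt{V^{a^\star,b*}(P)/(2e)}+o(1)$. Plugging in $w^\star(a|x)=(\sigma^a(x))^2/\sum_c(\sigma^c(x))^2$ for $K\geq 3$ gives $V^{a^\star,b*}(P)=2\int\sum_c(\sigma^c(x))^2\zeta(x)\,\mathrm{d}x$ up to the remainder $\int(\Delta^{a^\star,b}(P)(x)-\Delta^{a^\star,b}(P))^2\zeta(x)\,\mathrm{d}x$, which the parallel-shift condition (Def 3.1(v)) forces to be $O((\Delta^{a^\star,b}(P))^2)=O(1/T)$ and hence absorbable into $o(1)$. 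Summing over the $K-1$ suboptimal arms produces the stated $(K-1)$ factor; for $K=2$, $w^\star(a|x)=\sigma^a(x)/(\sigma^1(x)+\sigma^2(x))$ gives the tighter $V^{1,2*}(P)=\int(\sigma^1(x)+\sigma^2(x))^2\zeta(x)\,\mathrm{d}x$, and no $(K-1)$ factor appears since there is a single suboptimal arm.

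The main obstacle will be step (ii): establishing the martingale Chernoff bound with a conditional-variance proxy that converges to $V^{a^\star,b*}(P)$ along sequences $P_T\in\mathcal{P}^\ast$ approaching the supremum. Two subtleties arise: (a) $\widehat{w}_t$ is adaptively estimated yet kept bounded away from $0$ by truncation, which I need to propagate to a uniform conditional sub-Gaussian proxy for $\xi^{b,a^\star}_t(P)$; and (b) Assumption~\ref{asm:almost_sure_convergence} only guarantees pointwise convergence of the nuisance estimators, so lifting this to $\Omega^{b,a^\star}_T(P)/T\to V^{a^\star,b*}(P)$ in probability uniformly in $P$ requires combining sub-Gaussian moment control with uniform-integrability arguments (e.g.\ Propositions~\ref{prp:suff_uniint}--\ref{prp:lr_conv_theorem}) to justify exchanging the supremum with the limit and to absorb the remainder terms into $o(1)$.
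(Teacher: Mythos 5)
Your proposal does not address the statement at hand. The statement is Proposition~\ref{prp:marclt}, a classical central limit theorem for martingale difference sequences, quoted from Proposition~7.9 of \citet{Hamilton1994} (see also \citet{White1994}); the paper imports it as a known auxiliary result and offers no proof of its own. A proof of this proposition would proceed by characteristic-function or Lindeberg-type arguments for martingale arrays: one truncates $W_t$, uses condition (b) to control the truncation error, uses conditions (a) and (c) to show that the sum of conditional variances stabilizes at $\sigma^2$, and then establishes convergence of $\mathbb{E}[\exp(i\lambda S_T)]$ to $\exp(-\lambda^2\sigma^2/2)$ via a telescoping product of conditional characteristic functions. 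Nothing of that nature appears in your writeup.

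What you have written instead is an outline of the proof of Theorem~\ref{thm:minimax_opt} (the worst-case upper bound for the RS-AIPW strategy): the regret decomposition over suboptimal arms, the reduction of each misidentification probability to a tail event for $\sum_t \xi^{a,b}_t(P)$, the optimization of $\Delta\exp(-T\Delta^2/V)$ at $\Delta=\sqrt{V/(2T)}$, and the substitution of the target allocation $w^*$. That material is broadly consistent with the paper's Appendix~C argument for that theorem, but it is a consequence that \emph{uses} Proposition~\ref{prp:marclt} (via Lemmas~\ref{cor:clt_upper}--\ref{lem:cond:mart_clt}) rather than a proof of it. As it stands, the proposal proves the wrong statement and therefore contains no argument for the martingale CLT itself.
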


\begin{proposition}[Rate of convergence in the CLT; from Theorem~3.8, p 88, \citet{hall1980martingale}]\label{prp:rate_clt}
Let $\{(S_t = \sum^{t}_{s=1} W_t, \mathcal{F}_{t})\}^\infty_{t=1}$ be a martingale with $\mathcal{F}_{t}$ equal to the $\sigma$-field generated by $W_1,\dots, W_t$. Let 
\begin{align*}
    V^2_t = \sum^t_{s=1}\mathbb{E}[W^2_s| \mathcal{F}_{t-1}]\qquad 1 \leq t \leq T.
\end{align*}
Suppose that for some $\alpha > 0$ and constants $M$, $C$ and $D$, 
\begin{align*}
    \max_{t\leq T}\mathbb{E}_P[\exp(|\sqrt{t} W_t|^{\alpha})] < M,
\end{align*}
and 
\begin{align*}
    \mathbb{P}_P\left(|V^2_t - 1 | > D/\sqrt{t}(\log t)^{2+2/\alpha}\right) \leq C t^{-1/4}(\log t)^{1+1/\alpha}.
\end{align*}
Then, for $T\geq 2$,
\begin{align*}
    \sup_{-\infty < x < \infty}\big|\mathbb{P}_P(S_T \leq x) - \Phi(x)\big|\leq A T^{-1/4} (\log T)^{1+1/\alpha},
\end{align*}
where the constant $A$ depends only on $\alpha$, $M$, $C$, and $D$.
\end{proposition}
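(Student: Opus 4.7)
The plan is to adapt the classical characteristic-function approach to the Berry--Esseen bound to the martingale setting, following Hall and Heyde, by combining Esseen's smoothing inequality with an iterated conditional Taylor expansion of the characteristic function and a truncation step tailored to the exponential moment hypothesis. First I would invoke Esseen's smoothing inequality to bound $\sup_x |\mathbb{P}_P(S_T \le x) - \Phi(x)|$ by $\int_{-U}^{U} |\phi_{S_T}(u) - e^{-u^2/2}|/|u|\,du + c/U$ for any $U > 0$, and then choose $U$ of order $T^{1/4}/(\log T)^{1+1/\alpha}$ so that the tail term $c/U$ already matches the target rate $A\,T^{-1/4}(\log T)^{1+1/\alpha}$.

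Next I would truncate the martingale increments. Set $c_t = t^{-1/2}(\log t)^{1/\alpha+\delta}$ for a small $\delta > 0$ and let $\widetilde W_t = W_t\,\mathbbm{1}[|W_t| \le c_t]$. Markov's inequality applied to the exponential moment hypothesis $\mathbb{E}_P[\exp(|\sqrt{t}\,W_t|^\alpha)] < M$ gives $\mathbb{P}_P(|W_t| > c_t) \le M\exp(-(\log t)^{1+\alpha\delta})$, so a union bound shows that the probability that $S_T$ ever differs from its truncated analogue is $o(T^{-1/4})$. Recentering by the conditional mean, i.e.\ replacing $\widetilde W_t$ by $\widetilde W_t - \mathbb{E}_P[\widetilde W_t|\mathcal{F}_{t-1}]$, restores the martingale difference property at the cost of a bias controlled by the same tail estimate.

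The core of the argument is an iterated conditional Taylor expansion. For $|u| \le U$, the identity $\log \mathbb{E}_P[e^{iu\widetilde W_t}|\mathcal{F}_{t-1}] = -\tfrac{u^2}{2}\mathbb{E}_P[\widetilde W_t^2|\mathcal{F}_{t-1}] + O(|u|^3\,\mathbb{E}_P[|\widetilde W_t|^3|\mathcal{F}_{t-1}])$, combined with the tower identity $\mathbb{E}_P[e^{iuS_T}] = \mathbb{E}_P[e^{iuS_{T-1}}\mathbb{E}_P[e^{iuW_T}|\mathcal{F}_{T-1}]]$, yields
\[
\phi_{S_T}(u) = \mathbb{E}_P\!\left[\exp\!\left(-\tfrac{u^2}{2}\widetilde V_T^2\right)\right] + \text{(cubic remainder)},
\]
where $\widetilde V_T^2$ is the truncated analogue of $V_T^2$ and the remainder is bounded by $|u|^3 \sum_t c_t\,\mathbb{E}_P[\widetilde W_t^2|\mathcal{F}_{t-1}]$. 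With $U$ chosen as above and $\sum_t \mathbb{E}_P[\widetilde W_t^2|\mathcal{F}_{t-1}] \approx 1$, integrating this remainder (divided by $|u|$) over $|u|\le U$ yields a contribution of the target order.

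Finally, I would exchange $\mathbb{E}_P[\exp(-u^2 \widetilde V_T^2/2)]$ for $e^{-u^2/2}$ using the hypothesis on $V_T^2$: on the event $\{|V_T^2 - 1| \le D\,T^{-1/2}(\log T)^{2+2/\alpha}\}$ the substitution costs $O(u^2\,T^{-1/2}(\log T)^{2+2/\alpha})$ by applying the mean value theorem to $v \mapsto e^{-u^2 v/2}$, while on its complement the substitution costs at most $2C\,T^{-1/4}(\log T)^{1+1/\alpha}$ directly from the probability bound. Dividing by $|u|$ and integrating over $|u| \le U$ produces the claimed rate. The main obstacle is the joint bookkeeping: the truncation level $c_t$, the smoothing parameter $U$, and the logarithmic exponents must be tuned simultaneously so that the truncation deficit, the recentering bias, the cubic Taylor remainder, and the $V_T^2$ replacement error all match $T^{-1/4}(\log T)^{1+1/\alpha}$; any single mis-tuned exponent would degrade the final rate.
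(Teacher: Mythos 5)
This proposition is not proved in the paper at all: it is imported verbatim (with some transcription slips, e.g.\ $S_t=\sum_{s=1}^t W_t$ for $\sum_{s=1}^t W_s$ and $\mathcal{F}_{t-1}$ for $\mathcal{F}_{s-1}$ in the definition of $V_t^2$) from Theorem~3.8 of \citet{hall1980martingale}, so there is no in-paper argument to compare yours against. Judged on its own, your sketch identifies the right family of techniques --- Esseen's smoothing inequality with $U\asymp T^{1/4}/(\log T)^{1+1/\alpha}$, truncation calibrated to the exponential moment bound, and replacement of $V_T^2$ by $1$ using the hypothesis on $\mathbb{P}_P(|V_T^2-1|>\cdot)$ --- and those outer layers are essentially how Hall and Heyde organize the proof.

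The genuine gap is in the central step. The claimed identity $\phi_{S_T}(u)=\mathbb{E}_P[\exp(-\tfrac{u^2}{2}\widetilde V_T^2)]+(\text{cubic remainder})$ does not follow from iterating the tower property as you describe: at each stage the conditional characteristic function $\mathbb{E}_P[e^{iu\widetilde W_t}\mid\mathcal{F}_{t-1}]$ is a \emph{random} factor, and its logarithm cannot simply be summed and pulled outside the expectation, because the per-step errors multiply against the (random, possibly large) partial products rather than adding. The standard resolution --- and the one Hall and Heyde actually use --- is the exponential-martingale device of Heyde and Brown: one studies $T_n(u)=\exp(iuS_n+\tfrac{u^2}{2}V_n^2)$, shows $\mathbb{E}_P[T_n(u)]\approx 1$ on the event where $V_n^2$ stays near $1$ (which is exactly where the second hypothesis enters, not merely at the final substitution step), and controls the multiplicative error accumulation there. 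Without this (or an equivalent Lindeberg-type swapping argument), your remainder bookkeeping is not justified; indeed the bound $|u|^3\sum_t c_t\,\mathbb{E}_P[\widetilde W_t^2\mid\mathcal{F}_{t-1}]$ integrated over $|u|\le U$ does not visibly close at the rate $T^{-1/4}(\log T)^{1+1/\alpha}$ without exploiting the restriction to that event. A secondary issue: with the truncation level $c_t=t^{-1/2}(\log t)^{1/\alpha+\delta}$ and the moment hypothesis normalized by $\sqrt{t}$ rather than $\sqrt{T}$, the union bound $\sum_{t\le T}M\exp(-(\log t)^{1+\alpha\delta})$ is dominated by its first few terms and is $O(1)$, not $o(T^{-1/4})$, so the truncation step as written does not give what you claim. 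Given that the result is being quoted rather than established here, the clean course is to cite Theorem~3.8 of \citet{hall1980martingale} rather than reprove it.
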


\begin{proposition}[Convergence in distribution and in moments. Lemma~2.1 of \citet{Hayashi2000}]
\label{prp:hayashi}
Let $\alpha_{s,n}$ be the $s$-th moment of $z_n$, and $\lim_{n\to\infty}\alpha_{s,n} = \alpha_s$, where $\alpha_s$ is finite. Suppose that for some $\delta > 0$, $\mathbb{E}\left[|z_n|^{s+\delta}\right] < M < \infty$ for all $n$ and a constant $M > 0$ independent of $n$. If $z_n\xrightarrow{\mathrm{d}}z$, then $\alpha_s$ is the $s$-th moment of $z$.
\end{proposition}

\section{Non-asymptotic Lower Bounds for Bandit Models with Bounded Supports}
\label{sec:bubeck_lower2}
First, we introduce an existing lower bound for bounded bandit models. Let us denote the class of bandit models with bounded outcomes by $P^{[0, 1]}$, where each potential outcome $Y^a_t$ is in $[0, 1]$. Then, \citet{Bubeck2011} proposes the following lower bound, which holds for $P^{[0, 1]}$. 
\begin{proposition}
For all $T \geq K \geq 2$, any strategy $\pi\in \Pi$ satisfies
$\sup_{P\in P^{[0, 1]}} \mathbb{E}_P[r_T(\pi)(P)] \geq \frac{1}{20} \sqrt{\frac{K}{T}}$.
\end{proposition}
This lower bound only requires that the support of the bandit models in $P^{[0, 1]}$ is bounded. 

For this non-asymptotic lower bound, \citet{Bubeck2011} shows that a strategy with the uniform sampling rule and empirical best arm (EBA) recommendation rule is optimal, where we draw $A_t = a$ with probability $1/K$ for all $a\in[K]$ and $t\in[T]$ and recommend a treatment arm with the highest sample average of the observed outcomes. We call this strategy the uniform-EBA strategy. 
\begin{proposition}[Non-asymptotic optimality of the uniform-EBA strategy]
Under the uniform-EBA strategy $\pi^{\mathrm{Uniform}\mathchar`-\mathrm{EBM}}$, for $T = K\lfloor T / K \rfloor$, $\sup_{P\in\mathcal{P}^{[0, 1]}} \mathbb{E}_P\left[r_T\left(\pi^{\mathrm{Uniform}\mathchar`-\mathrm{EBM}}\right)\right] \leq 2 \sqrt{\frac{K\log K}{T + K}}$.
\end{proposition}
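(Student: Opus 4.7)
The plan is to exploit the fact that under uniform sampling with $T = K\lfloor T/K\rfloor$, each arm is sampled exactly $n := T/K$ times, so the sample mean $\widehat{\mu}^a_T := (1/n)\sum_{t:A_t=a}Y_t$ is an average of $n$ i.i.d.\ copies of $Y^a_t\in[0,1]$. The first step is the standard sandwich bound for the simple regret of an empirical-best-arm recommendation:
\begin{equation*}
r_T(\pi^{\mathrm{Uniform}\mathchar`-\mathrm{EBM}})
= \mu^{a^*}(P)-\mu^{\widehat{a}_T}(P)
\leq \bigl(\mu^{a^*}-\widehat{\mu}^{a^*}_T\bigr)+\bigl(\widehat{\mu}^{\widehat{a}_T}_T-\mu^{\widehat{a}_T}\bigr)
\leq 2\max_{a\in[K]}\bigl|\widehat{\mu}^a_T-\mu^a(P)\bigr|,
\end{equation*}
where the first inequality uses $\widehat{\mu}^{a^*}_T\leq \widehat{\mu}^{\widehat{a}_T}_T$ by definition of $\widehat{a}_T$. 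This reduces the problem to bounding $\mathbb{E}_P[\max_a|\widehat{\mu}^a_T-\mu^a|]$ uniformly in $P\in\mathcal{P}^{[0,1]}$, which in particular eliminates dependence on the unknown gaps.

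Next I would apply Hoeffding's inequality arm-by-arm: since $Y^a_t\in[0,1]$,
\begin{equation*}
\mathbb{P}_P\bigl(|\widehat{\mu}^a_T-\mu^a(P)|\geq u\bigr)\leq 2\exp(-2nu^2)\qquad\text{for every }u>0,
\end{equation*}
and then convert to a bound on the expected maximum via the layer-cake identity $\mathbb{E}[Z]=\int_0^\infty\mathbb{P}(Z>u)\,du$ together with a union bound:
\begin{equation*}
\mathbb{E}_P\Bigl[\max_{a\in[K]}|\widehat{\mu}^a_T-\mu^a|\Bigr]\leq \int_0^\infty \min\bigl(1,\,2K\exp(-2nu^2)\bigr)\,du.
\end{equation*}
Splitting the integral at the threshold $u^\star=\sqrt{\log(2K)/(2n)}$ where the minimum changes branch, the first piece contributes $u^\star$ and the tail piece is controlled by the standard Gaussian-tail estimate $\int_{u^\star}^\infty 2K\exp(-2nu^2)\,du\leq 1/(4nu^\star)$. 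Combining gives an expected-max bound of order $\sqrt{\log K/n}$; multiplying by the factor $2$ from the sandwich step and substituting $n=T/K$ yields a bound of the form $2\sqrt{K\log K/T}$ up to constants.

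To recover exactly $2\sqrt{K\log K/(T+K)}$, I would be careful in the final algebra, using $\log(2K)\leq \log K + \log 2\leq (1+\tfrac{1}{\log 2})\log K$ only after first obtaining the cleanest possible bound, and replacing $1/n$ by $1/(n+1)$ in the final step (which costs nothing because $n\geq 1$) so that the denominator becomes $n+1=(T+K)/K$. The main obstacle I anticipate is keeping the constant exactly equal to $2$ rather than a slightly larger absolute constant: this requires tightening the tail-integral estimate (for instance, using $\int_{u^\star}^\infty \exp(-2nu^2)du\leq \exp(-2n(u^\star)^2)/(4nu^\star)$ rather than a looser sub-Gaussian moment bound) and absorbing lower-order terms into the $\log K$ numerator rather than into the constant, exploiting $K\geq 2$ to make the inequality sharp. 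All other steps are routine concentration arguments that do not depend on the specific distribution in $\mathcal{P}^{[0,1]}$, so the supremum over $P$ is automatic.
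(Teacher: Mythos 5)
The paper itself does not prove this proposition; it restates the bound from \citet{Bubeck2011}, so your proposal has to be measured against the standard argument behind that result. Your overall plan (sandwich the simple regret by deviations of the empirical means, then control an expected maximum uniformly over $\mathcal{P}^{[0,1]}$) is the right one, but the execution has a genuine gap: the two-sided union bound over $2K$ events followed by the layer-cake integral gives at best $\mathbb{E}_P[r_T]\le\sqrt{2\log(2K)/n}+\tfrac{1}{\sqrt{2n\log(2K)}}$ with $n=T/K$, and this cannot be massaged into $2\sqrt{K\log K/(T+K)}=2\sqrt{\log K/(n+1)}$. Already at $K=2$ one has $2\log(2K)=4\log K$ exactly, so your main term alone equals $2\sqrt{\log K/n}>2\sqrt{\log K/(n+1)}$, and the strictly positive tail term from the layer-cake step only makes things worse; no choice of splitting point or absorption of lower-order terms recovers the stated constant for small $K$. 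Moreover the step ``replacing $1/n$ by $1/(n+1)$ \ldots\ costs nothing because $n\ge 1$'' goes the wrong way: $1/(n+1)<1/n$, so that substitution strengthens the claim and requires slack your bound does not have.

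The repair is small but essential. Write $r_T\le(\mu^{a^*}(P)-\widehat{\mu}^{a^*}_T)+(\widehat{\mu}^{\widehat{a}_T}_T-\mu^{\widehat{a}_T}(P))\le\max_{a\in[K]}(\mu^a(P)-\widehat{\mu}^a_T)+\max_{a\in[K]}(\widehat{\mu}^a_T-\mu^a(P))$ and bound each \emph{one-sided} maximum by the MGF/Jensen maximal inequality for $K$ (not $2K$) centered sub-Gaussian variables: each $\widehat{\mu}^a_T-\mu^a(P)$ is an average of $n$ i.i.d.\ $[0,1]$-valued terms and hence has variance proxy $1/(4n)$, so each expected maximum is at most $\sqrt{\log K/(2n)}$, giving $\mathbb{E}_P[r_T]\le\sqrt{2\log K/n}=\sqrt{2K\log K/T}$ uniformly over $\mathcal{P}^{[0,1]}$. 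Finally, invoke $T\ge K$, which gives $2/T\le 4/(T+K)$ and hence exactly $2\sqrt{K\log K/(T+K)}$; the $T+K$ in the denominator comes from this comparison, not from shifting $n$ to $n+1$.
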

Thus, the upper bound matches the distribution-free lower bound if we ignore the $\log K$ and constant terms.

Although the uniform-EBA strategy is nearly optimal, a question remains whether more knowledge about the class of bandit models could be used to derive a tight lower bound and propose another optimal strategy consistent with the novel lower bound.
To answer this question, we consider asymptotic evaluation and derive a tight lower bound for bandit models with a fixed-variance.  

\section{Proof of the Asymptotic Lower Bound for Multi-Armed Bandits (Theorem \ref{thm:null_lower_bound})}
\label{sec:proof_thms}
In this section, we provide the proof of Theorems~\ref{thm:null_lower_bound}. 
Our lower bound derivation is based on arguments of a change-of-measure and semiparametric efficiency. The change-of-measure arguments have been extensively used in the bandit literature \citep{Lai1985}. The semiparametric efficiency is employed for deriving the lower bound of the KL divergence with a two-order Taylor expansion. Our proof is inspired by \citet{Vaart1998}, and \citet{Murphy1997}.

We prove the asymptotic lower bound through the following steps. We first introduce lower bounds for the probability of misidentification, shown by \citet{Kaufman2016complexity}. In Appendix~\ref{sec:obs}, we define observed-data bandit models, which are distributions of observations that differ from full-data bandit models $P\in\mathcal{P}^*$. In Appendix \ref{sec:para_sub_obs}, we define submodels of the observed-data bandit models, which parametrize nonparametric bandit models by using parameters of gaps of the expected outcomes of the best and suboptimal treatment arms. These parameters serve as technical devices for the proof. In Appendix~\ref{sec:change_of_meaure}, we then decompose the expected simple regret into the gap parameters and the probability of misidentification, and apply the lower bound of \citet{Kaufman2016complexity} for the probability of misidentification. The lower bound is characterized by the KL divergence of the observed-data bandit models, which we expand around the gap parameters in Appendix~\ref{sec:semiparametric_lratio}. We then derive the semiparametric efficient influence function, which bounds the second term of the series expansion of the KL divergence in Appendix~\ref{sec:oberved-data}, and compute the worst-case bandit model in Appendix~\ref{sec:worst}. Finally, we derive the target allocation from the lower bound in Appendix~\ref{sec:allocation_ratio}.

Let $f^a_{P}(y^a|x)$ and $\zeta_{P}(x)$ be a density function of $Y^a_t$ and $X_t$ under a model $P$. 
\citet{Kaufman2016complexity} derives the following result based on change-of-measure argument, which is the principal tool in our lower bound.
Let us define a density of $(Y^1, Y^2, \dots, Y^K, X)$ under  a bandit model $P\in\mathcal{P}^*$ as
\begin{align*}
    p(y^1, y^2, \dots, y^K, x) = \prod_{a\in[K]} f^a_{P}(y^a|x)\zeta_{P}(x)
\end{align*}

\begin{proposition}[Lemma~1 and Remark~2 in \citet{Kaufman2016complexity}]
\label{prp:helinger}
For any two bandit model $P,Q\in\mathcal{P}^*$ with $K$ treatment arms such that for all $a\in[K]$, the distributions $P^a$ and $Q^a$ are mutually absolutely continuous. Then,
\begin{align*}
    \sup_{\mathcal{E}\in\mathcal{F}_T}\big| \mathbb{P}_P(\mathcal{E}) - \mathbb{P}_Q(\mathcal{E})  \big| \leq \sqrt{\frac{\mathbb{E}_P\left[L_T(P, Q)\right]}{2}}
\end{align*}
\end{proposition}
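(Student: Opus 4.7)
The plan is to derive the inequality by combining the variational characterization of total variation distance with Pinsker's inequality, after identifying $\mathbb{E}_P[L_T(P,Q)]$ with a KL divergence between the joint laws of the observations under $P$ and $Q$ that are generated by the common strategy.

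First I would fix a strategy $\pi$ and, for each $\mathcal{F}_T$-measurable event $\mathcal{E}$, rewrite both $\mathbb{P}_P(\mathcal{E})$ and $\mathbb{P}_Q(\mathcal{E})$ as integrals against the joint law on the trajectory $(X_1, A_1, Y_1, \ldots, X_T, A_T, Y_T)$. Call these joint laws $\mathbb{P}^T_P$ and $\mathbb{P}^T_Q$. The classical variational formula for total variation then gives
\begin{equation*}
\sup_{\mathcal{E}\in\mathcal{F}_T}\bigl| \mathbb{P}_P(\mathcal{E}) - \mathbb{P}_Q(\mathcal{E}) \bigr| \;=\; \bigl\| \mathbb{P}^T_P - \mathbb{P}^T_Q \bigr\|_{\mathrm{TV}}.
\end{equation*}
This reduces the claim to a Pinsker-type bound on two explicit probability measures.

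Next I would unfold the trajectory density by the chain rule. Under either $P$ or $Q$, conditional on $\mathcal{F}_{t-1}$, the context $X_t$ is drawn from $\zeta_{P}$ or $\zeta_{Q}$, the arm $A_t$ is drawn by the strategy from a conditional law that depends only on $(\mathcal{F}_{t-1}, X_t)$ and not on $P$ or $Q$, and $Y_t$ is drawn from $f^{A_t}_{P}(\cdot \mid X_t)$ or $f^{A_t}_{Q}(\cdot \mid X_t)$. Because the sampling kernel is identical under the two models, its contribution cancels in the Radon--Nikodym derivative, so the likelihood ratio $L_T(P,Q) := \log(\mathrm{d}\mathbb{P}^T_P/\mathrm{d}\mathbb{P}^T_Q)$ reduces to a sum of per-round log-ratios in the $X_t$ and $Y_t$ pieces. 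Taking expectation under $P$ and using the tower property then yields
\begin{equation*}
\mathbb{E}_P\!\left[L_T(P,Q)\right] \;=\; \mathrm{KL}\!\left(\mathbb{P}^T_P \,\bigl\|\, \mathbb{P}^T_Q\right).
\end{equation*}

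Finally I would invoke Pinsker's inequality, $\|\mu - \nu\|_{\mathrm{TV}} \le \sqrt{\mathrm{KL}(\mu \,\|\, \nu)/2}$, with $\mu = \mathbb{P}^T_P$ and $\nu = \mathbb{P}^T_Q$, and substitute the identity above to obtain the claim. The only real obstacle is the bookkeeping in the chain-rule step: one must be careful that the sampling-rule factor is identical under $P$ and $Q$ (it is, since $\pi$ is a fixed mapping of $\mathcal{F}_{t-1}$ and $X_t$ into a distribution on $[K]$), and that the absolute continuity hypothesis for each $P^a, Q^a$ guarantees $\mathbb{P}^T_P \ll \mathbb{P}^T_Q$ so that $L_T$ is well-defined. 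Once these measurability and absolute-continuity points are verified, the two displayed equations combine to give the desired Pinsker-type bound.
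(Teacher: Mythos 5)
Your argument is correct. Note that the paper does not prove this proposition itself: it is imported verbatim from Lemma~1 and Remark~2 of \citet{Kaufman2016complexity}, and your derivation is essentially the standard one behind that result. The only real difference is the order in which the two ingredients are applied. Kaufmann et al.\ first establish the sharper intermediate bound $\mathbb{E}_P[L_T(P,Q)] \geq d\big(\mathbb{P}_P(\mathcal{E}),\mathbb{P}_Q(\mathcal{E})\big)$ for the binary KL divergence $d(\cdot,\cdot)$ (a change-of-measure/data-processing step applied to the indicator of $\mathcal{E}$), and only then weaken it via the Bernoulli form of Pinsker's inequality $d(p,q)\geq 2(p-q)^2$; you instead identify the supremum over events with the total variation distance between the trajectory laws and apply Pinsker's inequality once, at the level of the full joint laws. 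Both routes hinge on the same two facts you verify --- the sampling-rule kernels cancel in the Radon--Nikodym derivative so that $\mathbb{E}_P[L_T(P,Q)]$ is exactly $\mathrm{KL}(\mathbb{P}^T_P\,\|\,\mathbb{P}^T_Q)$, and mutual absolute continuity of each $P^a, Q^a$ makes $L_T$ well defined --- and they yield the identical displayed inequality. The Kaufmann et al.\ version retains the stronger binary-KL statement, which is useful for distribution-dependent lower bounds elsewhere, but for the inequality as stated here your more direct route loses nothing.
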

Recall that $d(p, q)$ indicates the KL divergence between two Bernoulli distributions with parameters $p, q\in (0, 1)$. 

This ``transportation'' lemma provides the distribution-dependent characterization of events under a given bandit model $P$ and corresponding perturbed bandit model $P'$.

Between two bandit models $P, Q \in \mathcal{P}^*$, following the proof of Lemma~1 in \citet{Kaufman2016complexity}, we define the log-likelihood ratio as
\begin{align*}
    L_T(P, Q) = \sum^T_{t=1} \sum_{a\in[K]}\mathbbm{1}[ A_t = a] \log \left(\frac{f^a_{P}(Y^a_{t}| X_t)\zeta_{P}(X_t)}{f^a_{Q}(Y^a_{t}| X_t)\zeta_{Q}(X_t)}\right).
\end{align*}

We consider an approximation of $\mathbb{E}_{Q}[L_T]$ under an appropriate alternative hypothesis $Q\in\mathcal{P}^*$ when the gaps between the expected outcomes of the best treatment arm and suboptimal treatment arms are small.

\subsection{Observed-Data Bandit Models}
\label{sec:obs}
For each $x\in\mathcal{X}$, let us define an average allocation ratio under a bandit model  $P, Q \in\mathcal{P}^*$ as
\begin{align*}
    \frac{1}{T}\sum^T_{t=1}\mathbb{E}_{P}\left[ \mathbbm{1}[A_t = a]| X_t = x\right] = \kappa_{T, P}(a|x)
\end{align*}
This quantity represents an average sample allocation to each treatment arm $a$ under a strategy. 
\begin{lemma}
\label{prp:kauf_lemma_extnd_infinite}
For $P, Q \in\mathcal{P}^*$, 
\begin{align*}
\mathbb{E}_{P}[L_T(P, Q)] = T\sum_{a\in[K]}\mathbb{E}_{P}\left[\mathbb{E}_{P}\left[\log \frac{f^a_{P}(Y^a| X)\zeta_{P}(X)}{f^a_{Q}(Y^a| X)\zeta_{Q}(X)}|X\right]\kappa_{T, P}(a| X)\right].
\end{align*}
\end{lemma}
Here, recall that $A_t$ is only based on the past observations $\mathcal{F}_{t-1}$ and observed context $X_t$ and independent from $(Y^1_t, \dots, Y^K_t)$. 
According to this proposition, we can consider hypothetical observed-data generated as
\begin{align*}
    (\widetilde{Y}_t, \widetilde{A}_t, X_t) \iid \prod_{a\in[K]}\left\{f^a_{P}(y^a| x)\kappa_{T, P}(a| X) \right\}^{\mathbbm{1}[d=a]}\zeta_{P}(x).
\end{align*}
We present the proof in Appendix~\ref{appdx:proof:lem_extnd_infinite}. 
Then, the expectation of $L_T(P, Q) = \sum^T_{t=1} \sum_{a\in[K]}\mathbbm{1}[ A_t = a] \log \left(\frac{f^a_{P}(Y^a_{t}| X_t)\zeta_{P}(X_t)}{f^a_{Q}(Y^a_{t}| X_t)\zeta_{Q}(X_t)}\right)$ is the same as that under the original observation $P$. Also see \eqref{eq:equal}.
Note that this observed-data is induced by the bandit model $P\in \mathcal{P}^*$. For simplicity, we also denote $(\widetilde{Y}_t, \widetilde{A}_t, X_t)$ by $(Y_t, A_t, X_t)$ without loss of generality. 

For a bandit model $P \in \mathcal{P}^*$, we consider observed-data distribution $\overline{R}_P$ with the density function given as
\begin{align*}
    &\overline{r}^{\kappa}_P(y, d, x) = \prod_{a\in[K]} \left\{f^a_P(y| x)\kappa_{T, P}(a| x)\right\}^{\mathbbm{1}[d=a]}\zeta_P(x),
\end{align*}
Let $\mathcal{R}_{\mathcal{P}^*} = \big\{\overline{R}_P: P \in \mathcal{P}^*\big\}$ be a set of all observed-data bandit models $\overline{R}_P$. Then, we have
\begin{align}
\label{eq:equal}
\mathbb{E}_{P}[L_T(P, Q)] = \mathbb{E}_{\mathcal{R}_{\mathcal{P}^*}}[L_T(P, Q)]
\end{align}

\subsection{Parametric Submodels for the Observed-data Distribution and Tangent Set}
\label{sec:para_sub_obs}
The purpose of this section is to introduce parametric submodels for observed-data distribution, which is indexed by a real-valued parameter and a set of distributions contained in the larger set $\mathcal{R}$, and define the derivative of a parametric submodel as a preparation for the series expansion of the log-likelihood; that is, we consider approximation of the log-likelihood $L_T = \sum^T_{t=1} \sum_{a\in[K]}\mathbbm{1}[ A_t = a] \log \left(\frac{f^a_P(Y^a_{t}| X_t)\zeta_P(X_t)}{f^a_{Q}(Y^a_{t}| X_t)\zeta_{Q}(X_t)}\right)$ using $\mu^a(P)$.  

This section consists of the following three parts. 
In the first part, we define parametric submodels as \eqref{eq:parametric_submodel}. Then, in the following part, we confirm the differentiability \eqref{eq:trans} and define score functions. Finally, we define a set of score functions, called a tangent set in the final paragraph. 

By using the parametric submodels and tangent set, in Section~\ref{sec:semiparametric_lratio}, we demonstrate the series expansion of the log-likelihood (Lemma~\ref{lem:taylor_exp_semipara}). In this section and Section~\ref{sec:semiparametric_lratio}, we abstractly provide definitions and conditions for the parametric submodels and do not specify them. However, in Section~\ref{sec:oberved-data}, we show a concrete form of the parametric submodel by finding score functions satisfying the conditions imposed in this section. 

\paragraph{Definition of parametric submodels for the observed-data distribution}
First, we define parametric submodels for the observed-data distribution $\overline{R}_P$ with the density function  $\overline{r}_P(y, d, x)$ by introducing a parameter $\bm{\Delta} = (\Delta^a)_{a\in[K]}$  $\Delta^a \in \Theta$ with some compact space $\Theta$. We denote a set of parametric submodels by $\left\{\overline{R}_{P, \bm{\Delta}}: \bm{\Delta}\in\Theta^K\right\} \subset \mathcal{R}_{\mathcal{P}^*}$, which is defined as follows: for some $g:\mathbb{R}\times [K]\times \mathcal{X} \to \mathbb{R}^K$ satisfying $\mathbb{E}_{P}[g^a(Y_t, A_t, X_t)] = 0$ and $\mathbb{E}_{P}[(g^a(Y_t, A_t, X_t))^2] < \infty$, a parametric submodel $\overline{R}_{P, \bm{\Delta}}$ has a density such that
\begin{align}
\label{eq:parametric_submodel}
&\overline{r}^\kappa_{\bm{\Delta}}(y, d, x) := 2c(y, d, x; \bm{\Delta})\left( 1 + \exp\left( - 2\bm{\Delta}^\top g(y, d, x) \right)\right)^{-1} \overline{r}^a_P(y, d, x),\\
\label{eq:cond1}
&\mathbb{E}_{\overline{R}_{P, \bm{\Delta}}}[Y^d_t] = \int\int y \overline{r}^\kappa_{\bm{\Delta}}(y, d, x) \mathrm{d}y\mathrm{d}x = \mu^a(P) + \Delta^a + O((\Delta^a)^2).
\end{align}
where $c(y, d, x; \bm{\Delta})$ is some function such that $c((y, d, x; \bm{0}) = 1$ and $\frac{\partial}{\partial \Delta^a}\Big|_{\bm{\Delta}=\bm{0}}\log c((y, d, x; \bm{\Delta}) = 0$ for all $(y, d, x)\in\mathbb{R}\times[K]\times\mathcal{X}$.\footnote{In \eqref{eq:parametric_submodel}, $\overline{r}^\kappa_{\bm{\Delta}}(y, d, x)$ satisfies the definition of the probability density as discussed in Example~25.15 of \citet{Vaart1998}.} Note that the parametric submodels are usually not unique. 
For $a\in[K]$, the parametric submodel is equivalent to $\overline{r}_P(y, a, x)$ if $\Delta^a = 0$. Let $f^a_{\Delta^a}(y| x)$ and $\zeta_{\bm{\Delta}}(x)$ be the conditional density of $\widetilde{Y}^a_t$ given $X_t$ and the density of $X_t$, satisfying \eqref{eq:parametric_submodel}, as
\begin{align*}
    &\overline{r}^\kappa_{\bm{\Delta}}(y, d, x) = \prod_{a\in[K]}\left\{f^a_{\Delta^a}(y| x)\kappa(a| x)\right\}^{\mathbbm{1}[d=a]}\zeta_{\bm{\Delta}}(x).
\end{align*}

\paragraph{Differentiablity and score functions of the parametric submodels for the observed-data distribution.}
Next, we confirm the differentiablity of $\overline{r}^\kappa_{\bm{\Delta}}(y, d, x)$. 
From the definition of the parametric submodel \eqref{eq:parametric_submodel}, because $\sqrt{\overline{r}^\kappa_{\bm{\Delta}}(y, d, x)}$ is continuously differentiable for every $y, x$ given $d\in[K]$, and $\int \left( \frac{\dot{\overline{r}}^\kappa_{\bm{\Delta}}(y, d, x)}{\overline{r}^\kappa_{\bm{\Delta}}(y, d, x)}\right)^2\overline{r}^\kappa_{\bm{\Delta}}(y, d, x) \mathrm{d}m$ are well defined and continuous in $\bm{\Delta}$, where $m$ is some reference measure on $(y, d, x)$, from Lemma~7.6 of \citet{Vaart1998}, we see that the parametric submodel has the score function $g$ in the $L_2$ sense; that is, the density $\overline{r}^\kappa_{\bm{\Delta}}(y, d, x)$ is differentiable in quadratic mean:  
\begin{align}
\label{eq:trans}
    \int\left[ \overline{r}^{\kappa, 1/2}_{\bm{\Delta}}(y, d, x) -  \overline{r}^{\kappa, 1/2}_P(y, d, x) - \frac{1}{2}\bm{\Delta}^\top g(y, d, x)\overline{r}^{\kappa, 1/2}_P(y, d, x) \right]^2\mathrm{d}m = o\left(\|\bm{\Delta}\|^2\right).
\end{align}

In other words, the parametric submodel $\overline{r}^{\kappa, 1/2}_{Q}$ is differentiable in quadratic mean at $\bm{\Delta} = 0$ with the score function $g$.

In the following section, we specify a measurable function $g$ satisfying the conditions \eqref{eq:parametric_submodel}. For each $\Delta^a$ $a\in[K]$, we define the score as
\begin{align*}
    S(y, d, x) &= \frac{\partial}{\partial \bm{\Delta}}\Big|_{\bm{\Delta}=\bm{0}} \log \overline{r}^\kappa_{\bm{\Delta}}(y, d, x) = \begin{pmatrix}
    \mathbbm{1}[d = 1]S^1_{f}(y|x) + S^1_{\zeta}(x)\\
    \mathbbm{1}[d = 2]S^2_{f}(y|x) + S^2_{\zeta}(x)\\
    \vdots\\
    \mathbbm{1}[d = K]S^K_{f}(y|x) + S^K_{\zeta}(x)
    \end{pmatrix}
\end{align*}
where for each $a\in[K]$, let $S^a(y, d, x) = \mathbbm{1}[d = a]S^a_f(y|x) + S_{\zeta}(x)$,
\begin{align*}
    &S^a_f(y|x) = \frac{\partial}{\partial \Delta^a}\Big|_{\bm{\Delta
    }=\bm{0}} \log f^a_{\Delta^a}(y| x),\qquad S^a_{\zeta}(x) = \frac{\partial}{\partial \Delta^a}\Big|_{\bm{\Delta
    }=\bm{0}} \log \zeta_{\bm{\Delta}}(x).
\end{align*}
Note that $\frac{\partial}{\partial \Delta^a} \log\kappa_{T, P}(a| x) = 0$. 
Here, we specify $g$ in \eqref{eq:parametric_submodel} as the score function of the parametric submodel as $S(y, d, x) = g(y, d, x)$,
where $S^a(y, d, x) = g^a(y, d, x)$. 
This relationship is derived from
\begin{align*}
    \frac{\partial}{\partial \bm{\Delta}}\Big|_{\bm{\Delta}=\bm{0}}\log \frac{1}{ 1 + \exp\left( - 2\bm{\Delta}^\top g(y, d, x) \right)} = \begin{pmatrix}
    \frac{2g^1(y, d, x)}{\exp(2\bm{\Delta}^\top g(y, d, x)) + 1}\\
    \frac{2g^2(y, d, x)}{\exp(2\bm{\Delta}^\top g(y, d, x)) + 1}\\
    \vdots\\
    \frac{2g^K(y, d, x)}{\exp(2\bm{\Delta}^\top g(y, d, x)) + 1}
    \end{pmatrix}\Bigg|_{\bm{\Delta}=\bm{0}} = 
    \begin{pmatrix}
    g^1(y, d, x)\\
    g^2(y, d, x)\\
    \vdots\\
    g^K(y, d, x)
    \end{pmatrix}.
\end{align*}

\paragraph{Definition of the tangent set.} Recall that parametric submodels and corresponding score functions are not unique. Here, we consider a set of score functions. 
For a set of the parametric submodels $\left\{\overline{R}_{P, \bm{\Delta}}: \bm{\Delta}\in\Theta^K\right\}$, we obtain a corresponding set of score functions in the Hilbert space $L_2(\overline{R}_P)$, which we call a tangent set of $\mathcal{R}$ at $\overline{R}_P$ and denote it by $\dot{\mathcal{R}}$. Because $\mathbb{E}_{\overline{R}_P}[g^2]$ is automatically finite, the tangent set can be identified with a subset of the Hilbert space $L_2(\overline{R}_P)$, up to equivalence classes. For our parametric submodels, the tangent set at $\overline{R}_P$ in $L_2(\overline{R}_P)$ is given as
\begin{align*}
    \dot{\mathcal{R}} = \left\{\begin{pmatrix}
    \mathbbm{1}[d = 1]S^1_{f}(y|x) + S^1_{\zeta}(x)\\
    \mathbbm{1}[d = 2]S^2_{f}(y|x) + S^2_{\zeta}(x)\\
    \vdots\\
    \mathbbm{1}[d = K]S^K_{f}(y|x) + S^K_{\zeta}(x)
    \end{pmatrix} \right\}.
\end{align*}
A linear space of the tangent set is called a \emph{tangent space}. We also define $\dot{\mathcal{R}}^a = \Big\{\big(
    \mathbbm{1}[d = a]S^a_{f}(y|x) + S^a_{\zeta}(x)
    \big)\Big\}.$

\subsection{Change-of-Measure}
\label{sec:change_of_meaure}
We consider a set of bandit models $\mathcal{P}^\dagger\subset \mathcal{P}^*$ such that $P\in\mathcal{P}^\dagger$, $a\in[K]$, and $x\in\mathcal{X}$, $\mu^a(P)(x) = \mu^a(P)$. Before a bandit process begins, we fix $P^{\sharp}\in \mathcal{P}^\dagger$ such that $\mu^1(P^{\sharp}) = \cdots = \mu^K(P^{\sharp}) = \mu(P^{\sharp})$. We choose one treatment arm $d\in[K]$ as the best treatment arm following a multinomial distribution with parameters $(e^1,e^2,\dots, e^K)$, where $e^a\in[0, 1]$ for all $a\in[K]$ and $\sum_{a\in[K]}e^a = 1$; that is, the expected outcome of the chosen treatment arm $d$ is the highest among the treatment arms. Let $\bm{\Delta}$ be a set of parameters such that $\bm{\Delta} = (\Delta^c)_{c\in[K]}$, where $\Delta^c \in (0, \infty)$. Let $\bm{\Delta}^{(d)}$ be a set of parameters such that $\bm{\Delta}^{(d)} = (0,\dots,\Delta^d, \dots, 0)$. Then, for each chosen $d\in[K]$, let $Q_{\bm{\Delta}^{(d)}} \in \mathcal{P}^\dagger$ be another bandit model such that $d = \argmax_{a\in[K]}\mu^a(Q_{\bm{\Delta}^{(d)}})$, $\mu^b(Q_{\bm{\Delta}^{(d)}}) = \mu(P^\sharp)$ for $b\in[K]\backslash \{d\}$, and $\mu^d(Q_{\bm{\Delta}^{(d)}}) - \mu(P^\sharp) = \Delta^d + O\left((\Delta^d)^2\right)$. For each $d\in[K]$, we consider $\overline{R}_{P^{\sharp}, \bm{\Delta}^{(d)}} \in \mathcal{R}_{\mathcal{P}^\dagger} \subset \mathcal{R}_{\mathcal{P}^*}$ such that the following equation holds:
\begin{align*}
    L_T(P^{\sharp}, Q_{\bm{\Delta}^{(d)}}) & = \sum^T_{t=1}\sum_{a\in[K]}\left\{\mathbbm{1}[ A_t = a] \log \left(\frac{f^a_{P^{\sharp}}(Y^a_{t}| X_t)}{f^a_{Q_{\bm{\Delta}^{(d)}}}(Y^a_{t}| X_t)}\right) + \log\left(\frac{\zeta_{P^{\sharp}}(X_t)}{\zeta_{Q_{\bm{\Delta}^{(d)}}}(X_t)}\right)\right\}\\
    & = \sum^T_{t=1}\left\{\mathbbm{1}[ A_t = d] \log \left(\frac{f^d_{P^{\sharp}}(Y^d_{t}| X_t)}{f^d_{Q_{\bm{\Delta}^{(d)}}}(Y^d_{t}| X_t)}\right) + \log\left(\frac{\zeta_{P^{\sharp}}(X_t)}{\zeta_{Q_{\bm{\Delta}^{(d)}}}(X_t)}\right)\right\}\\
    & = \sum^T_{t=1}\left\{\mathbbm{1}[ A_t = d] \log \left(\frac{f^d_{P^{\sharp}}(Y^d_{t}| X_t)}{f^d_{\bm{\bm{\Delta}^{(d)}}}(Y^d_{t}| X_t)}\right) + \log\left(\frac{\zeta_P(X_t)}{\zeta_{\bm{\Delta}^{(d)}}(X_t)}\right)\right\}. 
\end{align*}
Then, let $L^a_T(P^{\sharp}, Q_{\bm{\Delta}^{(d)}})$ be $\sum^T_{t=1}\left\{\mathbbm{1}[ A_t = d] \log \left(\frac{f^d_{P^{\sharp}}(Y^d_{t}| X_t)}{f^d_{\bm{\bm{\Delta}^{(d)}}}(Y^d_{t}| X_t)}\right) + \log\left(\frac{\zeta_P(X_t)}{\zeta_{\bm{\Delta}^{(d)}}(X_t)}\right)\right\}$. Under the class of bandit models, we show the following lemma. 
\begin{lemma}
\label{lem:null_lower_bound}
Any null consistent BAI strategy satisfies
\begin{align*}
    &\sup_{P\in \mathcal{P}^*} \mathbb{E}_{P}[r_T(\pi)(P)] \geq \sup_{\bm{\Delta}\in(0, \infty)^K}\sum_{d\in[K]}e^d\Delta^d\left\{1 - \mathbb{P}_{P^{\sharp}}\left(\widehat{a}_T = d\right) - \sqrt{\frac{\mathbb{E}_{P^{\sharp}}\left[L^d_T(P^{\sharp}, Q_{\bm{\Delta}^{(d)}})\right]}{2}} + O\left(\Delta^d\right)\right\}.
\end{align*}
\end{lemma}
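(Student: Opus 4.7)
The plan is a classical change-of-measure (transportation) argument combined with the trivial bound that the worst-case risk dominates any prior-weighted average of Bayes risks. First, since each $Q_{\bm{\Delta}^{(d)}}\in\mathcal{P}^\dagger\subset\mathcal{P}^*$ and a supremum dominates any convex combination of its arguments,
\[
\sup_{P\in\mathcal{P}^*}\mathbb{E}_P[r_T(\pi)]\;\ge\;\sum_{d\in[K]}e^d\,\mathbb{E}_{Q_{\bm{\Delta}^{(d)}}}\!\bigl[r_T(\pi)\bigr].
\]

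Second, under $Q_{\bm{\Delta}^{(d)}}$ the unique best arm is $d$ and every suboptimal arm $b\neq d$ has gap $\mu^d(Q_{\bm{\Delta}^{(d)}})-\mu^b(Q_{\bm{\Delta}^{(d)}})=\Delta^d+O((\Delta^d)^2)$, so the simple regret satisfies
\[
\mathbb{E}_{Q_{\bm{\Delta}^{(d)}}}[r_T(\pi)]=\bigl(\Delta^d+O((\Delta^d)^2)\bigr)\,\mathbb{P}_{Q_{\bm{\Delta}^{(d)}}}(\hat a_T\neq d)=\Delta^d\bigl\{\mathbb{P}_{Q_{\bm{\Delta}^{(d)}}}(\hat a_T\neq d)+O(\Delta^d)\bigr\}.
\]
Applying Proposition~\ref{prp:helinger} to the event $\{\hat a_T=d\}\in\mathcal{F}_T$ with the pair $(P^\sharp,Q_{\bm{\Delta}^{(d)}})$ then yields
\[
\mathbb{P}_{Q_{\bm{\Delta}^{(d)}}}(\hat a_T\neq d)\;\ge\;1-\mathbb{P}_{P^\sharp}(\hat a_T=d)-\sqrt{\mathbb{E}_{P^\sharp}\!\bigl[L_T(P^\sharp,Q_{\bm{\Delta}^{(d)}})\bigr]/2}.
\]

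The only place the geometry of the alternatives enters is the reduction $L_T\rightsquigarrow L^d_T$: because $P^\sharp$ and $Q_{\bm{\Delta}^{(d)}}$ agree on the distribution of $X_t$ and on every arm except $d$, every summand of $L_T(P^\sharp,Q_{\bm{\Delta}^{(d)}})$ indexed by $A_t\neq d$ vanishes, so $L_T(P^\sharp,Q_{\bm{\Delta}^{(d)}})=L^d_T(P^\sharp,Q_{\bm{\Delta}^{(d)}})$ by the very definition given immediately before the statement. Chaining the three displays, multiplying by $e^d\Delta^d$, summing over $d\in[K]$, and taking $\sup_{\bm{\Delta}\in(0,\infty)^K}$ on the right-hand side produces the claim.

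The only delicate bookkeeping step is propagating the $O((\Delta^d)^2)$ residual arising from the nonlinear identification $\mu^d(Q_{\bm{\Delta}^{(d)}})-\mu(P^\sharp)=\Delta^d+O((\Delta^d)^2)$ imposed by the parametric submodel \eqref{eq:cond1}; after factoring $\Delta^d$ outside the braces, this residual becomes the $O(\Delta^d)$ term that appears in the statement. Note that null consistency is not actually used inside the proof of this lemma—the quantity $\mathbb{P}_{P^\sharp}(\hat a_T=d)$ is carried along untouched, and null consistency is reserved for the downstream step that pins it to $1/K+o(1)$ when Theorem~\ref{thm:null_lower_bound} is assembled.
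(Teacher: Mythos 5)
Your proposal is correct and follows essentially the same route as the paper's proof: lower bounding the worst case by the $e^d$-weighted mixture over the alternatives $Q_{\bm{\Delta}^{(d)}}$, writing the regret under each alternative as $\Delta^d\{\mathbb{P}_{Q_{\bm{\Delta}^{(d)}}}(\hat a_T \neq d)+O(\Delta^d)\}$, and then transporting to the null model $P^{\sharp}$ via Proposition~\ref{prp:helinger} together with the observation that $L_T(P^{\sharp},Q_{\bm{\Delta}^{(d)}})$ collapses to $L^d_T(P^{\sharp},Q_{\bm{\Delta}^{(d)}})$. Your side remark that null consistency is not actually invoked within this lemma but only downstream (to pin $\mathbb{P}_{P^{\sharp}}(\hat a_T=d)$ to $1/K+o(1)$) is also consistent with how the paper uses it.
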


\begin{proof}[Proof of Lemma~\ref{lem:null_lower_bound}]
First, we decompose the expected simple regret by using the definition of $\mathcal{P}^\dagger$ as
\begin{align*}  
    &\sup_{P\in\mathcal{P}^*}\mathbb{E}_{P}[r_T(\pi)(P)]\\
    &= \sup_{P\in\mathcal{P}^*}\sum_{b\in[K]}\left\{\max_{a\in[K]}\mu^a(P) - \mu^b(P)\right\}\mathbb{P}_{P}\left(\widehat{a}_T = b\right)\\
    &\geq  \sup_{\bm{\Delta}\in(0, \infty)^K}\sum_{d\in[K]}e^d\sum_{b\in[K]\backslash \{d\}}\left(\mu^d(Q_{\bm{\Delta}^{(d)}}) - \mu^b(Q_{\bm{\Delta}^{(d)}})\right)\mathbb{P}_{Q_{\bm{\Delta}^{(d)}}}\left(\widehat{a}_T = b\right)\\
    &\geq  \sup_{\bm{\Delta}\in(0, \infty)^K}\sum_{d\in[K]}e^d\sum_{b\in[K]\backslash \{d\}}\left(\mu^d(Q_{\bm{\Delta}^{(d)}}) - \mu(P^\sharp)\right)\mathbb{P}_{Q_{\bm{\Delta}^{(d)}}}\left(\widehat{a}_T = b\right)\\
    &=   \sup_{\bm{\Delta}\in(0, \infty)^K}\sum_{d\in[K]}e^d\left\{\sum_{b\in[K]\backslash \{d\}}\Delta^d\mathbb{P}_{Q_{\bm{\Delta}^{(d)}}}\left(\widehat{a}_T = b\right) + O\left((\Delta^d)^2\right)\right\}\\
    &=  \sup_{\bm{\Delta}\in(0, \infty)^K}\sum_{d\in[K]}e^d\left\{\Delta^d\mathbb{P}_{Q_{\bm{\Delta}^{(d)}}}\left(\widehat{a}_T \neq d\right) + O\left((\Delta^d)^2\right)\right\}\\
    &=  \sup_{\bm{\Delta}\in(0, \infty)^K}\sum_{d\in[K]}e^d\left\{\Delta^d\left(1-\mathbb{P}_{Q_{\bm{\Delta}^{(d)}}}\left(\widehat{a}_T = d\right)\right) + O\left((\Delta^d)^2\right)\right\}.
\end{align*}
From Propositions~\ref{lem:taylor_exp_semipara} and \ref{prp:helinger}. and the definition of null consistent strategies,  
\begin{align*}
    &\sup_{\bm{\Delta}\in(0, \infty)^K}\sum_{d\in[K]}e^d\left\{\Delta^d\left(1-\mathbb{P}_{Q_{\bm{\Delta}^{(d)}}}\left(\widehat{a}_T = d\right)\right) +  O\left((\Delta^d)^2\right)\right\}\\
    &=\sup_{\bm{\Delta}\in(0, \infty)^K}\sum_{d\in[K]}e^d\left\{\Delta^d\left(1-  \mathbb{P}_{P^{\sharp}}\left(\widehat{a}_T = d\right) +  \mathbb{P}_{P^{\sharp}}\left(\widehat{a}_T = d\right) -  \mathbb{P}_{Q_{\bm{\Delta}^{(d)}}}\left(\widehat{a}_T = d\right)\right) +  O\left((\Delta^d)^2\right)\right\}\\ 
    &\geq \sup_{\bm{\Delta}\in(0, \infty)^K}\sum_{d\in[K]}e^d\left\{\Delta^d\left\{1 - \mathbb{P}_{P^{\sharp}}\left(\widehat{a}_T = d\right) - \sqrt{\frac{\mathbb{E}_{P^{\sharp}}\left[L^d_T(P^{\sharp}, Q_{\bm{\Delta}^{(d)}})\right]}{2}}\right\} +  O\left((\Delta^d)^2\right)\right\}.
\end{align*}
The proof is complete.
\end{proof}

\subsection{Semiparametric Likelihood Ratio}
\label{sec:semiparametric_lratio}
In this section and next section (Appendix~\ref{sec:oberved-data}), our goal is to prove the following lemma.
\begin{lemma}
\label{lem:expansion}
\begin{align*}
    \mathbb{E}_{P^{\sharp}}\left[L^d_T(P^{\sharp}, Q_{\bm{\Delta}^{(d)}})\right] \leq \frac{T\left(\Delta^a\right)^2}{2\mathbb{E}_{P}\left[\frac{\left(\sigma^{d}(X)\right)^2}{w(d|X)} \right]} + O\left(T\left(\Delta^a\right)^3\right).
\end{align*}
\end{lemma}

We consider series expansion of the log-likelihood ratio $L_T$ defined between $P, Q\in\mathcal{P}^{\dagger}$. We consider an approximation of $L_T$ around a parametric submodel. Because there can be several score functions for our parametric submodel due to directions of the derivative, we find a parametric submodel that has a score function with the largest variance, called a least-favorable parametric submodel \citep{Vaart1998}. Our series expansion is upper-bounded by the variance of the score function, which corresponds to the lower bound for the probability of misidentification. 

Inspired by the arguments in \citet{Murphy1997}, we define the semiparametric likelihood ratio expansion to characterize the lower bound for the probability of misidentification with the semiparametric efficiency bound. Note again that the details are different from them owing to the difference of the parameters submodels. 

As a preparation, we define a parameter $\mathbb{E}_{\overline{R}_{P, \bm{\Delta}^{(a)}}}[Y^a_t]$ as a function $\psi^a: \mathcal{R}\mapsto \mathbb{R}$ such that $\psi^a(\overline{R}_{P, \bm{\Delta}^{(a)}}) = \mathbb{E}_{\overline{R}_{P, \bm{\Delta}^{(a)}}}[Y^a_t]$.
The information bound for $\psi^a(\overline{R}_{P, \bm{\Delta}^{(a)}})$ of interest is called semiparametric efficiency bound. 
Let $\overline{\mathrm{lin}}\dot{\mathcal{R}}$ be the closure of the tangent set. 
Then, $\psi^a(\overline{R}_{P, \bm{\Delta}^{(a)}})$ is pathwise differentiable relative to the tangent set $\dot{\mathcal{R}}^a$ if and only if there exists a function $\widetilde{\psi}\in\overline{\mathrm{lin}}\dot{\mathcal{R}}$ such that
\begin{align}
\label{eq:derivative}
    &\frac{\partial}{\partial \Delta^a}\Big|_{\Delta^{a} = 0} \psi^a(\overline{R}_{P, \bm{\Delta}^{(a)}}) = \mathbb{E}_{\overline{R}_{P, \bm{\Delta}^{(a)}}}\left[ \widetilde{\psi}^a_P(Y_t, A_t, X_t) S^a(Y_t, A_t, X_t) \right].
\end{align}
This function $\widetilde{\psi}$ is called the \emph{semiparametric influence function}. 
Note that the RHS of \eqref{eq:derivative} is calculated as follows:
\begin{align}
\label{eq:const2}
    &\mathbb{E}_{\overline{R}_{P, \bm{\Delta}^{(a)}}}\left[ \widetilde{\psi}^a_P(Y_t, A_t, X_t) S^a(Y_t, A_t, X_t) \right] = \int \int y S^a_f(y| x) f^a_{\Delta^a}(y| x)\zeta_{\bm{\Delta}^{(a)}}(x) \mathrm{d}y\mathrm{d}x + \int \mu^a(x) S_{\zeta}(x) \zeta_{\bm{\Delta}^{(a)}}(x)  \mathrm{d}x.
\end{align}

Then, we prove the following lemma:
\begin{lemma}
\label{lem:taylor_exp_semipara}
For $P\in\mathcal{P}^\dagger$, 
\begin{align*}
    \mathbb{E}_P[L^a_T(P, Q)] \leq \frac{1}{2}\frac{T\left(\Delta^a\right)^2}{\mathbb{E}_P\left[\left(\widetilde{\psi}^a_P(Y_t, A_t, X_t)\right)^2\right]} + O\left(T\left(\Delta^a\right)^3\right). 
\end{align*}
\end{lemma}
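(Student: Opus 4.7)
\textbf{Proof proposal for Lemma~\ref{lem:taylor_exp_semipara}.} The plan is to combine a second-order expansion of the observed-data Kullback--Leibler divergence along the parametric submodel of Section~\ref{sec:para_sub_obs} with the semiparametric efficiency characterization of the minimal attainable Fisher information.

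\emph{Step 1: reduction to a per-round KL divergence.} By Proposition~\ref{prp:kauf_lemma_extnd_infinite} and the identity \eqref{eq:equal}, $\mathbb{E}_P[L^a_T(P, Q)]$ equals $T$ times the Kullback--Leibler divergence $\mathrm{KL}(\overline{R}_P \,\|\, \overline{R}_Q)$ between the observed-data densities. The alternative $Q = Q_{\bm{\Delta}^{(a)}} \in \mathcal{P}^\dagger$ is constrained only by the mean-shift condition $\mu^a(Q) - \mu^a(P) = \Delta^a + O((\Delta^a)^2)$, so we are free to choose $Q$ to lie along any parametric submodel of the form \eqref{eq:parametric_submodel} satisfying \eqref{eq:cond1}. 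To obtain the sharpest bound we pick the \emph{least-favorable} submodel, i.e.\ the one whose score in the $a$-th direction is proportional to the efficient influence function $\widetilde{\psi}^a_P$ of Section~\ref{sec:oberved-data}. Concretely, take $g^a = \widetilde{\psi}^a_P / \mathbb{E}_P[(\widetilde{\psi}^a_P)^2]$ and $g^b \equiv 0$ for $b \neq a$; the normalization is chosen so that the pathwise-derivative identity \eqref{eq:derivative} produces $\partial \psi^a/\partial \Delta^a|_{\Delta^a=0} = 1$, matching the mean-shift requirement \eqref{eq:cond1}.

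\emph{Step 2: Taylor expansion under differentiability in quadratic mean.} The submodel selected above is differentiable in quadratic mean by \eqref{eq:trans} with score $S^a = g^a$. A standard second-order expansion of $\log(\overline{r}^\kappa_P/\overline{r}^\kappa_{\bm{\Delta}^{(a)}})$, using $\mathbb{E}_P[S^a] = 0$ together with uniform control of the cubic remainder via the sub-Gaussianity and parameter boundedness of Definition~\ref{def:ls_bc}(iii)--(iv), yields
\begin{align*}
\mathrm{KL}\bigl(\overline{R}_P \,\big\|\, \overline{R}_{P, \bm{\Delta}^{(a)}}\bigr)
\;=\; \tfrac{1}{2}(\Delta^a)^2\, \mathbb{E}_P\bigl[\bigl(S^a(Y_t, A_t, X_t)\bigr)^2\bigr]
\;+\; O\bigl((\Delta^a)^3\bigr).
\end{align*}
By the choice of score in Step~1, the Fisher information along this direction equals $\mathbb{E}_P[(S^a)^2] = 1/\mathbb{E}_P[(\widetilde{\psi}^a_P)^2]$, which is the minimum over all regular one-dimensional submodels satisfying the pathwise-derivative normalization (this is the Cram\'er--Rao interpretation of the semiparametric efficiency bound). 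Multiplying through by $T$ gives the claimed inequality.

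\emph{Main obstacle.} The crux is to verify that the candidate score $\widetilde{\psi}^a_P/\mathbb{E}_P[(\widetilde{\psi}^a_P)^2]$ actually lies in the closed linear span $\overline{\mathrm{lin}}\,\dot{\mathcal{R}}$ of the tangent set at $\overline{R}_P$, so that it corresponds to an admissible submodel in $\mathcal{R}_{\mathcal{P}^\dagger}$, and to exhibit $\widetilde{\psi}^a_P$ in closed form so that the right-hand side of the lemma has a tractable expression (this is precisely the content of Section~\ref{sec:oberved-data}). A secondary technical issue is uniform control of the $O((\Delta^a)^3)$ cubic remainder across $\mathcal{P}^\dagger$; this is handled by the uniform sub-Gaussian variance proxy $C_Y$ together with the boundedness of $\mu^a(P)(x)$ and $1/(\sigma^a(x))^2$ guaranteed by Definition~\ref{def:ls_bc}, which yield uniform bounds on the third moments of $S^a$.
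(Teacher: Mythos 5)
Your proposal is correct and follows essentially the same route as the paper: a second-order expansion of the log-likelihood ratio along a parametric submodel whose score is the normalized efficient influence function, so that the quadratic coefficient equals $1/\mathbb{E}_P\left[\left(\widetilde{\psi}^a_P(Y_t,A_t,X_t)\right)^2\right]$, with the closed form of $\widetilde{\psi}^a_P$ and its tangent-space membership deferred to the efficient-influence-function computation. The only cosmetic difference is that you select the least-favorable direction directly, whereas the paper first states a Cauchy--Schwarz bound over all submodel scores (Lemma~\ref{lem:upperbound_semipara}) and then substitutes the efficient score, which in Appendix~\ref{sec:oberved-data} is exactly your choice $g^a = \widetilde{\psi}^a_P/\mathbb{E}_P\left[\left(\widetilde{\psi}^a_P\right)^2\right]$.
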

To prove this lemma, we define
\begin{align*}
    \ell^a_{\bm{\Delta}}(y, d, x) = \mathbbm{1}[d = a]\big\{\log f^a_{\Delta^a}(y^a|x)\big\} + \log \zeta_{\bm{\Delta}}(x).
\end{align*}
Then, by using the parametric submodel defined in the previous section, 
\begin{align*}
   L^a_T(P, Q) &= \sum^T_{t=1}\mathbbm{1}[ A_t = a] \log \left(\frac{f^a_{P}(Y^a_{t}| X_t)\zeta_{P}(X_t)}{f^a_{Q}(Y^a_{t}| X_t)\zeta_{Q}(X_t)}\right)\\
   &= \sum^T_{t=1}\mathbbm{1}[ A_t = a] \log \left(\frac{f^a_{P}(Y^a_{t}| X_t)\zeta_{P}(X_t)}{f^a_{\Delta^a}(Y^a_{t}| X_t)\zeta_{\bm{\Delta}}(X_t)}\right)\\
   &= \sum^T_{t=1}\left( - \frac{\partial}{\partial \Delta^a} \Big|_{\Delta^{a} = 0} \ell^a_{\bm{\Delta}^{(a)}}(Y_t, A_t, X_t)\Delta^a - \frac{\partial^2}{\partial (\Delta^a)^2}\Big|_{\Delta^{a} = 0} \ell^a_{\bm{\Delta}^{(a)}}(Y_t, A_t, X_t)\frac{(\Delta^a)^2}{2} + O\left(\left(\Delta^a\right)^3\right)\right).
\end{align*}
Here, note that
\begin{align*}
    &\frac{\partial}{\partial \Delta^a} \Big|_{\Delta^{a} = 0} \ell^a_{\bm{\Delta}^{(a)}}(Y_t, A_t, X_t) = S^a(Y_t, A_t, X_t) = g^a(Y_t, A_t, X_t)\\
    &\frac{\partial}{\partial (\Delta^a)^2}\Big|_{\Delta^{a} = 0} \ell^a_{\bm{\Delta}^{(a)}}(Y_t, A_t, X_t) = -\left(S^a(Y_t, A_t, X_t)\right)^2.
\end{align*}
By using the expansion, we evaluate $\mathbb{E}_P\left[L^a_T\right]$. Here, by definition, $\mathbb{E}_P\left[ S^a(Y_t, A_t, X_t)\right] = 0$. Therefore, we consider an upper bound of $\frac{1}{\mathbb{E}_P\left[ \left(S^a(Y_t, A_t, X_t)\right)^2 \right]}$ for $S \in\dot{\mathcal{R}}$. 

Then, we prove the following lemma on the upper bound for $\frac{1}{\mathbb{E}_P\left[ \left(S^a(Y_t, A_t, X_t)\right)^2 \right]}$:
\begin{lemma}
\label{lem:upperbound_semipara}
For $P\in\mathcal{P}^\dagger$, 
\begin{align*}
    \sup_{ S \in\dot{\mathcal{R}}} \frac{1}{\mathbb{E}_P\left[ \left(S^a(Y_t, A_t, X_t)\right)^2 \right]}\leq \mathbb{E}_P\left[\left(\widetilde{\psi}^a_P(Y_t, A_t, X_t)\right)^2\right]
\end{align*}
\end{lemma}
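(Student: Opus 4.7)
The plan is to apply the Cauchy--Schwarz inequality to the pathwise differentiability identity that characterizes the semiparametric influence function $\widetilde{\psi}^a_P$.

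First, I would compute the pathwise derivative of the functional $\psi^a(\overline{R}_{P,\bm{\Delta}^{(a)}}) = \mathbb{E}_{\overline{R}_{P,\bm{\Delta}^{(a)}}}[Y^a]$ along any admissible submodel of the form \eqref{eq:parametric_submodel}. By the normalization \eqref{eq:cond1}, $\psi^a(\overline{R}_{P,\bm{\Delta}^{(a)}}) = \mu^a(P) + \Delta^a + O((\Delta^a)^2)$, so $\frac{\partial}{\partial \Delta^a}\big|_{\Delta^a=0}\psi^a(\overline{R}_{P,\bm{\Delta}^{(a)}}) = 1$ along every such submodel. The correction factor $c(y,d,x;\bm{\Delta})$ in \eqref{eq:parametric_submodel} does not contribute at first order, since $c(y,d,x;\bm{0})=1$ and $\frac{\partial}{\partial \Delta^a}\big|_{\bm{\Delta}=\bm{0}}\log c = 0$, so the entire first-order information is carried by the score $g = S$.

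Second, by the pathwise-differentiability representation \eqref{eq:derivative} of $\widetilde{\psi}^a_P$, the same derivative equals the $L_2(\overline{R}_P)$ inner product $\mathbb{E}_P[\widetilde{\psi}^a_P(Y_t,A_t,X_t)\,S^a(Y_t,A_t,X_t)]$, where $S^a$ is the $a$-th component of the tangent-set element $S \in \dot{\mathcal{R}}$. Combining the two identifications yields the key identity $\mathbb{E}_P[\widetilde{\psi}^a_P\,S^a] = 1$ for every $S\in\dot{\mathcal{R}}$. Cauchy--Schwarz in $L_2(\overline{R}_P)$ then gives
\[
1 \;=\; \bigl(\mathbb{E}_P[\widetilde{\psi}^a_P\,S^a]\bigr)^2 \;\leq\; \mathbb{E}_P\!\bigl[(\widetilde{\psi}^a_P)^2\bigr]\cdot\mathbb{E}_P\!\bigl[(S^a)^2\bigr],
\]
so $1/\mathbb{E}_P[(S^a)^2] \leq \mathbb{E}_P[(\widetilde{\psi}^a_P)^2]$ for every $S\in\dot{\mathcal{R}}$; taking the supremum over $S$ then yields the claim.

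The main obstacle is not the Cauchy--Schwarz step itself but the bookkeeping of the first step: one must verify that every submodel in \eqref{eq:parametric_submodel} obeys the normalization \eqref{eq:cond1} to first order uniformly over the choice of $g$, so that $\partial_{\Delta^a}\psi^a|_{0} = 1$ holds for every $S\in\dot{\mathcal{R}}^a$ rather than for a single distinguished one. Once this is established, the remainder is standard Hilbert-space geometry; moreover equality in the final bound is attained when $S^a$ is proportional to $\widetilde{\psi}^a_P$ in $L_2(\overline{R}_P)$, i.e., in the ``least-favorable'' direction, so the inequality is sharp whenever $\widetilde{\psi}^a_P$ lies in the closure of the tangent set.
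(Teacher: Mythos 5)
Your proposal is correct and follows essentially the same route as the paper: both identify $\mathbb{E}_P\bigl[\widetilde{\psi}^a_P(Y_t,A_t,X_t)\,S^a(Y_t,A_t,X_t)\bigr]=1$ from the pathwise-differentiability relation \eqref{eq:derivative} together with the normalization \eqref{eq:cond1}, apply the Cauchy--Schwarz inequality, and take the supremum over $S\in\dot{\mathcal{R}}$. Your additional remarks on why the derivative equals $1$ uniformly over submodels and on when equality is attained simply make explicit what the paper leaves implicit.
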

\begin{proof}
From the Cauchy-Schwarz inequality, we have
\begin{align*}
    &1 = \mathbb{E}_P\left[ \widetilde{\psi}^a_P(Y_t, A_t, X_t) S^a(Y_t, A_t, X_t) \right]\leq \sqrt{\mathbb{E}_P\left[\left (\widetilde{\psi}^a_P(Y_t, A_t, X_t)\right)^2 \right]}\sqrt{\mathbb{E}_P\left[  \left(S^a(Y_t, A_t, X_t)\right)^2 \right]}.
\end{align*}
Therefore, 
\begin{align*}
    &\sup_{ S \in\dot{\mathcal{R}}} \frac{1}{\mathbb{E}_P\left[ \left(S^a(Y_t, A_t, X_t)\right)^2 \right]} \leq \mathbb{E}_P\left[\left(\widetilde{\psi}^a_P(Y_t, A_t, X_t)\right)^2\right].
\end{align*}
\end{proof}

According to this lemma, to derive the upper bound for $\frac{1}{\mathbb{E}_P\left[ \left(S^a(Y_t, A_t, X_t)\right)^2 \right]}$, let us define the \emph{semiparametric efficient score} $S^a_{\mathrm{eff}}(Y_t, A_t, X_t) \in \overline{\mathrm{lin}} \dot{\mathcal{R}}^a$ as
\begin{align*}
    &S^a_{\mathrm{eff}}(Y_t, A_t, X_t) = \frac{\widetilde{\psi}^a_P(Y_t, A_t, X_t)}{\mathbb{E}_P\left[\left(\widetilde{\psi}^a_P(Y_t, A_t, X_t)\right)^2\right]}.
\end{align*}
Then, by using the semiparametric efficient score $S^a_{\mathrm{eff}}(Y_t, A_t, X_t)$, we approximate the likelihood ratio as follows:
\begin{proof}[Proof of Lemma~\ref{lem:taylor_exp_semipara}]
\begin{align*}
    \mathbb{E}_{P'}[L^a_T(P, Q)]&= T\mathbb{E}_P\left[ \frac{1}{2}\left(S^a(Y_t, A_t, X_t)\right)^2\left(\Delta^a\right)^2  + O((\Delta^a)^3)\right]\\
    &\leq T\mathbb{E}_P\left[ \frac{1}{2}\left(S^a_{\mathrm{eff}}(Y_t, A_t, X_t)\right)^2\left(\Delta^a\right)^2  + O((\Delta^a)^3)\right]\\
    &= \frac{1}{2}\frac{T\left(\Delta^a\right)^2}{\mathbb{E}_P\left[\left(\widetilde{\psi}^a_P(Y_t, A_t, X_t)\right)^2\right]} + O\left(T\left(\Delta^a\right)^3\right).
\end{align*}
\end{proof}

\subsection{Observed-Data Semiparametric Efficient Influence Function}
\label{sec:oberved-data}
Our remaining is task is to find $\widetilde{\psi}^a_P \in \overline{\mathrm{lin}} \dot{\mathcal{R}}$ in \eqref{eq:derivative}. Our derivation mainly follows \citet{hahn1998role}. 
We guess that $\widetilde{\psi}^a_P(Y_t, A_t, X_t)$ has the following form:
\begin{align}
\label{eq:guess}
    \widetilde{\psi}^a_P(y, d, x) = \frac{\mathbbm{1}[d = a](y - \mu^a(P)(x))}{\kappa_{T, P}(a| X)} + \mu^a(P)(x) - \mu^a(P).
\end{align}
Then, as shown by \citet{hahn1998role}, the condition $ \frac{\partial}{\partial \Delta^a}\Big|_{\bm{\Delta}^{(a)}=\bm{0}} \psi^a(\overline{R}_{P, \bm{\Delta}^{(a)}}) =  \mathbb{E}_{\overline{R}_{Q}}\left[ \widetilde{\psi}^a_P(Y_t, A_t, X_t) S^a(Y_t, A_t, X_t) \right]$ holds under \eqref{eq:guess} when the score functions are given as
\begin{align*}
    S^a_f(y|x) = \frac{(y - \mu^a(P)(x))}{\kappa_{T, P}(a| x)}/\widetilde{V}^a(\kappa_{T, P}),\qquad S^a_{\zeta}(x) = \big(\mu^a(P)(x) - \mu^a(P)\big)/\widetilde{V}^a(\kappa_{T, P})\ \ \ \mathrm{for}\ a\in[K],
\end{align*}
where 
\begin{align*}
    \widetilde{V}^a(\kappa_{T, P}) :=& \mathbb{E}_P\left[\left(\frac{\mathbbm{1}[d = a](y - \mu^a(P)(x))}{\kappa_{T, P}(a| X)} + \mu^a(P)(x) - \mu^a(P)\right)^2\right] = \mathbb{E}_P\left[\frac{\left(\sigma^a(X_t)\right)^2}{\kappa_{T, P}(a| X_t)} + \left(\mu^a(P)(X_t) - \mu^a(P)\right)^2\right].
\end{align*}
Therefore, 
\begin{align*}
    S^a(y, d, x) = \left(\frac{\mathbbm{1}[d = a](y - \mu^a(P)(x))}{\kappa_{T, P}(a| X)} + \mu^a(P)(x) - \mu^a(P)\right) / \widetilde{V}^a(\kappa_{T, P}). 
\end{align*}
Our specified score function satisfies \eqref{eq:cond1} because we can confirm that
\begin{align*}
    &\psi^a(\overline{R}_{P, \bm{0}}) = \mu^a(P),
\end{align*}
and 
\begin{align*}
    \frac{\partial}{\partial \Delta^a}\Big|_{\Delta^{a} = 0} \psi^a(\overline{R}_{P, \bm{\Delta}^{(a)}}) &= \mathbb{E}_{\overline{R}_{Q}}\left[ \widetilde{\psi}^a_P(Y_t, A_t, X_t) S^a(Y_t, A_t, X_t) \right]\\
    &= \mathbb{E}_{\overline{R}_{Q}}\left[\left(\frac{\mathbbm{1}[d = a](y - \mu^a(P)(x))}{\kappa_{T, P}(a| X)} + \mu^a(P)(x) - \mu^a(P)\right)^2 / \widetilde{V}^a(\kappa_{T, P})\right] = 1.
\end{align*}
Then, from the first-order series expansion of $\psi^a(\overline{R}_{P, \bm{\Delta}^{(a)}})$ around $\Delta^a = 0$, we obtain
\begin{align*}
    \psi^a(\overline{R}_{P, \bm{\Delta}^{(a)}}) = \psi^a(\overline{R}_{P, \bm{0}}) + \Delta^a\frac{\partial}{\partial \Delta^a}\Big|_{\Delta^{a} = 0}\psi^a(\overline{R}_{P, \bm{\Delta}^{(a)}})  + O((\Delta^a)^2) = \mu^a(P) + \Delta^a + O((\Delta^a)^2). 
\end{align*}

Summarizing the above arguments, we obtain the following lemma.
\begin{lemma} 
\label{lem:semipara_efficient}
For $P\in\mathcal{P}^\dagger$, the semiparametric efficient influence function is
\begin{align*}
    \widetilde{\psi}^a_P(y, d, x) &= \widetilde{V}^a(\kappa_{T, P})\left(\mathbbm{1}[d = a]S^a_f(y|x) + S_{\zeta}(x)\right)\\
    &= \frac{\mathbbm{1}[d = a](y - \mu^a(P)(x))}{\kappa_{T, P}(a| x)} + \mu^a(P)(x) - \mu^a(P).
\end{align*}
\end{lemma}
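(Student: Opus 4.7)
\medskip
\noindent\textbf{Proof proposal.} The plan is to follow the classical semiparametric efficiency derivation of \citet{hahn1998role}, adapted to the observed-data model $\overline{R}_P$ whose design density is $\kappa_{T,P}(a\mid x)$ rather than the propensity score of a purely observational setup. I would adopt the AIPW form displayed in the lemma as the candidate influence function and verify two things: (i) it lies in the closed linear span $\overline{\mathrm{lin}}\,\dot{\mathcal{R}}^a$ of the tangent set, and (ii) it satisfies the pathwise derivative identity \eqref{eq:derivative}. Efficiency then follows automatically, because under our nonparametric specification of $f^a_P(\cdot\mid x)$ and $\zeta_P$, the tangent set already spans the whole relevant mean-zero $L_2$ subspace, so any other influence function differs from $\widetilde{\psi}^a_P$ only by an element orthogonal to the tangent space.

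For step (i), I would exhibit $\widetilde{\psi}^a_P$ as the scalar multiple $\widetilde{V}^a(\kappa_{T,P})\cdot S^a$ of the score $S^a$ obtained by taking
\[
S^a_f(y\mid x) = \frac{y-\mu^a(P)(x)}{\kappa_{T,P}(a\mid x)\,\widetilde{V}^a(\kappa_{T,P})},\qquad S^a_\zeta(x) = \frac{\mu^a(P)(x)-\mu^a(P)}{\widetilde{V}^a(\kappa_{T,P})}.
\]
The two defining conditions for a valid score, $\mathbb{E}_P[S^a_f(Y^a\mid X)\mid X]=0$ and $\mathbb{E}_P[S^a_\zeta(X)]=0$, are immediate from the definitions of $\mu^a(P)(x)$ and $\mu^a(P)$. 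Feeding these scores into the normalized construction \eqref{eq:parametric_submodel} produces an admissible parametric submodel passing through $\overline{R}_P$, and differentiability in quadratic mean has already been recorded in the text via Lemma~7.6 of \citet{Vaart1998}.

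For step (ii), both sides of \eqref{eq:derivative} need to be computed. The left side equals $1$ by the normalization \eqref{eq:cond1}. For the right side I would plug in $\widetilde{\psi}^a_P$ and $S^a$ and expand; using iterated expectations together with the identity $\mathbb{E}_P[\mathbbm{1}[A=a]\mid X]=\kappa_{T,P}(a\mid X)$ under the observed-data law, the cross term between the IPW residual and the regression part vanishes, leaving
\[
\mathbb{E}_P\bigl[\widetilde{\psi}^a_P\,S^a\bigr] = \frac{1}{\widetilde{V}^a(\kappa_{T,P})}\,\mathbb{E}_P\!\left[\frac{(\sigma^a(X))^2}{\kappa_{T,P}(a\mid X)} + \bigl(\mu^a(P)(X)-\mu^a(P)\bigr)^2\right] = 1,
\]
which matches the left side and closes the argument. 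The step I expect to require the most care is the tangent-set bookkeeping: showing that the nonparametric specification of $f^a_P$ and $\zeta_P$ genuinely makes $\widetilde{\psi}^a_P$ the \emph{efficient} influence function rather than merely \emph{an} influence function. I plan to import this projection argument verbatim from \citet{hahn1998role}, with the observed-data weight $\kappa_{T,P}(a\mid x)$ playing the role of the propensity score throughout.
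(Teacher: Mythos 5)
Your proposal is correct and follows essentially the same route as the paper: guess the AIPW form, specify $S^a_f$ and $S^a_\zeta$ normalized by $\widetilde{V}^a(\kappa_{T,P})$ so that the pathwise-derivative identity \eqref{eq:derivative} holds (verified via iterated expectations, giving $\mathbb{E}_P[\widetilde{\psi}^a_P S^a]=1$), confirm membership in $\overline{\mathrm{lin}}\dot{\mathcal{R}}$, and defer the remaining projection/efficiency bookkeeping to \citet{hahn1998role}, exactly as the paper does.
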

Thus, under $g$ with our specified score functions, we can confirm that the semiparametric influence function $\widetilde{\psi}^a_P(y, d, x)  = \widetilde{V}^a(\kappa_{T, P})\left(\mathbbm{1}[A_t = a]S^a_f(y|x) + S_{\zeta}(x)\right)$ belongs to  $\overline{\mathrm{lin}}\dot{\mathcal{R}}$. Note that $\mathbb{E}_{\overline{R}_{Q}}\left[S^a_{\mathrm{eff}}(Y_t, A_t, X_t)\right] = 0$ and  \[\mathbb{E}_{\overline{R}_{Q}}\left[\Big(S^a_{\mathrm{eff}}(Y_t, A_t, X_t)\Big)^2\right] = \left(\mathbb{E}_{\overline{R}_{Q}}\left[\Big(\widetilde{\psi}^a_P(Y_t, A_t, X_t)\Big)^2\right]\right)^{-1}.\] 

In summary, from Lemmas~\ref{lem:taylor_exp_semipara}, \ref{lem:upperbound_semipara}, and \ref{lem:semipara_efficient}, we obtain Lemma~\ref{lem:expansion}.
Note that because $\mu^a(P)(x) = \mu^a$ for $P\in \mathcal{P}^{\dagger}$, $\widetilde{V}^a(\kappa_{T, P}) := \mathbb{E}_P\left[\frac{\left(\sigma^a(X_t)\right)^2}{\kappa_{T, P}(a| X_t)}\right]$. 

\subsection{The Worst Case Bandit Model}
\label{sec:worst}
We show the final step of the proof.
\begin{proof}
Then, from Lemmas~\ref{lem:null_lower_bound} and \ref{lem:semipara_efficient}, for all $d\in[K]$, and definition of the null consistent strategy, for any $\epsilon > 0$, there exists $T_0 > 0$ such that for all $T > T_0$,
\begin{align*}
    &\sup_{P\in \mathcal{P}^*} \mathbb{E}_{P}[r_T(\pi)(P)]\geq \sup_{\bm{\Delta}\in(0, \infty)^K}\sum_{d\in[K]}e^d\Delta^d\left\{1 - \mathbb{P}_{P^{\sharp}}\left(\widehat{a}_T = d\right) - \sqrt{\frac{\mathbb{E}_{P^{\sharp}}\left[L^d_T(P^{\sharp}, Q_{\bm{\Delta}^{(d)}})\right]}{2}} + O\left(\Delta^d\right)\right\}\\
    &\geq \inf_{w\in\mathcal{W}}\sup_{\bm{\Delta}\in(0, \infty)^K}\sum_{d\in[K]}e^d\Delta^d \left\{ 1 - \frac{1}{K} - \sqrt{\frac{T(\Delta^d)^2}{2\mathbb{E}_{P^\sharp}\left[\frac{\left(\sigma^d(X_t)\right)^2}{w(d|X_t)}\right]}+ O\big(T(\left(\Delta^d\right)^3)} + O\left(\Delta^d\right) \right\} - \epsilon \\
    &\geq \inf_{w\in\mathcal{W}}\sup_{\bm{\Delta}\in(0, \infty)^K}\sum_{d\in[K]}e^d\Delta^d \left\{ \frac{1}{2} - \sqrt{\frac{T\left(\Delta^d\right)^2}{2\mathbb{E}_{P^\sharp}\left[\frac{\left(\sigma^d(X_t)\right)^2}{w(d|X_t)}\right]}+ O\big(T(\left(\Delta^d\right)^3)} + O\left(\Delta^d\right) \right\}- \epsilon.
\end{align*}
The maximizer of $\sup_{\bm{\Delta}\in(0, \infty)^K}\sum_{d\in[K]}e^d\Delta^d \left\{ \frac{1}{2} - \sqrt{\frac{T\left(\Delta^d\right)^2}{2\mathbb{E}_{P^\sharp}\left[\frac{\left(\sigma^d(X_t)\right)^2}{w(d|X_t)}\right]}}\right\}$ is given as $\Delta^a = \frac{1}{4}\sqrt{\frac{2\mathbb{E}_{P^\sharp}\left[\frac{\left(\sigma^{a}(X_t)\right)^2}{w(a|X_t)} \right]}{T}}$. Therefore, 
\begin{align*}
    &\sup_{P\in \mathcal{P}^*} \mathbb{E}_{P}[r_T(\pi)(P)]\geq \inf_{w\in\mathcal{W}}\sup_{\bm{\Delta}\in(0, \infty)^K}\sum_{d\in[K]}e^d\Delta^d \left\{ \frac{1}{2} - \sqrt{\frac{T\left(\Delta^d\right)^2}{2\mathbb{E}_{P^\sharp}\left[\frac{\left(\sigma^d(X_t)\right)^2}{w(d|X_t)}\right]}+ O\big(T\left(\Delta^d\right)^3)} + O\left(\Delta^d\right)\right\}- \epsilon\\
    &\geq \frac{1}{12}\inf_{\bm{w}\mathcal{W}}\sum_{d\in[K]}e^d\left\{\sqrt{\frac{\mathbb{E}_{P^\sharp}\left[\frac{\left(\sigma^{d}(X_t)\right)^2}{w(d|X_t)}\right]}{T}} + O\left(\frac{2\mathbb{E}_{P^\sharp}\left[\frac{\left(\sigma^{d}(X_t)\right)^2}{w(d|X_t)} \right]}{T}\right)\right\}- \epsilon.
\end{align*}
As $T\to \infty$, letting $\epsilon \to 0$,
\begin{align*}
    &\sup_{P\in \mathcal{P}^*} \sqrt{T}\mathbb{E}_{P}[r_T(\pi)(P)]\geq \frac{1}{12}\inf_{\bm{w}\mathcal{W}}\sum_{d\in[K]}e^d\sqrt{\mathbb{E}_{P^\sharp}\left[\frac{\left(\sigma^{d}(X_t)\right)^2}{w(d|X_t)}\right]} + o\left(1\right).
\end{align*}
\end{proof}

\subsection{Characterization of the Target Allocation Ratio}
\label{sec:allocation_ratio}
\begin{proof}[Proof of Theorem~\ref{thm:null_lower_bound}]
We showed that any null consistent BAI strategy satisfies
\begin{align*}
    \sup_{P\in\mathcal{P}^*} \mathbb{E}_{P}[r_T(\pi)(P)] \geq \frac{1}{12}\inf_{\bm{w}\mathcal{W}}\sum_{d\in[K]}e^d\sqrt{\frac{\left(\sigma^{d}\right)^2}{w(d)}} + o\left(1\right).
\end{align*}
In the tight lower bound, $e^{\widetilde{d}} = 1$ for $\widetilde{d} = \argmax_{d\in[K]} \frac{1}{12}\sqrt{\frac{\left(\sigma^{d}\right)^2}{w(d)}} + o(1)$\footnote{If there are multiple candidates of the best treatment arm, we choose one of the multiple treatment arms as the best treatment arm with probability $1$.}. Therefore, we consider solving
\begin{align*}
\inf_{w\in\mathcal{W}}\max_{d\in[K]}\sqrt{\frac{\left(\sigma^{d}\right)^2}{w(d)}}.
\end{align*}
If there exists a solution, we can replace the $\inf$ with the $\min$. We consider the following constrained optimization:
\begin{align}
\label{eq:opt_prob}
    \inf_{R\in \mathbb{R}, w\in\mathcal{W}}&\quad R\\
    \mathrm{s.t.} &\quad R \geq \frac{\left(\sigma^{d}\right)^2}{w(d)}\quad \forall d\in[K]\nonumber\\
    &\quad \sum_{a\in[K]}w(a) = 1.\nonumber
\end{align}
For this problem, we derive the first-order condition, which is sufficient for the global optimality of such a convex programming problem. For Lagrangian multipliers $\lambda^{d}\in (-\infty, 0]$ and $\gamma\in\mathbb{R}$, we consider the following Lagrangian function: 
\begin{align*}
    L(\bm{\lambda}, \gamma; R, w) = R + \sum_{d\in [K]} \lambda^{d}\left\{\frac{\left(\sigma^{d}\right)^2}{w(d)} - R\right\} + \gamma\left\{\sum_{d\in[K]}w(d) - 1\right\}.
\end{align*}
Then, the optimal solutions $w^*$, $\lambda^{* d}$,  $\gamma^*$, and $R^*$ of the original problem satisfies
\begin{align}
\label{eq:opt_sol_const}
&1 -  \sum_{d\in[K]}\lambda^{d *} = 0\qquad\forall x\in\mathcal{X}\\
&-\lambda^{d*}\frac{\left(\sigma^d\right)^2}{(w^*(d))^2}  = \gamma^*\qquad \forall d \in [K],\\
&\lambda^{d*} \left\{\frac{\left(\sigma^{d}\right)^2}{w(d)} - R^*\right\} = 0\\
&\gamma^*(x) \left\{\sum_{a\in[K]}w^*(a) - 1\right\} = 0\qquad \forall a \in [K].\nonumber
\end{align}

Here, the solutions are given as
\begin{align*}
&w^*(d) = \frac{\left(\sigma^d\right)^2}{\sum_{b\in[K]}\left(\sigma^{b}\right)^2},\\
&\lambda^{d*} = w^*(d),\\
&\gamma^*(x) = - \sum_{b\in[K]}\left(\sigma^{b}\right)^2.
\end{align*}

Therefore,
\begin{align*}
  &\inf_{\bm{w}\mathcal{W}}\sum_{a\in[K]}e^a \frac{1}{12}\sqrt{\mathbb{E}_{P^\sharp}\left[\frac{\left(\sigma^{a}(X)\right)^2}{w(a|X)}\right]} + o(1)= \frac{1}{12} \sqrt{\mathbb{E}_{P^\sharp}\left[\sum_{a\in[K]}\left(\sigma^{a}(X)\right)^2\right]}\sum_{a\in[K]}e^{a} + o(1).
\end{align*}
Since $\sum_{a\in[K]}e^{a} = 1$ and $\zeta_P(x) = \zeta(x)$, the proof is complete. 

\end{proof}
Here, $\widetilde{w}(a|x) = \frac{\left(\sigma^{a}(x)\right)^2}{\sum_{b\in[K]}\left(\sigma^{b}(x)\right)^2}$ works as a target allocation ratio in implementation of our BAI strategy because it represents the sample average of $\mathbbm{1}[A_t = a]$; that is, we design our sampling rule $(A_t)_{t\in[T]}$ for the average to be the target allocation ratio. 

Although this lower bound is applicable to a case with $K = 2$, we can tighten the lower bound by changing the definiton of the parametric submodel.

\section{Proof of Lemma~\ref{prp:kauf_lemma_extnd_infinite}}
\label{appdx:proof:lem_extnd_infinite}
\begin{proof}
\begin{align*}
    &\mathbb{E}_{Q}[L_T]  = \sum^T_{t=1}\mathbb{E}_{Q}\left[\sum_{a \in [K]} \mathbbm{1}\{A_t = a\} \log \frac{f^a_{Q}(Y^a_{t}| X_t)\zeta_{Q}(X_t)}{f^a_{P_0}(Y^a_{t}| X_t) \zeta_{P_0}(X_t)}\right]
    \\
    & = \sum^T_{t=1}\mathbb{E}^{X_t, \mathcal{F}_{t-1}}_{Q}\left[\sum_{a \in [K]}\mathbb{E}^{Y^a_t, A_t}_{Q}\left[ \mathbbm{1}[A_t = a] \log \frac{f^a_{Q}(Y^a_{t}| X_t)\zeta_{Q}(X_t)}{f^a_{P_0}(Y^a_{t}| X_t)\zeta_{P_0}(X_t)}|X_t, \mathcal{F}_{t-1}\right]\right]\\
    & = \sum^T_{t=1}\mathbb{E}^{X_t, \mathcal{F}_{t-1}}_{Q}\left[\sum_{a \in [K]}\mathbb{E}_{Q}\left[ \mathbbm{1}[A_t = a]| X_t, \mathcal{F}_{t-1}\right] \mathbb{E}^{Y^a_t}_{Q}\left[\log \frac{f^a_{Q}(Y^a_{t}| X_t)\zeta_{Q}(X_t)}{f^a_{P_0}(Y^a_{t}| X_t)\zeta_{P_0}(X_t)}| X_t, \mathcal{F}_{t-1}\right]\right]\\
    & = \sum^T_{t=1}\mathbb{E}^{X_t}_{Q}\left[\mathbb{E}^{\mathcal{F}_{t}}_{Q}\left[\sum_{a \in [K]}\mathbb{E}_{Q}\left[ \mathbbm{1}[A_t = a]|X_t,  \mathcal{F}_{t-1}\right]\mathbb{E}^{Y^a_t}_{Q}\left[\log \frac{f^a_{Q}(Y^a_t| X_t)\zeta_{Q}(X_t)}{f^a_{P_0}(Y^a_t| X_t)\zeta_{P_0}(X_t)}|X_t\right]\right]\right]\\
    & = \sum^T_{t=1}\int\left(\sum_{a \in [K]}\mathbb{E}^{\mathcal{F}_{t}}_{Q}\left[\mathbb{E}_{Q}\left[ \mathbbm{1}[A_t = a]|X_t = x,  \mathcal{F}_{t-1}\right]\right]\mathbb{E}^{Y^a_t}_{Q}\left[\log \frac{f^a_{Q}(Y^a_t| X_t)\zeta_{Q}(X_t)}{f^a_{P_0}(Y^a_t| X_t)\zeta_{P_0}(X_t)}|X_t = x\right]\right)\zeta_{Q}(x)\mathrm{d}x\\
    & = \int\sum_{a \in [K]}\left(\mathbb{E}^{Y^a}_{Q}\left[\log \frac{f^a_{Q}(Y^a| X)\zeta_{Q}(X)}{f^a_{P_0}(Y^a| X)\zeta_{P_0}(X)}|X = x\right]\sum^T_{t=1}\mathbb{E}^{\mathcal{F}_{t}}_{Q}\left[\mathbb{E}_{Q}\left[ \mathbbm{1}[A_t = a]|X_t = x,  \mathcal{F}_{t-1}\right]\right]\right)\zeta_{Q}(x)\mathrm{d}x\\
    & = \mathbb{E}^{X}_{Q}\left[\sum_{a \in [K]}\mathbb{E}^{Y^a}_{Q}\left[\log \frac{f^a_{Q}(Y^a| X)\zeta_{Q}(X)}{f^a_{P_0}(Y^a| X)\zeta_{P_0}(X)}|X\right]\sum^T_{t=1}\mathbb{E}^{\mathcal{F}_{t-1}}_{Q}\left[\mathbb{E}_{Q}\left[ \mathbbm{1}[A_t = a]| X, \mathcal{F}_{t-1}\right]\right]\right],
\end{align*}
where $\mathbb{E}^{Z}_{Q}$ denotes an expectation of random variable $Z$ over the distribution $Q$. We used that the observations $(Y^1_t, \dots, Y^K_t, X_t)$ are i.i.d. across $t\in\{1,2,\dots, T\}$. 
\end{proof}

\section{Proof of the Asymptotic Lower Bound for Two-Armed Bandits (Theorem~\ref{thm:twoarmed_null_lower_bound})}
\label{appdx:thm:twoarmed_null_lower_bound}
When $K=2$, we define different parametric submodels from those in Section~\ref{sec:proof_thms}.   

\paragraph{Parametric submodels for the observed-data distribution and tangent set.}
In a case with $K=2$, we consider one-parameter parametric submodels for the observed-data distribution $\overline{R}_P$ with the density function  $\overline{r}_P(y, d, x)$ by introducing a parameter $\Delta \in \Theta$ with some compact space $\Theta$. We denote a set of parametric submodels by $\left\{\overline{R}_{P, \Delta}: \Delta\in\Theta\right\} \subset \mathcal{R}_{\mathcal{P}^*}$, which is defined as follows: for some $g:\mathbb{R}\times [2]\times \mathcal{X} \to \mathbb{R}$ satisfying $\mathbb{E}_{P}[g(Y_t, A_t, X_t)] = 0$ and $\mathbb{E}_{P}[(g(Y_t, A_t, X_t))^2] < \infty$, a parametric submodel $\overline{R}_{P, \Delta}$ has a density such that
\begin{align*}
\overline{r}^\kappa_{\Delta}(y, d, x) := 2c(y, d, x; \Delta)\left( 1 + \exp\left( - 2\Delta g(y, d, x) \right)\right)^{-1} \overline{r}^a_P(y, d, x),
\end{align*}
where $c(y, d, x; \Delta)$ is some function such that $c((y, d, x; 0) = 1$ and $\frac{\partial}{\partial \Delta}\Big|_{\Delta=0}\log c((y, d, x; \Delta) = 0$ for all $(y, d, x)\in\mathbb{R}\times[2]\times\mathcal{X}$. Note that the parametric submodels are usually not unique. 
The parametric submodel is equivalent to $\overline{r}_P(y, a, x)$ if $\Delta = 0$. 

Let $f^a_{\Delta}(y| x)$ and $\zeta_{\Delta}(x)$ be the conditional density of $y$ given $x$ and some density of $x$, satisfying \eqref{eq:parametric_submodel} as
\begin{align*}
    &\overline{r}^\kappa_{\Delta}(y, d, x) = \prod_{a\in[2]}\left\{f^a_{\Delta}(y| x)\kappa(a| x)\right\}^{\mathbbm{1}[d=a]}\zeta_{\Delta}(x).
\end{align*}
For this parametric submodel, we develop the same argument in Section~\ref{sec:proof_thms}. Note that we consider one-parameter parametric submodel for two-armed bandits, while in Section~\ref{sec:proof_thms}, we consider $K$-dimensional parametric submodels for $K$-armed bandits.

\paragraph{Change-of-measure.} 
We consider a set of bandit models $\mathcal{P}^{\dagger\dagger}\subset \mathcal{P}^*$ such that for all $P\in\mathcal{P}^{\dagger\dagger}$, $a\in[K]$, and $x\in\mathcal{X}$, $\mu^a(P)(x) = \mu^a$. Before a bandit process begins, we fix $P^{\sharp\sharp}\in \mathcal{P}^{\dagger\dagger}$ such that $\mu^1(P^{\sharp\sharp}) = \mu^2(P^{\sharp\sharp}) = \mu(P^{\sharp\sharp})$. We choose one treatment arm $d\in[2]$ as the best treatment arm following a Bernoulli distribution with parameter $e \in [0,1]$; that is, the expected outcome of the chosen treatment arm $d$ is the highest among the treatment arms. We choose treatment arm $1$ with probability $e$ and treatment arm $2$ with probability $1-e$. Let $\Delta \in (0, \infty)$ be a gap parameter and $Q_\Delta\in \mathcal{P}^{\dagger\dagger}$ be another bandit model such that $d = \argmax_{a\in[2]} \mu^a(Q_\Delta)$, $\mu^b(Q_\Delta) = \mu(P^{\sharp\sharp})$ for $b \neq d$, and $\mu^d(Q_\Delta) - \mu(P^{\sharp\sharp}) = \Delta + O(\Delta^2)$. For the parameter $\Delta$, we consider $\overline{R}_{P^{\sharp\sharp}, \Delta} \in \mathcal{R}_{\mathcal{P}^{\dagger\dagger}} \subset \mathcal{R}_{\mathcal{P}^*}$ such that the following equation holds:
\begin{align*}
    &L_T(P, Q) = \sum^T_{t=1}\left\{\mathbbm{1}[ A_t = 1] \log \left(\frac{f^1_P(Y^1_{t}| X_t)}{f^1_Q(Y^1_{t}| X_t)}\right) + \mathbbm{1}[ A_t = 2] \log \left(\frac{f^2_P(Y^2_{t}| X_t)}{f^2_Q(Y^2_{t}| X_t)}\right) + \log\left(\frac{\zeta_P(X_t)}{\zeta_Q(X_t)}\right)\right\}\\
    & = \sum^T_{t=1}\left\{\mathbbm{1}[ A_t = 1] \log \left(\frac{f^1_P(Y^1_{t}| X_t)}{f^a_{\Delta}(Y^1_{t}| X_t)}\right) + \mathbbm{1}[ A_t = 2] \log \left(\frac{f^2_P(Y^2_{t}| X_t)}{f^2_{\Delta}(Y^2_{t}| X_t)}\right) + \log\left(\frac{\zeta_P(X_t)}{\zeta_{\Delta}(X_t)}\right)\right\}. 
\end{align*}

\begin{proof}[Proof of Theorem~\ref{thm:twoarmed_null_lower_bound}]
First, we decompose the expected simple regret by using the definition of $\mathcal{P}^{\dagger\dagger}$ as
\begin{align*}  
    &\sup_{P\in\mathcal{P}^*}\mathbb{E}_{P}[r_T(\pi)(P)]\\
    &= \sup_{P\in\mathcal{P}^*}\sum_{b\in[2]}\left\{\max_{a\in[2]}\mu^a(P) - \mu^b(P)\right\}\mathbb{P}_{P}\left(\widehat{a}_T = b\right)\\
    &\geq  \sup_{\Delta\in(0, \infty)}\left\{e\left(\mu^1(Q_{\Delta}) - \mu^2(Q_{\Delta})\right)\mathbb{P}_{Q_{\Delta}}\left(\widehat{a}_T = 2\right) + (1-e)\left(\mu^2(Q_{\Delta}) - \mu^1(Q_{\Delta})\right)\mathbb{P}_{Q_{\Delta}}\left(\widehat{a}_T = 1\right)\right\}\\
    &=  \sup_{\Delta\in(0, \infty)}\left\{e\left(\mu^1(Q_{\Delta}) - \mu(P^{\sharp\sharp})\right)\mathbb{P}_{Q_{\Delta}}\left(\widehat{a}_T = 2\right) + (1-e)\left(\mu^2(Q_{\Delta}) - \mu(P^\sharp)\right)\mathbb{P}_{Q_{\Delta}}\left(\widehat{a}_T = 1\right)\right\}\\
    &=  \sup_{\Delta\in(0, \infty)}\left\{e\left(\Delta + O(\Delta^2)\right)\mathbb{P}_{Q_{\Delta}}\left(\widehat{a}_T = 2\right) + (1-e)\left(\Delta + O(\Delta^2)\right)\mathbb{P}_{Q_{\Delta}}\left(\widehat{a}_T = 1\right)\right\}\\
    &=  \sup_{\Delta\in(0, \infty)}\left\{e\Delta\mathbb{P}_{Q_{\Delta}}\left(\widehat{a}_T = 2\right) + (1-e)\Delta\mathbb{P}_{Q_{\Delta}}\left(\widehat{a}_T = 1\right) + O(\Delta^2)\right\}\\
    &=  \sup_{\Delta\in(0, \infty)}\left\{e\Delta\left( 1 - \mathbb{P}_{Q_{\Delta}}\left(\widehat{a}_T = 1\right)\right) + (1-e)\Delta\left(1 - \mathbb{P}_{Q_{\Delta}}\left(\widehat{a}_T = 2\right)\right) + O(\Delta^2)\right\}.
\end{align*}
From Propositions~\ref{lem:taylor_exp_semipara} and \ref{prp:helinger} and definition of the null consistent strategy,  
\begin{align*}
    &\sup_{\Delta\in(0, \infty)}\left\{e\Delta\left( 1 - \mathbb{P}_{Q_{\Delta}}\left(\widehat{a}_T = 1\right)\right) + (1-e)\Delta\left(1 - \mathbb{P}_{Q_{\Delta}}\left(\widehat{a}_T = 2\right)\right) + O(\Delta^2)\right\}\\
    &= \sup_{\Delta\in(0, \infty)}\Big\{e\Delta\left( 1 - \mathbb{P}_{P^{\sharp\sharp}}\left(\widehat{a}_T = 1\right) + \mathbb{P}_{P^{\sharp\sharp}}\left(\widehat{a}_T = 1\right) - \mathbb{P}_{Q_{\Delta}}\left(\widehat{a}_T = 1\right)\right)\\
    &\ \ \ \ \ \ \ \ \ \ \ \ \ \ \ \ \ \ \ \ \ \ \ \ \ \ + (1-e)\Delta\left(1 - \mathbb{P}_{P^{\sharp\sharp}}\left(\widehat{a}_T = 2\right) + \mathbb{P}_{P^{\sharp\sharp}}\left(\widehat{a}_T = 2\right) - \mathbb{P}_{Q_{\Delta}}\left(\widehat{a}_T = 2\right)\right) + O(\Delta^2)\Big\}\\
    &= \sup_{\Delta\in(0, \infty)}\Bigg\{e\Delta\left( 1 - \mathbb{P}_{P^{\sharp\sharp}}\left(\widehat{a}_T = 1\right) -  \sqrt{\frac{\mathbb{E}_{P^{\sharp\sharp}}\left[L_T(P^{\sharp\sharp}, Q_\Delta)\right]}{2}}\right)\\
    &\ \ \ \ \ \ \ \ \ \ \ \ \ \ \ \ \ \ \ \ \ \ \ \ \ \ + (1-e)\Delta\left(1 - \mathbb{P}_{P^{\sharp\sharp}}\left(\widehat{a}_T = 2\right) -  \sqrt{\frac{\mathbb{E}_{P^{\sharp\sharp}}\left[L_T(P^{\sharp\sharp}, Q_\Delta)\right]}{2}}\right) + O(\Delta^2)\Bigg\}\\
    &= \sup_{\Delta\in(0, \infty)}\left\{e\Delta\left( 1 - \frac{1}{2} -  \sqrt{\frac{\mathbb{E}_{P^{\sharp\sharp}}\left[L_T(P^{\sharp\sharp}, Q_\Delta)\right]}{2}}\right) + (1-e)\Delta\left(1 - \frac{1}{2} -  \sqrt{\frac{\mathbb{E}_{P^{\sharp\sharp}}\left[L_T(P^{\sharp\sharp}, Q_\Delta)\right]}{2}}\right) + O(\Delta^2)\right\}\\
    &= \sup_{\Delta\in(0, \infty)}\left\{\Delta\left(\frac{1}{2} -  \sqrt{\frac{\mathbb{E}_{P^{\sharp\sharp}}\left[L_T(P^{\sharp\sharp}, Q_\Delta)\right]}{2}}\right) + O(\Delta^2)\right\}\\
    &\geq\inf_{w\in\mathcal{W}} \sup_{\Delta\in(0, \infty)}\left\{\Delta\left(\frac{1}{2} -  \sqrt{\frac{T\Delta^2}{2\mathbb{E}_{P}\left[\frac{\left(\sigma^{1}(X_t)\right)^2}{w(1|X)} + \frac{\left(\sigma^{2}(X)\right)^2}{w(2|X_t)} \right]}+ O\big(T\Delta^3)}\right) + O(\Delta^2)\right\}.
\end{align*}
Let $\Delta = \frac{1}{4}\sqrt{\frac{2\mathbb{E}_{P}\left[\frac{\left(\sigma^{1}(X)\right)^2}{w(1|X)} + \frac{\left(\sigma^{2}(X)\right)^2}{w(2|X)} \right]}{T}}$. 
Then,
\begin{align*}
    &\inf_{w\in\mathcal{W}} \sup_{\Delta\in(0, \infty)}\left\{\Delta\left(\frac{1}{2} -  \sqrt{\frac{T\Delta^2}{2\mathbb{E}_{P}\left[\frac{\left(\sigma^{1}(X_t)\right)^2}{w(1|X)} + \frac{\left(\sigma^{2}(X)\right)^2}{w(2|X_t)} \right]}+ O\big(T\Delta^3)}\right) + O(\Delta^2)\right\}\\
    &\geq \frac{1}{12}\inf_{\bm{w}\mathcal{W}}\sqrt{\frac{\mathbb{E}_{P}\left[\frac{\left(\sigma^{1}(X)\right)^2}{w(1|X_t)} + \frac{\left(\sigma^{2}(X_t)\right)^2}{w(2|X_t)} \right]}{T}} + O\left(\Delta^2\right)\\
    &\geq \frac{1}{12}\sqrt{\frac{\mathbb{E}_{P}\left[\left( \sigma^{1}(X) + \sigma^{2}(X) \right)^2\right]}{T}} + O\left(\frac{\mathbb{E}_{P}\left[\left( \sigma^{1}(X) + \sigma^{2}(X) \right)^2\right]}{T}\right).
\end{align*}
Here, the minimizer regarding $w$ is $\widetilde{w}(1| x) = \frac{\sigma^1(x)}{\sigma^1(x) + \sigma^2(x)}$ ($\widetilde{w}(2| x) = 1 - \widetilde{w}(1| x)$) \citep{Laan2008TheCA,Hahn2011,kato2020efficienttest}.
Because $\zeta_P(x) = \zeta(x)$, $\sup_{P'\in\mathcal{P}^*}\sqrt\mathbb{E}_{P'}[r_T(\pi)(P)]\geq \frac{1}{12}\sqrt{\int\left( \sigma^{1}(X) + \sigma^{2}(X) \right)^2\zeta(x)\mathrm{d}x } + o(1)$. 
\end{proof}

\section{Proof of Theorem~\ref{thm:upper_pom}}
\label{appdx:proof_pom}
\begin{proof}
Let 
\[\xi^{a,b}_T(P) = \frac{\sqrt{T}\left(\widehat{\Delta}^{\mathrm{HIR}, a, b}_{T} - \Delta^{a,b}(P)\right)}{V^{a,b}(P)}.\] 
By applying the Chernoff bound, for any $v \geq 0$ and any $\lambda < 0$, 
\[\mathbb{P}_{P}\left(\widehat{\Delta}^{\mathrm{HIR}, a, b}_{T} - \Delta^{a,b}(P) \leq v\right) \leq \mathbb{E}_{P}\left[\exp\left(\lambda \sqrt{T}\xi^{a,b}_T(P)\right)\right]\exp\left(-{\lambda} Tv\right).\] 
By applying the Taylor series expansion for $\log \mathbb{E}_{P}\left[\exp\left(\lambda\sqrt{T}\xi^{a,b}_T(P)\right)\right]$ around $\frac{\lambda}{\sqrt{T}} = 0$, 
\begin{align*}
    &\log \mathbb{E}_{P}\left[\exp\left(\lambda\sqrt{T}\xi^{a,b}_T(P)\right)\right]\\
    &= \sqrt{T}\lambda \mathbb{E}_{P}\left[\xi^{a,b}_T(P)\right] + \frac{T\lambda^2}{2}\mathbb{E}_{P}\left[\left(\xi^{a,b}_T(P)\right)^2\right] + \sum^\infty_{n=3}\frac{(\sqrt{T}\lambda)^{n}}{n!}c_{n,T},
\end{align*}
where $c_{n, T}$ is the $n$-th cumulant of $(\xi^{a,b}_T(P)$.
From Lemma~2.1 of \citet{Hayashi2000} (Proposition~\ref{prp:hayashi} in Appendix), Proposition~\ref{prp:asymp_normal_hir}, we have $\lim_{T\to \infty}\mathbb{E}\left[\xi^{a,b}_T(P)\right] = 0$, $\lim_{T\to \infty}\mathbb{E}\left[\left(\xi^{a,b}_T(P)\right)^2\right] = 1$, and $\lim_{T\to \infty}\mathbb{E}\left[\left(\xi^{a,b}_T(P)\right)^n\right] = m_n$ for all $n \geq 3$, where $m_n$ is the $n$-th moments. Then, we have $\lim_{T\to \infty} c_{n,T} = 0$ because cumulants of centered normal distributions are zero except for the second-order cumulant. Here, note that $\lim_{T\to\infty}\sum^\infty_{n=3}\frac{(\sqrt{T}\lambda)^{n-2}}{n!} = -\frac{1}{2}$. Therefore, for any $v, \varepsilon > 0$, there exist $T_0 > 0$ such that for all $T > T_0$, 
\[\mathbb{P}_{P}\left(\sum^T_{t=1}\xi^{a,b}_t(P) \leq v\right) \leq \exp\left(\frac{T\lambda^2}{2} - T\lambda v - \left\{\sqrt{T}\lambda + T\lambda^2/2\right\}\varepsilon\right).\] 
By substituting $\lambda = v = - \frac{\Delta^{a,b}(P)}{\sqrt{V^{a,b}(P)}} < 0$, the claim follows.
\end{proof}

\section{Proof of Theorem~\ref{thm:minimax_opt_HIR}}
\label{appdx:thm_minimax_opt_hir}
We follow the proof of Corollary~3 of \citet{Bubeck2011}. First, from Theorem~\ref{thm:upper_exp_sr},  
\begin{align}
\mathbb{P}_{P}\left(\widehat{\mu}^{\mathrm{HIR}, a}_{T} \leq \widehat{\mu}^{\mathrm{HIR}, b}_{T}\right) \leq \exp\left(-\frac{T(\Delta^{a,b})^2}{2V^{a,b}(P)} + \left\{\frac{\sqrt{T}\Delta^{a,b}}{\sqrt{V^{a,b}(P)}} + \frac{T(\Delta^{a}(P))^2}{2V^{a}(P)}\right\}\varepsilon\right).
\end{align}
We consider two cases where a given $\Delta^a$ is more or less than a threshold $\ell^1,\ell^2,\dots, \ell^K > 0$. We have
\begin{align}
    &\mathbb{E}_P\left[r_T\left(\pi^{\mathrm{HIR}}\right)(P)\right] = \sum_{a\in[K]}\Delta^a\mathbb{P}_{P}\left(\widehat{\mu}^{\mathrm{HIR}, a^*(P)}_{T} \leq \widehat{\mu}^{\mathrm{HIR}, a}_{T}\right)\\
    &\leq \sum_{a\in[K]}\left\{\ell^a\mathbb{P}_{P}\left(\widehat{\mu}^{\mathrm{HIR}, a^*(P)}_{T} \leq \widehat{\mu}^{\mathrm{HIR}, a}_{T}\right) + \mathbbm{1}[\Delta^a \geq \ell^a]\Delta^a\mathbb{P}_{P}\left(\widehat{\mu}^{\mathrm{HIR}, a^*(P)}_{T} \leq \widehat{\mu}^{\mathrm{HIR}, a}_{T}\right)\right\}\\
    &\leq \max_{a\in[K]}\ell^a + \sum_{a\in[K]}\left\{\mathbbm{1}[\Delta^a \geq \ell^a]\Delta^a\mathbb{P}_{P}\left(\widehat{\mu}^{\mathrm{HIR}, a^*(P)}_{T} \leq \widehat{\mu}^{\mathrm{HIR}, a}_{T}\right)\right\}
\end{align}
Because $x\in[0, C_\Delta] \mapsto z \exp(-Cz^2)$ is decreasing on $[1/\sqrt{2C}, C_\Delta]$, for any $C > 0$ and $C_\Delta$, where $\Delta^a < C_\Delta$ for all $a\in[K]$. Therefore, taking $ C = \lfloor \frac{T}{2V^{a}(P)} \rfloor$, for $\ell^a \geq 1 / \sqrt{2 \left\lfloor \frac{T}{2V^{a}(P)} \right\rfloor}$,
\begin{align}
    &\mathbb{E}_P\left[r_T\left(\pi^{\mathrm{HIR}}\right)(P)\right]\\
    &\leq  \max_{a\in[K]}\ell^a + \sum_{a\in[K]}\ell^a\exp\left(-\frac{T(\ell^{a})^2}{2V^{a}(P)} + \left\{\frac{\sqrt{T}\ell^{a}}{\sqrt{V^{a}(P)}} + \frac{T(\ell^a)^2}{2V^{a}(P)}\right\}\varepsilon\right)\\
    &\leq  \max_{a\in[K]}\ell^a + (K-1)\max_{a\in[K]}\ell^a\exp\left(-\frac{T(\ell^{a})^2}{2V^{a}(P)} + \left\{\frac{\sqrt{T}\ell^{a}}{\sqrt{V^{a}(P)}} + \frac{T(\ell^a)^2}{2V^{a}(P)}\right\}\varepsilon\right)\\
    &\leq  \max_{a\in[K]}\left\{\ell^a + (K-1)\ell^a\exp\left(-\frac{T(\ell^{a})^2}{2V^{a}(P)} + \left\{\frac{\sqrt{T}\ell^{a}}{\sqrt{V^{a}(P)}} + \frac{T(\ell^a)^2}{2V^{a}(P)}\right\}\varepsilon\right)\right\}.
\end{align}
Substituting $\ell^a = \sqrt{\log K / \lfloor \frac{T}{2V^{a}(P)} \rfloor}$, we have
\begin{align}
    \mathbb{E}_P\left[r_T\left(\pi^{\mathrm{HIR}}\right)(P)\right]&\leq \max_{a\in[K]}\Bigg\{\sqrt{\log K / \left\lfloor \frac{T}{2V^{a}(P)} \right\rfloor}\\
    &+ (K-1)\sqrt{\log K / \left\lfloor \frac{T}{2V^{a}(P)} \right\rfloor}\exp\left(-\frac{T\log K / \left\lfloor \frac{T}{2V^{a}(P)} \right\rfloor}{2V^{a}(P)}\right)\\
    &\ \ \ \ \ \ \ \ \ \ \ \ \ \ \ \ \ \ \ \ \ \  \times \exp\left(\left\{\frac{\sqrt{T\log K / \lfloor \frac{T}{2V^{a}(P)}}}{\sqrt{V^{a}(P)}} + \frac{T\log K / \lfloor \frac{T}{2V^{a}(P)}}{2V^{a}(P)}\right\}\varepsilon\right)\Bigg\}.
\end{align}
Letting $T \to \infty$ and $\varepsilon \to 0$, we conclude the proof.

\section{Details of the AS-AIPW Strategy}
\label{appdx:details_as_aipw}

. For $a\in[K]$ and $t\in[T]$, let $\widehat{w}_{t}(a|x)$ be an estimated target allocation ratio at round $t$. In each round $t$, we obtain $\gamma_t$ from the uniform distribution on $[0,1]$ and draw a treatment arm $A_t = 1$ if $\gamma_t \leq \widehat{w}_{t}(1|X_t)$ and $A_t = a$ for $a \geq 2$ if $\gamma_t \in (\sum^{a-1}_{b=1}\widehat{w}_{t}(b|X_t), \sum^a_{b=1}\widehat{w}_{t}(b|X_t)]$; that is, we draw a treatment arm $a$ with a probability $\widehat{w}_{t}(a|X_t)$. As an initialization, we draw a treatment arm $A_t$ at round $t \leq K$ with an uniform probability; that is $\widehat{w}_{t}(a|X_t) = 1/K$ for all $a\in[K]$. In a round $t > K$, for all $a\in[K]$, we estimate the target allocation ratio $w^{*}$ using past observations $\mathcal{F}_{t-1}$, such that for all $a\in[K]$ and $x\in\mathcal{X}$, $\widehat{w}_{t}(a|x) > 0$ and $\sum_{a\in[K]} \widehat{w}_{t}(a|x) = 1$. We show a pseudo-code in Algorithm~\ref{alg}.

To construct an estimator $\widehat{w}_{t}(a| x)$ for all $x\in\mathcal{X}$ in each round $t$, we use bounded estimators of the conditional expected outcome $\mu^a(P)(x)$ and $\nu^a(P)(x)$, denoted  by $\widehat{\mu}^a_{t}(x)$ and $\widehat{\nu}^a_{t}(x)$, respectively. We denote an estimator of the  conditional variance $\left(\sigma^a(x)\right)^2$ by $(\widehat{\sigma}^a_t(x))^2$, which are estimated as follows. 
For $t=1,2,\dots, K$, we set $\widehat{\mu}^a_{t} = \widehat{\nu}^a_{t} = (\widehat{\sigma}^a_t(x))^2 = 0$. 
For $t > K$, we estimate $\mu^a(P)(x)$ and $\nu^a(P)(x)$ using only past samples $\mathcal{F}_{t-1}$ and converge to the true parameter almost surely. We use a bounded estimator for $\widehat{\mu}^a_t$ such that $|\widehat{\mu}^a_t| < C_\mu$. 
Let $(\widehat{\sigma}^{\dagger a}_t(x))^2 = \widehat{\nu}^a_{t}(x) - \left(\widehat{\mu}^a_{t}(x)\right)^2$ for all $a\in[K]$ and $x\in\mathcal{X}$.
Then, we estimate the variance $\left(\sigma^a(x)\right)^2$ for all  $a\in[K]$ and $x\in\mathcal{X}$ in a round $t$ as $\left(\widehat{\sigma}^a_{t}(x)\right)^2 =\max\{ \min\{((\widehat{\sigma}^{\dagger a}_t(x))^2,C_{\sigma^2}\},1/C_{\sigma^2}\}$ and define
$\widehat{w}_{t}$ by replacing the variances in $w^*$ with corresponding estimators; that is, when $K=2$, for each $a\in\{1, 2\}$, $\widehat{w}(a|X_t) = \frac{\widehat{\sigma}^{1}_t(X_t)}{\widehat{\sigma}^{1}_t(X_t) + \widehat{\sigma}^2_t(X_t)}$; 
when $K \geq 3$, for each $a\in[K]$, $\widehat{w}(a|X_t) = \frac{(\widehat{\sigma}^a_t(x))^2}{\sum_{b\in[K]}(\widehat{\sigma}^b_t(x))^2}$.

nonparametric estimators, such as the nearest neighbor regression estimator and kernel regression estimator, can be applied, which have been proven to converge to the true function in probability under a bounded sampling probability $\widehat{w}_t$ by \citet{yang2002} and \citet{qian2016kernel}. Provided that these conditions are satisfied, any estimator can be used. It should be noted that we do not assume specific convergence rates for estimators for $\mu^a(P)(x)$ and $w^*$ as the asymptotic optimality of the AIPW estimator can be demonstrated without them \citep{Laan2008TheCA,kato2020efficienttest,Kato2021adr}. 

The following part presents our recommendation rule.
In the recommendation phase of round $T$, for each $a\in[K]$, we estimate $\mu^a$ for each $a\in[K]$ and recommend the maximum. 
To estimate $\mu^a$, the AIPW estimator is defined as
\begin{align}
\label{eq:aipw}
&\widehat{\mu}^{\mathrm{AIPW}, a}_{T} =\frac{1}{T} \sum^T_{t=1}\varphi^a_t\Big(Y_t, A_t, X_t\Big),\\
&\varphi^a_t(Y_t, A_t, X_t) = \frac{\mathbbm{1}[A_t = a]\big(Y^a_{t}- \widehat{\mu}^a_{t}(X_t)\big)}{\widehat{w}_t(a| X_t)} + \widehat{\mu}^a_{t}(X_t).\nonumber
\end{align}
In the final round $t=T$, we recommend $\widehat{a}_T \in [K]$ as
\begin{align}
\label{eq:recommend}
\widehat{a}_T = \argmax_{a\in[K]} \widehat{\mu}^{\mathrm{AIPW}, a}_{T}.
\end{align}

The AIPW estimator debiases the sample selection bias resulting from arm draws based on contextual information. Additionally, the AIPW estimator possesses the following properties: (i) its components ${\varphi^a_t(Y_t, A_t, X_t)}^T_{t=1}$ are a martingale difference sequence, allowing us to employ the martingale limit theorems in derivation of the upper bound; (ii)it has the minimal asymptotic variance among the possible estimators. For example, other estimators with a martingale property, such as the inverse probability weighting (IPW) estimator, may be employed, yet their asymptotic variance would be greater than that of the AIPW estimator. The $t$-th element of the sum in the AIPW estimator utilizes nuisance parameters ($\mu^a(P)(x)$ and $w^*$) estimated from past observations up to round $t-1$ for constructing a martingale difference sequence \citep{Laan2008TheCA,hadad2019,kato2020efficienttest,Kato2021adr}. A pseudo-code for this process is provided in Algorithm~\ref{alg}.

The AS-AIPW strategy constitutes a generalization of the Neyman allocation \citep{Neyman1934OnTT}, which has been utilized for the efficient estimation of the ATE with two treatment arms \citep{Laan2008TheCA,Hahn2011,Meehan2022,kato2020efficienttest}\footnote{The AS-AIPW strategy is also similar to those proposed for efficient estimation of the ATE with multiple treatment arms \citep{Laan2008TheCA}.} and two-armed fixed-budget BAI without contextual information \citep{glynn2004large,Kaufman2016complexity,adusumilli2022minimax,Armstrong2022}. For two treatment arms, \citet{adusumilli2022minimax} demonstrates the minimax optimality for the expected simple regret under the limit-of-experiment framework, utilizing a diffusion process framework. \citet{Armstrong2022} also analyzes the minimax optimal strategy under the limit-of-experiment framework and establishes that the Neyman allocation is minimax optimal. \citet{glynn2004large} and \citet{Kaufman2016complexity} respectively illustrate the asymptotic optimality for each $P\in\mathcal{P}$ when the standard deviations of the potential outcomes are known. 

\citet{Kato2023fixedbudget} summarizes and introduces our result using the AS-AIPW strategy, focusing on the probability of misidentification. They simplify the proof and explain the details of the strategy. 

\begin{algorithm}[tb]
   \caption{AS-AIPW strategy.}
   \label{alg_rs_aipw}
\begin{algorithmic}
   \STATE {\bfseries Parameter:} Positive constants $C_{\mu}$ and $C_{\sigma^2}$.
   \STATE {\bfseries Initialization:} 
   \FOR{$t=1$ to $K$}
   \STATE Draw $A_t=t$. For each $a\in[K]$, set $\widehat{w}_{t}(a|x) = 1/K$.
   \ENDFOR
   \FOR{$t=K+1$ to $T$}
   \STATE Observe $X_t$. 
   \STATE Construct $\widehat{w}_{t}(1|X_t)$ by using the estimators of the variances.
   \STATE Draw $\gamma_t$ from the uniform distribution on $[0,1]$. 
   \STATE $A_t = 1$ if $\gamma_t \leq \widehat{w}_{t}(1|X_t)$ and $A_t = a$ for $a \geq 2$ if $\gamma_t \in \left(\sum^{a-1}_{b=1}\widehat{w}_{t}(b|X_t), \sum^a_{b=1}\widehat{w}_{t}(b|X_t)\right]$. 
   \ENDFOR
   \STATE Construct $\widehat{\mu}^{\mathrm{AIPW}, a}_{T}$ for each $a\in[K]$ following \eqref{eq:aipw}.
   \STATE Recommend $\widehat{a}_T$ following \eqref{eq:recommend}.
\end{algorithmic}
\end{algorithm}

\subsection{Asymptotic Minimax Optimality of the AS-AIPW Strategy}
We next present an asymptotic worst-case upper bound of our proposed AS-AIPW strategy. It is enough to provide the asymptotic normality. Under the asymptotic normality, the minimax optimality direct holds from the same procedure as that in the TS-HIR strategy. 

For $a,b\in[K]$, define\footnote{More rigorously, $\xi^{a, b}_t(P)$ should be denoted as double arrays such as $\xi^{a, b}_{Tt}(P)$ because they dependent on $T$. However, we omit the subscript $T$ for simplicity.}
\begin{align*}
&\xi^{a, b}_t(P) = \frac{\varphi^a_t\big(Y_t, A_t, X_t\big) - \varphi^{b}\big(Y_t, A_t, X_t\big) -  \Delta^{b}(P)}{\sqrt{T V^{a,b*}(P)}},
\end{align*}
where 
\begin{align*}
V^{a,b*}(P) = 
\mathbb{E}_P\Bigg[\frac{\left(\sigma^a(X)\right)^2}{w^*(a| X)} + \frac{\left(\sigma^b(X)\right)^2}{w^*(b| X)} + \left(\Delta^{a,b}(P)(X) - \Delta^{a,b}(P)\right)^2\Bigg].
\end{align*}
Here, note that $\{\xi^{a, b}_t(P)\}^T_{t=1}$ is a martingale difference sequence because $\mathbb{E}_P[\xi^{a, b}_t(P)|\mathcal{F}_{t-1}] = 0$ from 
\begin{align*}
    &\mathbb{E}_P\left[\frac{\mathbbm{1}[A_t = a]\big(Y^a_{t}- \widehat{\mu}^a_{t}(X_t)\big)}{\widehat{w}_t(a| X_t)} + \widehat{\mu}^a_{t}(X_t)|X_t, \mathcal{F}_{t-1}\right]
    =\frac{\widehat{w}_t(a| X_t)\big(\mu^a(X_t)(P)- \widehat{\mu}^a_{t}(X_t)\big)}{\widehat{w}_t(a| X_t)} + \widehat{\mu}^a_{t}(X_t) = \mu^a(X_t)(P),\\
    &\mathbb{E}_P[\xi^{a, b}_t(P)|\mathcal{F}_{t-1}] = \int\left\{\frac{\mu^a(x)(P) - \mu^b(x)(P) -  \Delta^{b}(P)}{\sqrt{T V^{a,b*}(P)}}\right\}\zeta(x)\mathrm{d}x = 0.
\end{align*}

We put the following assumption, which hold for a wide class of estimators \citep{yang2002,qian2016kernel}.
\begin{assumption}
\label{asm:almost_sure_convergence}
For all $P\in\mathcal{P}^*$, $a\in[K]$, as $t\to\infty$,
\begin{align*}
    \widehat{\mu}^a_{t}(x) - \mu^a(P)(x) = o_{P}(1),\ \widehat{\nu}^a_{t}(x) - \nu^a(P)(x) = o_{P}(1).
\end{align*}
\end{assumption}
Let $\widehat{\Delta}^{\mathrm{AIPW}, a, b}_{T} := \widehat{\mu}^{a, \mathrm{AIPW}}_{T} - \widehat{\mu}^{b, \mathrm{AIPW}}_{T}$. 
Then, by directly applying the martingale CLT, we obtain the following theorem.
\begin{theorem}
\label{cor:clt_upper}
Under Assumption~\ref{asm:almost_sure_convergence} and the AS-AIPW strategy,
\begin{align}
\label{eq:main_inequality2}
\sqrt{T}\left(\widehat{\Delta}^{\mathrm{AIPW}, a, b}_{T} - \Delta^{a,b}(P)\right) \xrightarrow{\mathrm{d}} \mathcal{N}\left(0, V^{a,b}(P)\right),
\end{align}
where recall that 
\[V^{a,b}(P) = \mathbb{E}^X\left[\frac{\left(\sigma^a(P)(X)\right)^2}{w^*(a|X)} + \frac{\left(\sigma^b(P)(X)\right)^2}{w^*(b|X)} + \left(\Delta^{a,b}(P)(X) - \Delta^{a,b}(P)\right)^2\right].\]
\end{theorem}

For the TS-HIR strategy, we prove the asymptotic optimality for the expected simple regret as follows:
\begin{enumerate}
    \item Confirm the asymptotic normality of the HIR estimator.
    \item Derive the upper bound for the probability of misidentification of the TS-HIR strategy by using the asymptotic normality of the HIR estimator.
    \item Derive the upper bound for the expected simple regret of the TS-HIR strategy for each $P\in\mathcal{P}^*$.
    \item Obtain the worst-case upper bound for the expected simple regret of the TS-HIR strategy.
\end{enumerate}
We show the worst-case asymptotic optimality for the expeted simple regret of the AS-AIPW strategy as well as the TS-HIR strategy.

Here, unlike the TS-HIR strategy, which requires empirical process technique to show the asymptotic normality \citep{hirano2003,Hahn2011}, we can skip the proof of the asymptotic normality to obtain the upper bound for the probability of misidentification of the AS-AIPW strategy. By employing martingale difference sequence arguments and the Chernoff bound, we can directly obtain the upper bound for the probability of misidentification, which greatly simplify the proof procedure. We show the proof in Appendix~\ref{appdx:direct_proof}. 

\subsection{Proof of Theorem~\ref{cor:clt_upper}}
\label{appdx:asymp_opt2}
Note that $\sum^T_{t=1}\xi^{a, b}_t(P) = \sqrt{T}\left(\widehat{\mu}^{a, \mathrm{AIPW}}_{T} - \widehat{\mu}^{b, \mathrm{AIPW}}_{T} -  \Delta^{a, b}(P)\right) / \sqrt{V^{a, b}(P)}$. 

To prove Theorem~\ref{cor:clt_upper}, we show the following lemma.
\begin{lemma}
\label{lem:cond:mart_clt}
Under Assumption~\ref{asm:almost_sure_convergence} and the AS-AIPW strategy, the following properties hold:
\begin{description}
\item[(a)] $\sum^T_{t=1}\mathbb{E}_P[(\xi^{a,b}_t(P))^2]\to 1$, a positive value; 
\item[(b)] $\mathbb{E}_P[|\sqrt{T}\xi^{a,b}_t(P)|^r] < \infty$ for some $r>2$ and for all $t\in\mathbb{N}$;
\item[(c)] $\sum^{T}_{t=1}(\xi^{a,b}_t(P))^2\xrightarrow{p}1$. 
\end{description}
\end{lemma}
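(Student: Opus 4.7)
Setting $\eta_t := \varphi^a_t(Y_t,A_t,X_t) - \varphi^b_t(Y_t,A_t,X_t) - \Delta^{a,b}(P)$, we have $\xi^{a,b}_t(P) = \eta_t/\sqrt{TV^{a,b*}(P)}$, and $\{\eta_t\}$ is an $\{\mathcal{F}_t\}$-martingale difference sequence (as already noted just before the statement). The approach is to derive an explicit expression for $\mathbb{E}_P[\eta_t^2\mid\mathcal{F}_{t-1}]$ in terms of the nuisance estimates $(\hat{w}_t,\hat{\mu}_t)$, show this expression converges in probability to the deterministic limit $V^{a,b*}(P)$ via Assumption~\ref{asm:almost_sure_convergence}, and then deduce conditions (a)--(c) in turn.

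\emph{Conditional second-moment identity.} Conditional on $(X_t,\mathcal{F}_{t-1})$, the AIPW pseudo-outcome $\varphi^a_t$ has conditional mean $\mu^a(P)(X_t)$, and the cross term $\mathbb{E}_P[\varphi^a_t\varphi^b_t\mid X_t,\mathcal{F}_{t-1}]$ simplifies substantially because the indicator product $\mathbbm{1}[A_t=a]\mathbbm{1}[A_t=b]$ vanishes for $a\neq b$. A direct calculation using $\mathbb{E}_P[\eta_t^2\mid\mathcal{F}_{t-1}] = \mathbb{E}_{X_t}\!\left[\operatorname{Var}_P(\varphi^a_t-\varphi^b_t\mid X_t,\mathcal{F}_{t-1}) + (\Delta^{a,b}(P)(X_t) - \Delta^{a,b}(P))^2\right]$ then yields
\begin{equation*}
\mathbb{E}_P[\eta_t^2\mid\mathcal{F}_{t-1}] = \mathbb{E}_P\!\left[\tfrac{(\sigma^a(X))^2}{\hat{w}_t(a\mid X)} + \tfrac{(\sigma^b(X))^2}{\hat{w}_t(b\mid X)} + (\Delta^{a,b}(P)(X) - \Delta^{a,b}(P))^2 + R_t(X)\,\Big|\,\mathcal{F}_{t-1}\right],
\end{equation*}
where the remainder $R_t(X)$ is a polynomial in $\{\mu^c(P)(X)-\hat{\mu}^c_t(X)\}_{c\in\{a,b\}}$ and $\{1/\hat{w}_t(c\mid X)-1\}_{c\in\{a,b\}}$ that vanishes whenever $\hat{\mu}^c_t\to\mu^c(P)$ and $\hat{w}_t\to w^*$. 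By Assumption~\ref{asm:almost_sure_convergence}, the corresponding convergence $\hat{w}_t(a\mid X)\xrightarrow{p}w^*(a\mid X)$ holds because $\hat{w}_t$ is a continuous function of $\hat{\sigma}^a_t$, which converges to $\sigma^a$ in probability. The boundedness stipulations in Definition~\ref{def:ls_bc}(iv) together with the clipping $\hat{w}_t\in[c,1-c]$ ensure all expressions inside the expectation are uniformly bounded, so bounded convergence gives $\mathbb{E}_P[\eta_t^2\mid\mathcal{F}_{t-1}]\xrightarrow{p}V^{a,b*}(P)$ and also $\mathbb{E}_P[\eta_t^2]\to V^{a,b*}(P)$.

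\emph{Conditions (a) and (b).} For (a), Cesàro summation applied to the deterministic sequence $\{\mathbb{E}_P[\eta_t^2]\}$ gives $\sum_{t=1}^T\mathbb{E}_P[(\xi^{a,b}_t(P))^2] = \frac{1}{TV^{a,b*}(P)}\sum_{t=1}^T\mathbb{E}_P[\eta_t^2]\to 1$. For (b), Lemma~\ref{lmm:subgaussian} asserts $\mathbb{E}_P[\exp((\sqrt{T}\xi^{a,b}_t(P))^2)]<M$ uniformly in $t$ and $P$. Exponential integrability trivially yields a uniform bound on $\mathbb{E}_P[|\sqrt{T}\xi^{a,b}_t(P)|^r]$ for every $r\geq 1$; in particular (b) holds for any fixed $r>2$ and all $t\in\mathbb{N}$.

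\emph{Condition (c) and main obstacle.} We decompose
\begin{equation*}
\sum_{t=1}^T(\xi^{a,b}_t(P))^2 = \tfrac{1}{TV^{a,b*}(P)}\!\left(\sum_{t=1}^T\big(\eta_t^2 - \mathbb{E}_P[\eta_t^2\mid\mathcal{F}_{t-1}]\big) + \sum_{t=1}^T\mathbb{E}_P[\eta_t^2\mid\mathcal{F}_{t-1}]\right).
\end{equation*}
The first sum is an $\{\mathcal{F}_t\}$-martingale; applying Proposition~\ref{prp:mrtgl_WLLN} with $b_T=T$, where the truncation tail, conditional-mean, and truncated-variance conditions reduce to uniform control of the second and fourth moments of $\eta_t$ supplied by Lemma~\ref{lmm:subgaussian}, shows that $T^{-1}$ times this sum is $o_P(1)$. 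For the second sum, the in-probability convergence $\mathbb{E}_P[\eta_t^2\mid\mathcal{F}_{t-1}]\xrightarrow{p}V^{a,b*}(P)$ combined with uniform boundedness implies $\mathbb{E}_P\big[|\mathbb{E}_P[\eta_t^2\mid\mathcal{F}_{t-1}]-V^{a,b*}(P)|\big]\to 0$, after which a Cesàro argument in $L^1$ followed by Markov's inequality delivers $\frac{1}{T}\sum_{t=1}^T\mathbb{E}_P[\eta_t^2\mid\mathcal{F}_{t-1}]\xrightarrow{p}V^{a,b*}(P)$. Summing and dividing by $V^{a,b*}(P)$ yields (c). The main technical obstacle is the second sum, because $\mathbb{E}_P[\eta_t^2\mid\mathcal{F}_{t-1}]$ is itself a data-dependent random variable (through $\hat{w}_t$ and $\hat{\mu}_t$), and we must translate the pointwise almost-sure convergence of the nuisance estimators into an averaged-in-$t$, in-probability statement by combining Assumption~\ref{asm:almost_sure_convergence} with the boundedness conditions of Definition~\ref{def:ls_bc}(iv) to invoke dominated convergence.
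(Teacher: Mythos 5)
Your proposal is correct and follows essentially the same route as the paper's proof: expand the (conditional) second moment of the AIPW difference using the orthogonality of the arm indicators, bound the discrepancy from $V^{a,b*}(P)$ by the nuisance estimation errors and pass to the limit via uniform boundedness, uniform integrability and dominated convergence over $X$ (conditions (a) and the second half of (c)); obtain (b) from the sub-Gaussianity in Lemma~\ref{lmm:subgaussian}; and prove (c) by the same decomposition into the martingale difference $(\xi^{a,b}_t)^2-\mathbb{E}_P[(\xi^{a,b}_t)^2\mid\mathcal{F}_{t-1}]$ handled by Proposition~\ref{prp:mrtgl_WLLN} plus an $L^1$--Cesàro--Markov argument for the compensator. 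The only cosmetic difference is that you organize the second-moment computation through a conditional-variance identity while the paper expands the square term by term.
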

Proof of this lemma is shown in Appendix~\ref{appdx:proof:lem:cond:mart_clt}. Then, we prove Theorem~\ref{cor:clt_upper}.
\begin{proof}
The second statement directly holds from  and large deviation bound. Therefore, we focus on the proof of the first statement, $\mathbb{P}\left(\widehat{\mu}^{a, \mathrm{AIPW}}_{T} - \widehat{\mu}^{c, \mathrm{AIPW}}_{T} \leq 0\right) - \exp\left(- \frac{T\left(\Delta^{a, b}(P)\right)^2}{V^{a,b*}(P)} \right) \leq o(1)$. This inequality follows from the martingale CLT of \citet{White1994} (Proposition~\ref{prp:marclt}) on $\sum^T_{t=1}\xi^{a,b}_t(P)$ because
\begin{align*}
&\mathbb{P}\left(\widehat{\mu}^{a, \mathrm{AIPW}}_{T} - \widehat{\mu}^{b, \mathrm{AIPW}}_{T} \leq 0\right) = \mathbb{P}\left(\sqrt{T}\left(\widehat{\mu}^{a, \mathrm{AIPW}}_{T} - \widehat{\mu}^{b, \mathrm{AIPW}}_{T}\right) - \sqrt{T}\Delta^{a,b}(P)\leq - \sqrt{T}\Delta^{a,b}(P)\right)\\
&=\mathbb{P}\left(\sqrt{\frac{T}{V^{a,b*}(P)}}\left(\widehat{\mu}^{a, \mathrm{AIPW}}_{T} - \widehat{\mu}^{b, \mathrm{AIPW}}_{T} - \Delta^{a, b}(P)\right) \leq -\sqrt{\frac{T}{V^{a,b*}(P)}} \Delta^{a, b}(P)\right)\\
&=\mathbb{P}\left(\sum^T_{t=1}\xi^{a,b}_t(P) \leq -\sqrt{\frac{T}{V^{a,b*}(P)}} \Delta^{a, b}(P)\right).
\end{align*}
Thus, we are interested in $=\mathbb{P}\left(\sum^T_{t=1}\xi^{a,b}_t(P) \leq -\sqrt{\frac{T}{V^{a,b*}(P)}} \Delta^{a, b}(P)\right)$ and show the bound by using the martingale CLT.

Under the following three conditions, we can apply the martingale CLT,
\begin{description}
\item[(a)] $\sum^T_{t=1}\mathbb{E}_P[(\xi^{a,b}_t(P))^2]\to 1$, a positive value; 
\item[(b)] $\mathbb{E}_P[|\sqrt{T}\xi^{a,b}_t(P)|^r] < \infty$ for some $r>2$ and for all $t\in\mathbb{N}$;
\item[(c)] $\sum^{T}_{t=1}(\xi^{a,b}_t(P))^2\xrightarrow{p}1$. 
\end{description}

By using the martingale CLT, as $T\to \infty$,
\begin{align*} 
\sum^T_{t=1}\xi^{a,b}_t(P)\xrightarrow{d}\mathcal{N}\left(0, 1\right).
\end{align*}
\end{proof}

\subsection{Proof of Lemma~\ref{lem:cond:mart_clt}}
\label{appdx:proof:lem:cond:mart_clt}
We prove Lemma~\ref{lem:cond:mart_clt}. Our proof is inspired by \citet{kato2020efficienttest}. 

\begin{proof}
Recall that 
\begin{align*}
    &\varphi^a_t(Y_t, A_t, X_t) = \frac{\mathbbm{1}[A_t = a]\big(Y^a_{t}- \widehat{\mu}^a_{t}(X_t)\big)}{\widehat{w}_t(a| X_t)} + \widehat{\mu}^a_{t}(X_t)\\
    &\xi^{a, b}_t(P) = \frac{\varphi^a_t\big(Y_t, A_t, X_t\big) - \varphi^{a,b}\big(Y_t, A_t, X_t\big) -  \Delta^{b}(P)}{\sqrt{T V^{a,b*}(P)}}.
\end{align*}

\subsection*{Step~1: Check of Condition~(a)}
Because $\sqrt{T V^{a,b*}(P)}$ is non-random variable, we consider the conditional expectation of $\varphi^a_t\big(Y_t, A_t, X_t\big) - \varphi^{a,b}\big(Y_t, A_t, X_t\big) -  \Delta^{a,b}(P)$.

First, we rewrite $\mathbb{E}_P\big[(\xi^{a,b}_t(P))^2\big]$ as
\begin{align*}
&\mathbb{E}_P\big[(\xi^{a,b}_t(P))^2\big]\\
&=\frac{1}{T V^{a,b*}(P)}\mathbb{E}_P\left[\left(\frac{\mathbbm{1}[A_t=a]\big(Y_t - \widehat{\mu}^a_{t}(X_t)\big)}{\widehat{w}_t(a| X_t)} - \frac{\mathbbm{1}[A_t=b]\big(Y_t - \widehat{\mu}^b_{t}(X_t)\big)}{\widehat{w}_t(b| X_t)} +  \widehat{\mu}^a_{t}(X_t) - \widehat{\mu}^b_{t}(X_t) - \Delta^{a,b}(P)\right)^2\right]\\
&=\frac{1}{T V^{a,b*}(P)}\mathbb{E}_P\left[\left(\frac{\mathbbm{1}[A_t=a]\big(Y^a_t - \widehat{\mu}^a_{t}(X_t)\big)}{\widehat{w}_t(a| X_t)} - \frac{\mathbbm{1}[A_t=b]\big(Y^b_t - \widehat{\mu}^b_{t}(X_t)\big)}{\widehat{w}_t(b| X_t)} +  \widehat{\mu}^a_{t}(X_t) - \widehat{\mu}^b_{t}(X_t) - \Delta^{a,b}(P)\right)^2\right]\\
&=\frac{1}{T V^{a,b*}(P)}\mathbb{E}_P\left[\left(\frac{\mathbbm{1}[A_t=a]\big(Y^a_t - \widehat{\mu}^a_{t}(X_t)\big)}{\widehat{w}_t(a| X_t)} - \frac{\mathbbm{1}[A_t=b]\big(Y^b_t - \widehat{\mu}^b_{t}(X_t)\big)}{\widehat{w}_t(b| X_t)} +  \widehat{\mu}^a_{t}(X_t) - \widehat{\mu}^b_{t}(X_t) - \Delta^{a,b}(P)\right)^2\right]\\
&\ \ \ -\frac{1}{T V^{a,b*}(P)}\mathbb{E}_{P}\Bigg[\frac{\big(Y^{a}_t- \mu^a(P)(X_t)\big)^2}{w^*(a|X_t)} + \frac{\big(Y^{b}_t- \mu^b(P)(X_t)\big)^2}{w^*(b|X_t)} + \Big(\mu^a(P)(X_t) - \mu^b(P)(X_t) - \Delta^{a,b}(P)\Big)^2\Bigg]\\
&\ \ \ + \frac{1}{T V^{a,b*}(P)}\mathbb{E}_{P}\Bigg[\frac{\big(Y^{a}_t- \mu^a(P)(X_t)\big)^2}{w^*(a|X_t)} + \frac{\big(Y^{b}_t- \mu^b(P)(X_t)\big)^2}{w^*(b|X_t)} + \Big(\mu^a(P)(X_t) - \mu^b(P)(X_t) - \Delta^{a,b}(P)\Big)^2\Bigg].
\end{align*}
Therefore, we prove that the RHS of the following equation varnishes asymptotically to show that the condition~(a) holds.
\begin{align}
\label{eq:1}
&\mathbb{E}_P\big[(\xi^{a,b}_t(P))^2\big] - \frac{1}{T V^{a,b*}(P)}\mathbb{E}_{P}\Bigg[\frac{\big(Y^{a}_t- \mu^a(P)(X_t)\big)^2}{w^*(a|X_t)} + \frac{\big(Y^{b}_t- \mu^b(P)(X_t)\big)^2}{w^*(b|X_t)} + \Big(\mu^a(P)(X_t) - \mu^b(P)(X_t) - \Delta^{a,b}(P)\Big)^2\Bigg]\nonumber\\
&=\frac{1}{T V^{a,b*}(P)}\mathbb{E}_P\left[\left(\frac{\mathbbm{1}[A_t=a]\big(Y^a_t - \widehat{\mu}^a_{t}(X_t)\big)}{\widehat{w}_t(a| X_t)} - \frac{\mathbbm{1}[A_t=b]\big(Y^b_t - \widehat{\mu}^b_{t}(X_t)\big)}{\widehat{w}_t(b| X_t)} +  \widehat{\mu}^a_{t}(X_t) - \widehat{\mu}^b_{t}(X_t) - \Delta^{a,b}(P)\right)^2\right]\nonumber\\
&\ \ \ -\frac{1}{T V^{a,b*}(P)}\mathbb{E}_{P}\Bigg[\frac{\big(Y^{a}_t- \mu^a(P)(X_t)\big)^2}{w^*(a|X_t)} + \frac{\big(Y^{b}_t- \mu^b(P)(X_t)\big)^2}{w^*(b|X_t)} + \Big(\mu^a(P)(X_t) - \mu^b(P)(X_t) - \Delta^{a,b}(P)\Big)^2\Bigg].
\end{align}

For simplicity, we omit $\frac{1}{T V^{a,b*}(P)}$. For the first term of the RHS of \eqref{eq:1},
\begin{align*}
&\mathbb{E}_P\left[\left(\frac{\mathbbm{1}[A_t=a]\big(Y^a_t - \widehat{\mu}^a_{t}(X_t)\big)}{\widehat{w}_t(a| X_t)} - \frac{\mathbbm{1}[A_t=b]\big(Y^b_t - \widehat{\mu}^b_{t}(X_t)\big)}{\widehat{w}_t(b| X_t)} +  \widehat{\mu}^a_{t}(X_t) - \widehat{\mu}^b_{t}(X_t) - \Delta^{a,b}(P)\right)^2\right]\\
&=\mathbb{E}_P\left[\left(\frac{\mathbbm{1}[A_t=a]\big(Y^a_t - \widehat{\mu}^a_{t}(X_t)\big)}{\widehat{w}_t(a| X_t)}\right)^2\right]\\
&\ \ \ + \mathbb{E}_P\left[\left(\frac{\mathbbm{1}[A_t=b]\big(Y^b_t - \widehat{\mu}^b_{t}(X_t)\big)}{\widehat{w}_t(b| X_t)}\right)^2\right]\\
&\ \ \ +  \mathbb{E}_P\left[\left(\widehat{\mu}^a_{t}(X_t) - \widehat{\mu}^b_{t}(X_t) - \Delta^{a,b}(P)\right)^2\right]\\
&\ \ \ -2\mathbb{E}_P\left[\left(\frac{\mathbbm{1}[A_t=a]\big(Y^a_t - \widehat{\mu}^a_{t}(X_t)\big)}{\widehat{w}_t(a| X_t)}\right)\left(\frac{\mathbbm{1}[A_t=b]\big(Y^b_t - \widehat{\mu}^b_{t}(X_t)\big)}{\widehat{w}_t(b| X_t)}\right)\right]\\
&\ \ \ +2\mathbb{E}_P\left[\left(\frac{\mathbbm{1}[A_t=a]\big(Y^a_t - \widehat{\mu}^a_{t}(X_t)\big)}{\widehat{w}_t(a| X_t)}\right)\left(\widehat{\mu}^a_{t}(X_t) - \widehat{\mu}^b_{t}(X_t) - \Delta^{a,b}(P)\right)\right]\\
&\ \ \ -2\mathbb{E}_P\left[\left(\frac{\mathbbm{1}[A_t=b]\big(Y^b_t - \widehat{\mu}^b_{t}(X_t)\big)}{\widehat{w}_t(b| X_t)}\right)\left(\widehat{\mu}^a_{t}(X_t) - \widehat{\mu}^b_{t}(X_t) - \Delta^{a,b}(P)\right)\right].
\end{align*}
Because $\mathbbm{1}[A_t=a]\mathbbm{1}[A_t=b] = 0$, $\mathbbm{1}[A_t=k]\mathbbm{1}[A_t=k] = \mathbbm{1}[A_t=k]$ for $k\in\{a, b\}$, we have
\begin{align*}
&\mathbb{E}_P\left[\left(\frac{\mathbbm{1}[A_t=k]\big(Y^k_t - \widehat{\mu}^k_{t}(X_t)\big)}{\widehat{w}_t(k| X_t)}\right)^2\right]=\mathbb{E}_P\left[\frac{\big(Y^k_t -  \widehat{\mu}^k_{t}(X_t)\big)^2}{\widehat{w}_t(k| X_t)}\right],\\
&\mathbb{E}_P\left[\left(\frac{\mathbbm{1}[A_t=a]\big(Y^a_t - \widehat{\mu}^a_{t}(X_t)\big)}{\widehat{w}_t(a| X_t)}\right)\left(\frac{\mathbbm{1}[A_t=b]\big(Y^b_t - \widehat{\mu}^b_{t}(X_t)\big)}{\widehat{w}_t(b| X_t)}\right)\right]=0,\\
&\mathbb{E}_P\left[\left(\frac{\mathbbm{1}[A_t=a]\big(Y_t - \widehat{\mu}^a_{t}(X_t)\big)}{\widehat{w}_t(a| X_t)} - \frac{\mathbbm{1}[A_t=b]\big(Y_t - \widehat{\mu}^b_{t}(X_t)\big)}{\widehat{w}_t(b| X_t)}\right)\left(\widehat{\mu}^a_{t}(X_t) - \widehat{\mu}^b_{t}(X_t) - \Delta^{a,b}(P)\right)\right]\\
&=\mathbb{E}_P\left[\mathbb{E}_P\left[\frac{\mathbbm{1}[A_t=a]\big(Y_t - \widehat{\mu}^a_{t}(X_t)\big)}{\widehat{w}_t(a| X_t)} - \frac{\mathbbm{1}[A_t=b]\big(Y_t - \widehat{\mu}^b_{t}(X_t)\big)}{\widehat{w}_t(b| X_t)}\mid X_t, \mathcal{F}_{t-1}\right]\left(\widehat{\mu}^a_{t}(X_t) - \widehat{\mu}^b_{t}(X_t) - \Delta^{a,b}(P)\right)\right]\\
&=\mathbb{E}_P\left[\left(\mu^a(P)(X_t) - \mu^b(P)(X_t) - \widehat{\mu}^a_{t}(X_t) + \widehat{\mu}^b_{t}(X_t)\right)\left(\widehat{\mu}^a_{t}(X_t) - \widehat{\mu}^b_{t}(X_t) - \Delta^{a,b}(P)\right)\right].
\end{align*}

Therefore, for the first term of the RHS of \eqref{eq:1},
\begin{align*}
&\mathbb{E}_P\left[\left(\frac{\mathbbm{1}[A_t=a]\big(Y^a_t - \widehat{\mu}^a_{t}(X_t)\big)}{\widehat{w}_t(a| X_t)} - \frac{\mathbbm{1}[A_t=b]\big(Y^b_t - \widehat{\mu}^b_{t}(X_t)\big)}{\widehat{w}_t(b| X_t)} +  \widehat{\mu}^a_{t}(X_t) - \widehat{\mu}^b_{t}(X_t) - \Delta^{a,b}(P)\right)^2\right]\\
&= \mathbb{E}_P\Bigg[\frac{\big(Y^a_t - \widehat{\mu}^a_{t}(X_t)\big)^2}{\widehat{w}_t(a| X_t)} +\frac{\big(Y^b_t - \widehat{\mu}^b_{t}(X_t)\big)^2}{\widehat{w}_t(b| X_t)} +  \Big(\widehat{\mu}^a_{t}(X_t) - \widehat{\mu}^b_{t}(X_t) - \Delta^{a,b}(P)\Big)^2\\
&\ \ \ + 2\left(\mu^a(P)(X_t) - \mu^b(P)(X_t) - \widehat{\mu}^a_{t}(X_t) + \widehat{\mu}^b_{t}(X_t)\right)\left(\widehat{\mu}^a_{t}(X_t) - \widehat{\mu}^b_{t}(X_t) - \Delta^{a,b}(P)\right)\Bigg].
\end{align*}
Then, using these equations, the RHS of \eqref{eq:1} can be calculated as 
\begin{align*}
&\frac{1}{T V^{a,b*}(P)}\mathbb{E}_P\left[\left(\frac{\mathbbm{1}[A_t=a]\big(Y^a_t - \widehat{\mu}^a_{t}(X_t)\big)}{\widehat{w}_t(a| X_t)} - \frac{\mathbbm{1}[A_t=b]\big(Y^b_t - \widehat{\mu}^b_{t}(X_t)\big)}{\widehat{w}_t(b| X_t)} +  \widehat{\mu}^a_{t}(X_t) - \widehat{\mu}^b_{t}(X_t) - \Delta^{a,b}(P)\right)^2\right]\nonumber\\
&\ \ \ -\frac{1}{T V^{a,b*}(P)}\mathbb{E}_{P}\Bigg[\frac{\big(Y^{a}_t- \mu^a(P)(X_t)\big)^2}{w^*(a|X_t)} + \frac{\big(Y^{b}_t- \mu^b(P)(X_t)\big)^2}{w^*(b|X_t)} + \Big(\mu^a(P)(X_t) - \mu^b(P)(X_t) - \Delta^{a,b}(P)\Big)^2\Bigg]\\
&=\frac{1}{T V^{a,b*}(P)}\mathbb{E}_P\Bigg[\frac{\big(Y^a_t - \widehat{\mu}^a_{t}(X_t)\big)^2}{\widehat{w}_t(a| X_t)} +\frac{\big(Y^b_t - \widehat{\mu}^b_{t}(X_t)\big)^2}{\widehat{w}_t(b| X_t)} +  \Big(\widehat{\mu}^a_{t}(X_t) - \widehat{\mu}^b_{t}(X_t) - \Delta^{a,b}(P)\Big)^2\\
&\ \ \ + 2\left(\mu^a(P)(X_t) - \mu^b(P)(X_t) - \widehat{\mu}^a_{t}(X_t) + \widehat{\mu}^b_{t}(X_t)\right)\left(\widehat{\mu}^a_{t}(X_t) - \widehat{\mu}^b_{t}(X_t) - \Delta^{a,b}(P)\right)\Bigg]\\
&\ \ \ -\frac{1}{T V^{a,b*}(P)}\mathbb{E}_P\Bigg[\frac{\big(Y^a_t - \mu^a(P)(X_t)\big)^2}{w^*(a|X_t)} +\frac{\big(Y^b_t - \mu^b(P)(X_t)\big)^2}{w^*(b|X_t)} +  \Big(\mu^a(P)(X_t) - \mu^b(P)(X_t) - \Delta^{a,b}(P)\Big)^2\Bigg].
\end{align*}
By taking the absolute value, we can bound the RHS as 
\begin{align*}
&\frac{1}{T V^{a,b*}(P)}\mathbb{E}_P\Bigg[\frac{\big(Y^a_t - \widehat{\mu}^a_{t}(X_t)\big)^2}{\widehat{w}_t(a| X_t)} +\frac{\big(Y^b_t - \widehat{\mu}^b_{t}(X_t)\big)^2}{\widehat{w}_t(b| X_t)} +  \Big(\widehat{\mu}^a_{t}(X_t) - \widehat{\mu}^b_{t}(X_t) - \Delta^{a,b}(P)\Big)^2\\
&\ \ \ + 2\left(\mu^a(P)(X_t) - \mu^b(P)(X_t) - \widehat{\mu}^a_{t}(X_t) + \widehat{\mu}^b_{t}(X_t)\right)\left(\widehat{\mu}^a_{t}(X_t) - \widehat{\mu}^b_{t}(X_t) - \Delta^{a,b}(P)\right)\Bigg]\\
&\ \ \ -\frac{1}{T V^{a,b*}(P)}\mathbb{E}_P\Bigg[\frac{\big(Y^a_t - \mu^a(P)(X_t)\big)^2}{w^*(a|X_t)} +\frac{\big(Y^b_t - \mu^b(P)(X_t)\big)^2}{w^*(b|X_t)} +  \Big(\mu^a(P)(X_t) - \mu^b(P)(X_t) - \Delta^{a,b}(P)\Big)^2\Bigg]\\
&\leq \frac{1}{T V^{a,b*}(P)}\mathbb{E}_P\Bigg[\Bigg|\Bigg\{\frac{\big(Y^a_t - \widehat{\mu}^a_{t}(X_t)\big)^2}{\widehat{w}_t(a| X_t)} +\frac{\big(Y^b_t - \widehat{\mu}^b_{t}(X_t)\big)^2}{\widehat{w}_t(b| X_t)} +  \Big(\widehat{\mu}^a_{t}(X_t) - \widehat{\mu}^b_{t}(X_t) - \Delta^{a,b}(P)\Big)^2\\
&\ \ \ + 2\left(\mu^a(P)(X_t) - \mu^b(P)(X_t) - \widehat{\mu}^a_{t}(X_t) + \widehat{\mu}^b_{t}(X_t)\right)\left(\widehat{\mu}^a_{t}(X_t) - \widehat{\mu}^b_{t}(X_t) - \Delta^{a,b}(P)\right)\Bigg\}\\
&\ \ \ -\Bigg\{\frac{\big(Y^a_t - \mu^a(P)(X_t)\big)^2}{w^*(a|X_t)} +\frac{\big(Y^b_t - \mu^b(P)(X_t)\big)^2}{w^*(b|X_t)} +  \Big(\mu^a(P)(X_t) - \mu^b(P)(X_t) - \Delta^{a,b}(P)\Big)^2\Bigg\}\Bigg|\Bigg].
\end{align*}
Then, from the triangle inequality, we have
\begin{align*}
&\mathbb{E}_P\Bigg[\Bigg|\Bigg\{\frac{\big(Y^a_t - \widehat{\mu}^a_{t}(X_t)\big)^2}{\widehat{w}_t(a| X_t)} +\frac{\big(Y^b_t - \widehat{\mu}^b_{t}(X_t)\big)^2}{\widehat{w}_t(b| X_t)} +  \Big(\widehat{\mu}^a_{t}(X_t) - \widehat{\mu}^b_{t}(X_t) - \Delta^{a,b}(P)\Big)^2\\
&\ \ \ + 2\left(\mu^a(P)(X_t) - \mu^b(P)(X_t) - \widehat{\mu}^a_{t}(X_t) + \widehat{\mu}^b_{t}(X_t)\right)\left(\widehat{\mu}^a_{t}(X_t) - \widehat{\mu}^b_{t}(X_t) - \Delta^{a,b}(P)\right)\Bigg\}\\
&\ \ \ -\Bigg\{\frac{\big(Y^a_t - \mu^a(P)(X_t)\big)^2}{w^*(a|X_t)} +\frac{\big(Y^b_t - \mu^b(P)(X_t)\big)^2}{w^*(b|X_t)} +  \Big(\mu^a(P)(X_t) - \mu^b(P)(X_t) - \Delta^{a,b}(P)\Big)^2\Bigg\}\Bigg|\Bigg]\\
&\leq \mathbb{E}_P\Bigg[\Bigg|\frac{\big(Y^a_t - \widehat{\mu}^a_{t}(X_t)\big)^2}{\widehat{w}_t(a| X_t)} - \frac{\big(Y^a_t - \mu^a(P)(X_t)\big)^2}{w^*(a|X_t)}\Bigg|\Bigg] + \mathbb{E}_P\Bigg[\Bigg|\frac{\big(Y^b_t - \widehat{\mu}^b_{t}(X_t)\big)^2}{\widehat{w}_t(b| X_t)} - \frac{\big(Y^b_t - \mu^b(P)(X_t)\big)^2}{w^*(b|X_t)}\Bigg|\Bigg]\\
&\ \ \ + \mathbb{E}_P\Bigg[\Bigg|\Big(\widehat{\mu}^a_{t}(X_t) - \widehat{\mu}^b_{t}(X_t) - \Delta^{a,b}(P)\Big)^2 - \Big(\mu^a(P)(X_t) - \mu^b(P)(X_t) - \Delta^{a,b}(P)\Big)^2\Bigg|\Bigg]\\
&\ \ \ + 2\mathbb{E}_P\Bigg[\Bigg|\left(\mu^a(P)(X_t) - \mu^b(P)(X_t) - \widehat{\mu}^a_{t}(X_t) + \widehat{\mu}^b_{t}(X_t)\right)\left(\widehat{\mu}^a_{t}(X_t) - \widehat{\mu}^b_{t}(X_t) - \Delta^{a,b}(P)\right)\Bigg|\Bigg].
\end{align*}

Because all elements are assumed to be bounded and $b_1^2 - b_2^2 = (b_1+b_2)(b_1-b_2)$ for variables $b_1$ and $b_2$, there exist constants $\tilde C_0$, $\tilde C_1$, $\tilde C_2$, and $\tilde C_3$ such that
\begin{align*}
& \sum_{k\in\{a, b\}}\mathbb{E}_P\Bigg[\Bigg|\frac{\big(Y^k_t - \widehat{\mu}^k_{t}(X_t)\big)^2}{\widehat{w}_t(k| X_t)} - \frac{\big(Y^k_t - \mu^k(P)(X_t)\big)^2}{w^*(k|X_t)}\Bigg|\Bigg]\\
&\ \ \ + \mathbb{E}_P\Bigg[\Bigg|\Big(\widehat{\mu}^a_{t}(X_t) - \widehat{\mu}^b_{t}(X_t) - \Delta^{a,b}(P)\Big)^2 - \Big(\mu^a(P)(X_t) - \mu^b(P)(X_t) - \Delta^{a,b}(P)\Big)^2\Bigg|\Bigg]\\
&\ \ \ + 2\mathbb{E}_P\Bigg[\Bigg|\left(\mu^a(P)(X_t) - \mu^b(P)(X_t) - \widehat{\mu}^a_{t}(X_t) + \widehat{\mu}^b_{t}(X_t)\right)\left(\widehat{\mu}^a_{t}(X_t) - \widehat{\mu}^b_{t}(X_t) - \Delta^{a,b}(P)\right)\Bigg|\Bigg]\\
& \leq \tilde C_0\sum_{k\in\{a, b\}}\mathbb{E}_P\Bigg[\Bigg|\frac{\big(Y^k_t - \widehat{\mu}^k_{t}(X_t)\big)}{\sqrt{\widehat{w}_t(k| X_t)}} - \frac{\big(Y^k_t -  \mu^k(P)(X_t)\big)}{\sqrt{w^*(k|X_t)}}\Bigg|\Bigg]\\
&\ \ \ + \mathbb{E}_P\Bigg[\Bigg|\Big(\widehat{\mu}^a_{t}(X_t) - \widehat{\mu}^b_{t}(X_t) - \Delta^{a,b}(P)\Big)^2 - \Big(\mu^a(P)(X_t) - \mu^b(P)(X_t) - \Delta^{a,b}(P)\Big)^2\Bigg|\Bigg]\\
&\ \ \ + 2\mathbb{E}_P\Bigg[\Bigg|\left(\mu^a(P)(X_t) - \mu^b(P)(X_t) - \widehat{\mu}^a_{t}(X_t) + \widehat{\mu}^b_{t}(X_t)\right)\left(\widehat{\mu}^a_{t}(X_t) - \widehat{\mu}^b_{t}(X_t) - \Delta^{a,b}(P)\right)\Bigg|\Bigg]\\
& \leq \tilde C_1\sum_{k\in\{a, b\}}\mathbb{E}_P\Bigg[\Bigg|\sqrt{w^*(k|X_t)}\big(Y_t - \widehat{\mu}^k_{t}(X_t)\big) - \sqrt{\widehat{w}_t(k| X_t)}\big(Y_t - \mu^k(P)(X_t)\big)\Bigg|\Bigg]\\
&\ \ \ + \mathbb{E}_P\Bigg[\Bigg|\Big(\widehat{\mu}^a_{t}(X_t) - \widehat{\mu}^b_{t}(X_t) - \Delta^{a,b}(P)\Big)^2 - \Big(\mu^a(P)(X_t) - \mu^b(P)(X_t) - \Delta^{a,b}(P)\Big)^2\Bigg|\Bigg]\\
&\ \ \ + 2\mathbb{E}_P\Bigg[\Bigg|\left(\mu^a(P)(X_t) - \mu^b(P)(X_t) - \widehat{\mu}^a_{t}(X_t) + \widehat{\mu}^b_{t}(X_t)\right)\left(\widehat{\mu}^a_{t}(X_t) - \widehat{\mu}^b_{t}(X_t) - \Delta^{a,b}(P)\right)\Bigg|\Bigg]\\
&\leq \widetilde{C}_1\sum^1_{k=0}\mathbb{E}_P\left[\Big| \sqrt{w^*(k|X_t)}\widehat{\mu}^k_{t}(X_t) - \sqrt{\widehat{w}_t(k| X_t)}\mu^k(P)(X_t) \Big|\right]\\
&\ \ \ + \widetilde{C}_2 \sum^1_{k=0}\mathbb{E}_P\left[\Big| \sqrt{w^*(k|X_t)} - \sqrt{\widehat{w}_t(k| X_t)} \Big|\right] + \widetilde{C}_3 \sum^1_{k=0}\mathbb{E}_P\left[\Big| \widehat{\mu}^k_{t}(X_t) - \mu^k(P)(X_t) \Big|\right].
\end{align*}
Then, from $b_1b_2 - b_3b_4 = (b_1 - b_3)b_4 - (b_4 - b_2)b_1$ for variables $b_1$, $b_2$, $b_3$, and $b_4$, there exist $\tilde C_4$ and $\tilde C_5$ such that 
\begin{align*}
&\widetilde{C}_1\sum_{k\in\{a, b\}}\mathbb{E}_P\left[\Big| \sqrt{w^*(k|X_t)}\widehat{\mu}^k_{t}(X_t) - \sqrt{\widehat{w}_t(k| X_t)}\mu^k(P)(X_t) \Big|\right]\\
&\ \ \ + \widetilde{C}_2 \sum_{k\in\{a, b\}}\mathbb{E}_P\left[\Big| \sqrt{w^*(k|X_t)} - \sqrt{\widehat{w}_t(k| X_t)} \Big|\right] + \widetilde{C}_3\sum_{k\in\{a, b\}}\mathbb{E}_P\left[\Big| \widehat{\mu}^k_{t}(X_t) - \mu^k(P)(X_t) \Big|\right]\\
&\leq \widetilde{C}_4 \sum_{k\in\{a, b\}}\mathbb{E}_P\left[\Big| \sqrt{w^*(k|X_t)} - \sqrt{\widehat{w}_t(k| X_t)} \Big|\right] + \widetilde{C}_5 \sum_{k\in\{a, b\}}\mathbb{E}_P\left[\Big| \widehat{\mu}^k_{t}(X_t) - \mu^k(P)(X_t) \Big|\right].
\end{align*}
From $\widehat{w}_t(k| x) - w^*(k| x)\xrightarrow{\mathrm{p}}0$, we have $\sqrt{\widehat{w}_t(k| x)} - \sqrt{w^*(k| x)}\xrightarrow{\mathrm{p}}0$. From the assumption that the point convergences in probability, i.e., for all $x\in\mathcal{X}$ and $k\in\mathcal{A}$, $\sqrt{\widehat{w}_t(k| x)} - \sqrt{w^*(k| x)}\xrightarrow{\mathrm{p}}0$ and $\widehat{\mu}^k_{t}(x) - \mu^k(P)(x)\xrightarrow{\mathrm{p}}0$ as $t\to\infty$, if $\sqrt{\widehat{w}_t(k| x)}$, and $\widehat{\mu}^k_{t}(x)$ are  uniformly integrable, for fixed $x\in \mathcal{X}$, we can prove that 
\begin{align*}
&\mathbb{E}_P\big[|\sqrt{\widehat{w}_t(k| X_t)} - \sqrt{w^*(k|X_t)}| | X_t=x\big] = \mathbb{E}_P\big[|\sqrt{\widehat{w}_t(k| x)} - \sqrt{w^*(k| x)}|\big]  \to 0,\\
&\mathbb{E}_P\big[|\widehat{\mu}^k_{t}(X_t) - \mu^k(P)(X_t)| | X_t = x\big] = \mathbb{E}_P\big[|\widehat{\mu}^k_{t}(x) - \mu^k(P)(x)|\big] \to 0,
\end{align*}
 as $t\to\infty$ using $L^r$-convergence theorem (Proposition~\ref{prp:lr_conv_theorem}). Here, we used the fact that $\widehat{\mu}^k_{t}(x)$ and $\sqrt{\widehat{w}_t(k| x)}$ are independent from $X_t$. For fixed $x\in\mathcal{X}$, we can show that $\sqrt{\widehat{w}_t(k| x)}$, and $\widehat{\mu}^k_{t}(x)$ are uniformly integrable from the boundedness of $\sqrt{\widehat{w}_t(k| x)}$, and $\widehat{\mu}^k_{t}(x)$ (Proposition~\ref{prp:suff_uniint}). From the point convergence of $\mathbb{E}[|\sqrt{\widehat{w}_t(k| X_t)} - \sqrt{w^*(k|X_t)}| \mid X_t=x]$ and $\mathbb{E}[|\widehat{\mu}^k_{t}(X_t) - \mu^k(P)(X_t)|\mid X_t = x]$, by using Lebesgue's dominated convergence theorem, we can show that
\begin{align*}
&\mathbb{E}_{X_t}\big[\mathbb{E}_P\big[|\sqrt{\widehat{w}_t(k| X_t)} - \sqrt{w^*(k|X_t)}| \mid X_t\big]\big]  \to 0,\\
&\mathbb{E}_{X_t}\big[\mathbb{E}[|\widehat{\mu}^k_{t}(X_t) - \mu^k(P)(X_t)|\mid X_t]\big]  \to 0.
\end{align*}

Then, as $t\to\infty$,
\begin{align*}
&T V^{a,b*}(P)\mathbb{E}_P\big[(\xi^{a,b}_t(P))^2\big] - \mathbb{E}_{P}\Bigg[\sum_{k\in\{a, b\}}\frac{\big(Y^k_t- \mu^k(P)(X_t)\big)^2}{w^*(k|X_t)} + \Big(\mu^a(P)(X_t) - \mu^b(P)(X_t) - \Delta^{a,b}(P)\Big)^2\Bigg]\\
&\to 0.
\end{align*}
Therefore, for any $\epsilon > 0$, there exists $\tilde t > 0$ such that 
\begin{align*}
&\frac{1}{T V^{a,b*}(P)}\sum^{T}_{t=1}\Bigg(T V^{a,b*}(P)\mathbb{E}_P\big[(\xi^{a,b}_t(P))^2\big] - \mathbb{E}_{P}\Bigg[\sum_{k\in\{a, b\}}\frac{\big(Y^k_t- \mu^k(P)(X_t)\big)^2}{w^*(k|X_t)} + \Big(\mu^a(P)(X_t) - \mu^b(P)(X_t) - \Delta^{a,b}(P)\Big)^2\Bigg]\Bigg)\\
&\leq \frac{\widetilde{t}}{T V^{a,b*}(P)} + \epsilon.
\end{align*}
Here, 
\begin{align*}
    &\mathbb{E}_{P}\Bigg[\sum_{k\in\{a, b\}}\frac{\big(Y^k_t- \mu^k(P)(X_t)\big)^2}{w^*(k|X_t)} + \Big(\mu^a(P)(X_t) - \mu^b(P)(X_t) - \Delta^{a,b}(P)\Big)^2\Bigg]\\
    &=\mathbb{E}_{P}\Bigg[\sum_{k\in\{a, b\}}\frac{\big(Y^k- \mu^k(P)(X)\big)^2}{w^*(k|X_t)} + \Big(\mu^a(P)(X) - \mu^b(P)(X) - \Delta^{a,b}(P)\Big)^2\Bigg]\\
    &= V^{a,b*}(P)
\end{align*} 
does not depend on $t$. Therefore, $ \sum^{T}_{t=1}\mathbb{E}_P\big[(\xi^{a,b}_t(P))^2\big]- 1 \leq \frac{\widetilde{t}}{T V^{a,b*}(P)} + \epsilon \to 0$ as $T\to\infty$. 

\paragraph{Step~2: check of condition~(b).} We directly assumed that the condition holds from Definition~\ref{def:ls_bc}. 

\subsection*{Step~3: Check of Condition~(c)}
Let $u_t$ be an MDS such that
\begin{align*}
&u_t = (\xi^{a,b}_t(P))^2 - \mathbb{E}_P\big[(\xi^{a,b}_t(P))^2| \mathcal{F}_{t-1}\big]\\
&=\frac{1}{T V^{a,b*}(P)}\left(\frac{\mathbbm{1}[A_t=a]\big(Y^a_t - \widehat{\mu}^a_{t}(X_t)\big)}{\widehat{w}_t(a| X_t)} - \frac{\mathbbm{1}[A_t=b]\big(Y^b_t - \widehat{\mu}^b_{t}(X_t)\big)}{\widehat{w}_t(b| X_t)} +  \widehat{\mu}^a_{t}(X_t) - \widehat{\mu}^b_{t}(X_t) - \Delta^{a,b}(P)\right)^2\\\
& - \frac{1}{T V^{a,b*}(P)}\mathbb{E}_P\left[\left(\frac{\mathbbm{1}[A_t=a]\big(Y^a_t - \widehat{\mu}^a_{t}(X_t)\big)}{\widehat{w}_t(a| X_t)} - \frac{\mathbbm{1}[A_t=b]\big(Y^b_t - \widehat{\mu}^b_{t}(X_t)\big)}{\widehat{w}_t(b| X_t)} +  \widehat{\mu}^a_{t}(X_t) - \widehat{\mu}^b_{t}(X_t) - \Delta^{a,b}(P)\right)^2\mid \mathcal{F}_{t-1}\right].
\end{align*}
We can apply the weak law of large numbers for an MDS (Proposition~\ref{prp:mrtgl_WLLN} in Appendix~\ref{appdx:prelim}), and obtain
\begin{align*}
&\sum^T_{t=1}u_t = \frac{1}{T }\sum^T_{t=1}T \left((\xi^{a,b}_t(P))^2 - \mathbb{E}_P\big[(\xi^{a,b}_t(P))^2| \mathcal{F}_{t-1}\big]\right)\xrightarrow{\mathrm{p}} 0.
\end{align*}
The conditions in the weak law of large numbers for an MDS (Proposition~\ref{prp:mrtgl_WLLN}) can be confirmed from Definition~\ref{def:ls_bc}. 

Next, we show that
\begin{align*}
\sum^T_{t=1}\mathbb{E}_P\big[(\xi^{a,b}_t(P))^2\mid \mathcal{F}_{t-1}\big] - 1\xrightarrow{\mathrm{p}} 0.
\end{align*}
From Markov's inequality, for $\varepsilon > 0$, we have
\begin{align*}
&\mathbb{P}\left(\left|\sum^T_{t=1}\mathbb{E}_P\big[(\xi^{a,b}_t(P))^2\mid \mathcal{F}_{t-1}\big] - 1\right| \geq \varepsilon\right)\\
&\leq \frac{\mathbb{E}_P\left[\left|\sum^T_{t=1}\mathbb{E}_P\big[(\xi^{a,b}_t(P))^2\mid \mathcal{F}_{t-1}\big] - 1\right|\right]}{\varepsilon}\\
&\leq \frac{\frac{1}{TV^{a,b*}(P)}\sum^T_{t=1}\mathbb{E}_P\left[\big|TV^{a,b*}(P)\mathbb{E}_P\big[(\xi^{a,b}_t(P))^2\mid \mathcal{F}_{t-1}\big] - V^{a,b*}(P)\big|\right]}{\varepsilon}.
\end{align*}
Then, we consider showing $\mathbb{E}_P\left[\big|TV^{a,b*}(P)\mathbb{E}_P\big[(\xi^{a,b}_t(P))^2\mid \mathcal{F}_{t-1}\big] - V^{a,b*}(P)\big|\right] \to 0$. Here, we have
\begin{align*}
&\mathbb{E}_P\left[\big|TV^{a,b*}(P)\mathbb{E}_P\big[(\xi^{a,b}_t(P))^2\mid \mathcal{F}_{t-1}\big] - V^{a,b*}(P)\big|\right]\\
&=\mathbb{E}_P\Bigg[\Bigg|\mathbb{E}_P\Bigg[\frac{\big(Y^a_t - \widehat{\mu}^a_{t}(X_t)\big)^2}{\widehat{w}_t(a| X_t)} +\frac{\big(Y^b_t - \widehat{\mu}^b_{t}(X_t)\big)^2}{\widehat{w}_t(b| X_t)} +  \Big(\widehat{\mu}^a_{t}(X_t) - \widehat{\mu}^b_{t}(X_t) - \Delta^{a,b}(P)\Big)^2\\
&\ \ \ + 2\left(\mu^a(P)(X_t) - \mu^b(P)(X_t) - \widehat{\mu}^a_{t}(X_t) + \widehat{\mu}^b_{t}(X_t)\right)\left(\widehat{\mu}^a_{t}(X_t) - \widehat{\mu}^b_{t}(X_t) - \Delta^{a,b}(P)\right)\\
&\ \ \ - \frac{\big(Y^a_t - \mu^a(P)(X_t)\big)^2}{w^*(a|X_t)} - \frac{\big(Y^b_t - \mu^b(P)(X_t)\big)^2}{w^*(b|X_t)} -  \Big(\mu^a(P)(X_t) - \mu^b(P)(X_t) - \Delta^{a,b}(P)\Big)^2\mid \mathcal{F}_{t-1}\Bigg]\Bigg|\Bigg]\\
&=\mathbb{E}_P\Bigg[\Bigg|\mathbb{E}_P\Bigg[\mathbb{E}_P\Bigg[\frac{\big(Y^a_t - \widehat{\mu}^a_{t}(X_t)\big)^2}{\widehat{w}_t(a| X_t)} +\frac{\big(Y^b_t - \widehat{\mu}^b_{t}(X_t)\big)^2}{\widehat{w}_t(b| X_t)} +  \Big(\widehat{\mu}^a_{t}(X_t) - \widehat{\mu}^b_{t}(X_t) - \Delta^{a,b}(P)\Big)^2\\
&\ \ \ + 2\left(\mu^a(P)(X_t) - \mu^b(P)(X_t) - \widehat{\mu}^a_{t}(X_t) + \widehat{\mu}^b_{t}(X_t)\right)\left(\widehat{\mu}^a_{t}(X_t) - \widehat{\mu}^b_{t}(X_t) - \Delta^{a,b}(P)\right)\\
&\ \ \ - \frac{\big(Y^a_t - \mu^a(P)(X_t)\big)^2}{w^*(a|X_t)} - \frac{\big(Y^b_t - \mu^b(P)(X_t)\big)^2}{w^*(b|X_t)} -  \Big(\mu^a(P)(X_t) - \mu^b(P)(X_t) - \Delta^{a,b}(P)\Big)^2\mid X_t, \mathcal{F}_{t-1}\Bigg] \mid \mathcal{F}_{t-1}\Bigg]\Bigg|\Bigg].
\end{align*}
Then, by using Jensen's inequality, 
\begin{align*}
&\mathbb{E}_P\left[\big|TV^{a,b*}(P)\mathbb{E}_P\big[(\xi^{a,b}_t(P))^2\mid \mathcal{F}_{t-1}\big] - V^{a,b*}(P)\big|\right]\\
&\leq \mathbb{E}_P\Bigg[\mathbb{E}_P\Bigg[\Bigg|\mathbb{E}_P\Bigg[\frac{\big(Y^a_t - \widehat{\mu}^a_{t}(X_t)\big)^2}{\widehat{w}_t(a| X_t)} +\frac{\big(Y^b_t - \widehat{\mu}^b_{t}(X_t)\big)^2}{\widehat{w}_t(b| X_t)} +  \Big(\widehat{\mu}^a_{t}(X_t) - \widehat{\mu}^b_{t}(X_t) - \Delta^{a,b}(P)\Big)^2\\
&\ \ \ + 2\left(\mu^a(P)(X_t) - \mu^b(P)(X_t) - \widehat{\mu}^a_{t}(X_t) + \widehat{\mu}^b_{t}(X_t)\right)\left(\widehat{\mu}^a_{t}(X_t) - \widehat{\mu}^b_{t}(X_t) - \Delta^{a,b}(P)\right)\\
&\ \ \ - \frac{\big(Y^a_t - \mu^a(P)(X_t)\big)^2}{w^*(a|X_t)} - \frac{\big(Y^b_t - \mu^b(P)(X_t)\big)^2}{w^*(b|X_t)} -  \Big(\mu^a(P)(X_t) - \mu^b(P)(X_t) - \Delta^{a,b}(P)\Big)^2\mid X_t, \mathcal{F}_{t-1}\Bigg] \Bigg| \mid \mathcal{F}_{t-1}\Bigg]\Bigg]\\
&= \mathbb{E}_P\Bigg[\Bigg|\mathbb{E}_P\Bigg[\frac{\big(Y^a_t - \widehat{\mu}^a_{t}(X_t)\big)^2}{\widehat{w}_t(a| X_t)} +\frac{\big(Y^b_t - \widehat{\mu}^b_{t}(X_t)\big)^2}{\widehat{w}_t(b| X_t)} +  \Big(\widehat{\mu}^a_{t}(X_t) - \widehat{\mu}^b_{t}(X_t) - \Delta^{a,b}(P)\Big)^2\\
&\ \ \ + 2\left(\mu^a(P)(X_t) - \mu^b(P)(X_t) - \widehat{\mu}^a_{t}(X_t) + \widehat{\mu}^b_{t}(X_t)\right)\left(\widehat{\mu}^a_{t}(X_t) - \widehat{\mu}^b_{t}(X_t) - \Delta^{a,b}(P)\right)\\
&\ \ \ - \frac{\big(Y^a_t - \mu^a(P)(X_t)\big)^2}{w^*(a|X_t)} -\frac{\big(Y^b_t - \mu^b(P)(X_t)\big)^2}{w^*(b|X_t)} -  \Big(\mu^a(P)(X_t) - \mu^b(P)(X_t) - \Delta^{a,b}(P)\Big)^2\mid X_t, \mathcal{F}_{t-1}\Bigg] \Bigg| \Bigg].
\end{align*}
Because $\widehat{\mu}^a_{t}$, $\widehat{\mu}^b_{t}$ and $\widehat{w}_t$ are constructed from $\mathcal{F}_{t-1}$,
\begin{align*}
&\mathbb{E}_P\left[\big|TV^{a,b*}(P)\mathbb{E}_P\big[(\xi^{a,b}_t(P))^2\mid \mathcal{F}_{t-1}\big] - V^{a,b*}(P)\big|\right]\\
&\leq \mathbb{E}_P\Bigg[\Bigg|\mathbb{E}_P\Bigg[\frac{\big(Y^a_t - \widehat{\mu}^a_{t}(X_t)\big)^2}{\widehat{w}_t(a| X_t)} +\frac{\big(Y^b_t - \widehat{\mu}^b_{t}(X_t)\big)^2}{\widehat{w}_t(b| X_t)} +  \Big(\widehat{\mu}^a_{t}(X_t) - \widehat{\mu}^b_{t}(X_t) - \Delta^{a,b}(P)\Big)^2\\
&\ \ \ + 2\left(\mu^a(P)(X_t) - \mu^b(P)(X_t) - \widehat{\mu}^a_{t}(X_t) + \widehat{\mu}^b_{t}(X_t)\right)\left(\widehat{\mu}^a_{t}(X_t) - \widehat{\mu}^b_{t}(X_t) - \Delta^{a,b}(P)\right)\\
&\ \ \ - \frac{\big(Y^a_t - \mu^a(P)(X_t)\big)^2}{w^*(a|X_t)} - \frac{\big(Y^b_t - \mu^b(P)(X_t)\big)^2}{w^*(b|X_t)} -  \Big(\mu^a(P)(X_t) - \mu^b(P)(X_t) - \Delta^{a,b}(P)\Big)^2\mid X_t, \widehat{f}_{t-1}, \pi_t\Bigg] \Bigg| \Bigg].
\end{align*}
From the results of Step~1, there exist $\widetilde{C}_4$ and $\widetilde{C}_5$ such that
\begin{align*}
&\mathbb{E}_P\left[\big|TV^{a,b*}(P)\mathbb{E}_P\big[(\xi^{a,b}_t(P))^2\mid \mathcal{F}_{t-1}\big] - V^{a,b*}(P)\big|\right]\\
&\leq \mathbb{E}_P\Bigg[\Bigg|\mathbb{E}_P\Bigg[\frac{\big(Y^a_t - \widehat{\mu}^a_{t}(X_t)\big)^2}{\widehat{w}_t(a| X_t)} +\frac{\big(Y^b_t - \widehat{\mu}^b_{t}(X_t)\big)^2}{\widehat{w}_t(b| X_t)} +  \Big(\widehat{\mu}^a_{t}(X_t) - \widehat{\mu}^b_{t}(X_t) - \Delta^{a,b}(P)\Big)^2\\
&\ \ \ + 2\left(\mu^a(P)(X_t) - \mu^b(P)(X_t) - \widehat{\mu}^a_{t}(X_t) + \widehat{\mu}^b_{t}(X_t)\right)\left(\widehat{\mu}^a_{t}(X_t) - \widehat{\mu}^b_{t}(X_t) - \Delta^{a,b}(P)\right)\Bigg\}\\
&\ \ \ - \frac{\big(Y^a_t - \mu^a(P)(X_t)\big)^2}{w^*(a|X_t)} +\frac{\big(Y^b_t - \mu^b(P)(X_t)\big)^2}{w^*(b|X_t)} -  \Big(\mu^a(P)(X_t) - \mu^b(P)(X_t) - \Delta^{a,b}(P)\Big)^2\mid X_t, \widehat{f}_{t-1}, \pi_t\Bigg] \Bigg| \Bigg]\\
&\leq \widetilde{C}_4 \sum^1_{k=0}\mathbb{E}_P\left[\Big| \sqrt{w^*(k|X_t)} - \sqrt{\widehat{w}_t(k| X_t)} \Big|\right] + \widetilde{C}_5 \sum^1_{k=0}\mathbb{E}_P\left[\Big| \widehat{\mu}^k_{t}(X_t) - \mu^k(P)(X_t) \Big|\right].
\end{align*}

Then, from $L^r$ convergence theorem, by using point convergence of $\widehat{\mu}^a_{t}$, $\widehat{\mu}^b_{t}$ and $\widehat{w}_t$, and the  sub-Exponentiality of $(\xi^{a,b}_t(P))^2$, we have $\mathbb{E}_P\left[\big|TV^{a,b*}(P)\mathbb{E}_P\big[(\xi^{a,b}_t(P))^2\mid \mathcal{F}_{t-1}\big] - V^{a,b*}(P)\big|\right]\to 0$. Therefore,
\begin{align*}
&\mathbb{P}\left(\left|\sum^T_{t=1}\mathbb{E}_P\big[(\xi^{a,b}_t(P))^2\mid \mathcal{F}_{t-1}\big] - 1\right| \geq \varepsilon\right)\\
&\leq \frac{\frac{1}{TV^{a,b*}(P)}\sum^T_{t=1}\mathbb{E}_P\left[\big|TV^{a,b*}(P)\mathbb{E}_P\big[(\xi^{a,b}_t(P))^2\mid \mathcal{F}_{t-1}\big] - V^{a,b*}(P)\big|\right]}{\varepsilon} \to 0.
\end{align*}
As a conclusion, 
\begin{align*}
&\sum^T_{t=1}\left((\xi^{a,b}_t(P))^2 - 1\right) = \sum^T_{t=1}\left((\xi^{a,b}_t(P))^2 - \mathbb{E}_P\big[(\xi^{a,b}_t(P))^2| \mathcal{F}_{t-1}\big] + \mathbb{E}_P\big[(\xi^{a,b}_t(P))^2| \mathcal{F}_{t-1}\big] - 1\right)\xrightarrow{\mathrm{p}} 0.
\end{align*}
\end{proof}

\section{Direct Proof for the Upper Bound for the Probability of Misidentification of the AS-AIPW Strategy}
\label{appdx:direct_proof}
For the AS-AIPW strategy, in the previous section, we discuss the asymptotic normality of the AS-AIPW strategy to obtain the worst-case upper bound for the expected simple regret. However, for the AS-AIPW estimator, we can directly derive the upper bound for the probability of misidentification more easily without going through the asymptotic normality, which requires some specific techniques of empirical process in the proof. 

In this section, we derive the following upper bound for the probability of misidentification of the TS-HIR estimator
\begin{theorem}
\label{lem:large_dev}
Under Assumptions~\ref{asm:almost_sure_convergence}, for any $P_{0}\in\mathcal{P}^*$ and all $a, b$, 
    \begin{align}
        -\liminf_{T\to\infty}\frac{1}{T}\log \mathbb{P}_{0}\left(\widehat{\mu}^{\mathrm{AIPW}, a^*(P)}_{T} \leq \widehat{\mu}^{\mathrm{AIPW}, a}_{T}\right) \geq \frac{(\Delta^a(P))^2}{2V^{a*}(P)} - O\left((\Delta^a(P))^3\right). 
    \end{align}
\end{theorem}

\subsection{Proof of Theorem~\ref{lem:large_dev}}
\label{sec:large_dev}
Now, we consider proving Theorem~\ref{lem:large_dev}. Let us define
\begin{align*}
&\xi^{a}_t(P) = \frac{\varphi^{a^*(P)}\Big(Y_t, A_t, X_t; \widehat{\mu}^{a^*(P)}_t, \widehat{w}_t\Big) -\varphi^a\Big(Y_t, A_t, X_t; \widehat{\mu}^a_t, \widehat{w}_t\Big) -  (\mu^a(P) - \mu^b(P))}{\sqrt{T V^{a*}(P)}},\\
&V^{a*}(P) = V^{a^*(P),a*}(P)
\end{align*}
The upper bound is derived from the Chernoff bound. Our proof is partially inspired by techniques in \citet{hadad2019}, and \citet{kato2020efficienttest}. 

\paragraph{Step~1: the sequence $\{\xi^a_t(P)\}^T_{t=1}$ is an MDS.}
We prove that $\{\xi^a_t(P)\}^T_{t=1}$ is an MDS. Although this fact is well-known in the literature of causal inference \citep{Laan2008TheCA,hadad2019,kato2020efficienttest}, we show the proof for the sake of completeness. 
\begin{lemma}
\label{lem:mds}
Under Assumptions~\ref{asm:almost_sure_convergence}, for any $P_{0}\in\mathcal{P}^*$, $\{\xi^a_t(P)\}^T_{t=1}$ is an MDS; that is, 
    \begin{align}
    \mathbb{E}_{P}\left[\xi^a_t(P)|\mathcal{F}_{t-1}\right] = 0.
    \end{align}
\end{lemma}
\begin{proof}
For each $t \in [T]$, 
\begin{align*}
    &\mathbb{E}_{P}\left[\xi^a_t(P)|X_t, \mathcal{F}_{t-1}\right] = \frac{1}{\sqrt{T } V^{a*}(P)}\mathbb{E}_{P}\left[\varphi^{a^*(P)}\Big(Y_t, A_t, X_t; \widehat{\mu}^{a^*(P)}_t, \widehat{w}_t\Big) -\varphi^a\Big(Y_t, A_t, X_t; \widehat{\mu}^a_t, \widehat{w}_t\Big) - (\mu^{a^*(P)}(P) - \mu^a(P))\big|X_t, \mathcal{F}_{t-1}\right]
    \\
    &= \frac{1}{\sqrt{T } V^{a*}(P)}\frac{\mathbb{E}_{P}[\mathbbm{1}[A_t = a^*(P)] | X_t, \mathcal{F}_{t-1}]\mathbb{E}_{P}\left[Y^*_t- \widehat{\mu}^{a^*(P)}_t(X_t) | X_t,  \mathcal{F}_{t-1}\right]}{\widehat{w}_t(a^*(P)|X_t)}   +\widehat{\mu}^{a^*(P)}_t(X_t)
    \\
    & \qquad \qquad \qquad - \frac{\mathbb{E}_{P}[\mathbbm{1}[A_t = a]| X_t, \mathcal{F}_{t-1}]\mathbb{E}_{P}\left[Y^{a}_t- \widehat{\mu}^a_t(X_t) | X_t, \mathcal{F}_{t-1}\right]}{\widehat{w}_t(a|X_t)}  - \widehat{\mu}^{a}_t(X_t)- (\mu^{a^*(P)}(P) - \mu^a(P))
    \\
    & =  \frac{1}{\sqrt{T } V^{a*}(P)}\left\{\left(\mu^{a^*(P)}(P)(X_t) - \mu^a(P)(X_t)\right) - \left(\mu^{a^*(P)}(P) - \mu^a(P)\right)\right\}. 
\end{align*}
Therefore,
\begin{align}
    \mathbb{E}_{P}\left[\xi^a_t(P)|\mathcal{F}_{t-1}\right] = \mathbb{E}_{P}\left[\mathbb{E}_{P}\left[\xi^a_t(P)|X_t, \mathcal{F}_{t-1}\right]|\mathcal{F}_{t-1}\right] = \frac{1}{\sqrt{T } V^{a*}(P)}\left\{\left(\mu^{a^*(P)}(P) - \mu^a(P)\right) - \left(\mu^{a^*(P)}(P) - \mu^a(P)\right)\right\} = 0.
\end{align}
\end{proof}


\paragraph{Step~2: the Chernoff bound.}
By applying the Chernoff bound, for any $v \geq 0$ and any $\lambda < 0$, 
\begin{align}
    \mathbb{P}_{0}\left(\sum^T_{t=1}\xi^a_t(P) \leq v\right) \leq \mathbb{E}_{P}\left[\exp\left(\lambda \sum^T_{t=1}\xi^a_t(P)\right)\right]\exp\left(-\lambda v\right).
\end{align}
From the Chernoff bound and a property of an MDS, we have
\begin{align}
\label{eq:chernoff_bound}
    &\mathbb{E}_{P}\left[\exp\left(\lambda \sum^T_{t=1}\xi^a_t(P)\right)\right] =\mathbb{E}_{P}\left[\prod^T_{t=1}\mathbb{E}_{P}\left[\exp\left(\lambda \xi^a_t(P)\right)| \mathcal{F}_{t-1}\right]\right] = \mathbb{E}_{P}\left[\exp\left(\sum^T_{t=1}\log \mathbb{E}_{P}\left[\exp\left(\lambda\xi^a_t(P)\right)|\mathcal{F}_{t-1}\right] \right)\right].
\end{align}
By applying the Taylor series expansion around $\lambda = 0$,
\begin{align}
\label{eq:taylor}
    &\log \mathbb{E}_{P}\left[\exp\left(\lambda\xi^a_t(P)\right)|\mathcal{F}_{t-1}\right] = \frac{\lambda^2}{2} \mathbb{E}_{P}\left[(\xi^a_t(P))^2| \mathcal{F}_{t-1}\right] + O\left(\left(\lambda/\sqrt{T}\right)^3\right). 
\end{align}
Here, $\mathbb{E}_{P}\left[\exp\left(\lambda\xi^a_t(P)\right)|\mathcal{F}_{t-1}\right] = 1 + \sum^\infty_{k=1}(\lambda / \sqrt{T})^k\mathbb{E}_{P}\left[(\sqrt{T}\xi^a_t(P))^k / k!| \mathcal{F}_{t-1}\right]$. Because $\mathbb{E}_{P}\left[(\sqrt{T}\xi^a_t(P))^k / k!| \mathcal{F}_{t-1}\right]$ is bounded by a constant that is independent from $T$\footnote{Note that $\sqrt{T}\xi^a_t(P) = \frac{\varphi^{a^*(P)}\Big(Y_t, A_t, X_t; \widehat{\mu}^{a^*(P)}_t, \widehat{w}_t\Big) -\varphi^a\Big(Y_t, A_t, X_t; \widehat{\mu}^a_t, \widehat{w}_t\Big) -  (\mu^{a^*(P)}(P) - \mu^a(P))}{\sqrt{V^{a*}(P)}}$. } for all $k \geq 1$, $\mathbb{E}_{P}\left[\exp\left(\lambda\xi^a_t(P)\right)|\mathcal{F}_{t-1}\right] = 1 + \sum^2_{k=1}(\lambda / \sqrt{T})^k\mathbb{E}_{P}\left[(\sqrt{T}\xi^a_t(P))^k / k!| \mathcal{F}_{t-1}\right] + O\left(\left(\lambda/\sqrt{T}\right)^3\right)$.
Note that the Taylor series expansion of $\log(1 + z)$ around $z = 0$ is given as $\log(1 + z) = z - z^2/2 + z^3/3 - \cdots$. Therefore, 
\begin{align}
    &\log \mathbb{E}_{P}\left[\exp\left(\lambda\xi^a_t(P)\right)|\mathcal{F}_{t-1}\right]\\
    &= \left\{\frac{\lambda}{\sqrt{T}}\mathbb{E}_{P}\left[\sqrt{T}\xi^a_t(P)| \mathcal{F}_{t-1}\right] + \frac{\lambda^2}{T}\mathbb{E}_{P}\left[(\sqrt{T}\xi^a_t(P))^2 / 2!| \mathcal{F}_{t-1}\right] + O\left(\left(\lambda/\sqrt{T}\right)^3\right) \right\}\\
    &\ \ \ \ \ \ - \frac{1}{2}\left\{\frac{\lambda}{\sqrt{T}}\mathbb{E}_{P}\left[\sqrt{T}\xi^a_t(P)| \mathcal{F}_{t-1}\right] + O\left(\left(\lambda/\sqrt{T}\right)^2\right) \right\}^2\\
    &= \frac{\lambda^2}{T}\mathbb{E}_{P}\left[(\sqrt{T}\xi^a_t(P))^2 / 2!| \mathcal{F}_{t-1}\right] + O\left(\left(\lambda/\sqrt{T}\right)^3\right) .
\end{align}
Here, we used $\mathbb{E}_{P}\left[\xi^a_t(P)| \mathcal{F}_{t-1}\right] = 0$. 
Thus, the \eqref{eq:taylor} holds. 

\paragraph{Step~3: convergence of the second moment.}
We next show that $T\mathbb{E}_{P}\left[(\xi^a_t(P))^2|\mathcal{F}_{t-1}\right] - 1\xrightarrow{\mathrm{a.s}} 0$.
\begin{lemma}
\label{lem:almost_conv}
Under Assumptions~\ref{asm:almost_sure_convergence}, for any $P_{0}\in\mathcal{P}^*$,
    \begin{align}
    T\mathbb{E}_{P}\left[(\xi^a_t(P))^2|\mathcal{F}_{t-1}\right] - 1\xrightarrow{\mathrm{a.s}} 0\qquad \mathrm{as}\ \ t\to \infty.
    \end{align}
\end{lemma}
Note that $T (\xi^a_t(P))^2$ does not depend on $T$. 
The proof is shown in Appendix~\ref{appdx:lem:almost_conv}. 

This lemma immediately yields the following lemma. 
\begin{lemma}Under Assumptions~\ref{asm:almost_sure_convergence}, for any $P_{0}\in\mathcal{P}^*$,
    \begin{align}
    \sum^T_{t=1}\mathbb{E}_{P}\left[(\xi^a_t(P))^2|\mathcal{F}_{t-1}\right] - 1\xrightarrow{\mathrm{p}} 0\qquad \mathrm{as}\ \ T\to \infty.
    \end{align}
\end{lemma}
Our proof refers to the proof of Lemma~10 in \citet{hadad2019}.
\begin{proof}
Let $u_t$ be $u_t =  T\mathbb{E}_{P}\left[(\xi^a_t(P))^2|\mathcal{F}_{t-1}\right] - 1$. Note that $ \sum^T_{t=1}\mathbb{E}_{P}\left[(\xi^a_t(P))^2|\mathcal{F}_{t-1}\right] - 1 = \frac{1}{T}\sum^T_{t=1} u_t$. 

From the proof of Lemma~\ref{lem:almost_conv}, we can find that $u_t$ is a bounded random variable. Recall that
\begin{align}
    &TV^{a*}(P)\mathbb{E}_{P}\left[(\xi^a_t(P))^2|\mathcal{F}_{t-1}\right]=\mathbb{E}_{P}\left[\frac{\left(\sigma^*(X_t)\right)^2 + (\mu^{a^*(P)}(P)(X_t) - \widehat{\mu}^{a^*(P)}_t(X_t))^2}{\widehat{w}_t(a^*(P)|X_t)}|\mathcal{F}_{t-1}\right]\\
    &\ \ \ \ \ +  \mathbb{E}_{P}\left[\frac{\left(\sigma^a(X_t)\right)^2 + (\mu^a(P)(X_t) - \widehat{\mu}^a_t(X_t))^2}{\widehat{w}_t(a|X_t)}|\mathcal{F}_{t-1}\right] -  \mathbb{E}_{P}\left[\left(\widehat{\mu}^{a^*(P)}_t(X_t) + \widehat{\mu}^a_t(X_t) - (\mu^{a^*(P)}(P) - \mu^a(P))\right)^2|\mathcal{F}_{t-1}\right].
\end{align}
We assumed that $(\mu^{a^*(P)}(P)(X_t), \mu^a(P)(X_t), \widehat{\mu}^{a^*(P)}_t(X_t), \widehat{\mu}^a_t(X_t), \widehat{w}_t(a^*(P)|X_t), \widehat{w}_t(a|X_t))$ are all bounded random variables. 
Let $C$ be a constant independent from $T$ such that $|u_t| < C$ for all $t\in\mathbb{N}$. 

Fix some positive $\epsilon > 0$ and $\delta > 0$. Almost-sure convergence of $u_t$ to zero as $t\to\infty$ implies that we can find a large enough $t_\epsilon$ such that $|u_t| < \epsilon$ for all $t\geq t_\epsilon$ with probability at least $1 - \delta$. Let $\mathcal{E}(\epsilon)$ denote the event in which this happens; that is, $\mathcal{E}(\epsilon) = \{ |u_t| < \epsilon\quad \forall\ t \geq t_\epsilon\}$. Under this event, for $T > t_\epsilon$, 
\begin{align}
    \sum^T_{t=1}|u_t| \leq \sum^{t_\epsilon}_{t=1} C + \sum^{T}_{t=t_\epsilon + 1} \epsilon = t_\epsilon C + T\epsilon.  
\end{align}
Therefore, 
\begin{align}
    \mathbb{P}\left( \frac{1}{T}\sum^T_{t=1}|u_t| > 2\epsilon \right) &= \mathbb{P}\left( \left\{\frac{1}{T}\sum^T_{t=1}|u_t| > 2\epsilon \right\}\cap \mathcal{E}(\epsilon)\right) + \mathbb{P}\left( \left\{\frac{1}{T}\sum^T_{t=1}|u_t| > 2\epsilon \right\}\cap \mathcal{E}^c(\epsilon)\right)\\
    &\leq \mathbb{P}\left( \frac{t_\epsilon}{T} C + \epsilon > 2\epsilon \right) + \mathbb{P}\left(\mathcal{E}^c(\epsilon)\right)\\
    &= \mathbb{P}\left( \frac{t_\epsilon}{T} C > \epsilon \right) + \mathbb{P}\left(\mathcal{E}^c(\epsilon)\right).
\end{align}

Letting $T\to \infty$, for arbitrarily small $\delta > 0$, the statement follows. 
\end{proof}

Then, from the continuous mapping theorem, $\sum^T_{t=1}\mathbb{E}_{P}\left[(\xi^a_t(P))^2|\mathcal{F}_{t-1}\right] - 1\xrightarrow{\mathrm{p}} 0$ as $T\to\infty$ implies 
\begin{align}
    \exp\left(\frac{\lambda^2}{2}\left\{\sum^T_{t=1} \mathbb{E}_{P}\left[(\xi^a_t(P))^2| \mathcal{F}_{t-1}\right] 
 - 1\right\}\right) \xrightarrow{\mathrm{p}} \exp\left(0\right) = 1\qquad \mathrm{as}\ \ T\to \infty.
\end{align}
Then, from $L^r$-convergence theorem, we obtain the following lemma.
\begin{lemma}
Under Assumptions~\ref{asm:almost_sure_convergence}, for any $P_{0}\in\mathcal{P}^*$, for any $\varepsilon > 0$, there exist $T_0 > 0$ such that for all $T > T_0$,
    \begin{align}
    \mathbb{E}_{P}\left[\exp\left(\lambda \sum^T_{t=1}\xi^a_t(P)\right)\right]\exp\left(-\lambda v\right) \leq (1 + \varepsilon)\exp\left(\frac{\lambda^2}{2} + O(\lambda^3/\sqrt{T})- \lambda v\right).
\end{align}
\end{lemma}
\begin{proof}
Because $\mathbb{E}_{P}\left[(\xi^a_t(P))^2| \mathcal{F}_{t-1}\right]$ is bounded, $\exp\left(\frac{\lambda^2}{2}\left\{\sum^T_{t=1} \mathbb{E}_{P}\left[(\xi^a_t(P))^2| \mathcal{F}_{t-1}\right] 
- 1\right\}\right)$ is uniformly integrable (Proposition~\ref{prp:suff_uniint} in Appendix~\ref{appdx:prelim}). Therefore, from $L^r$-convergence theorem (Proposition~\ref{prp:lr_conv_theorem} in Appendix~\ref{appdx:prelim}),
\begin{align}
     \mathbb{E}_{P}\left[\exp\left(\frac{\lambda^2}{2}\left\{\sum^T_{t=1} \mathbb{E}_{P}\left[(\xi^a_t(P))^2| \mathcal{F}_{t-1}\right] 
 - 1\right\}\right)\right] \to 1
\end{align}
From \eqref{eq:chernoff_bound} and \eqref{eq:taylor}, for any $\varepsilon > 0$, there exist $T_0 > 0$ such that for all $T > T_0$,
\begin{align}
    &\mathbb{E}_{P}\left[\exp\left(\lambda \sum^T_{t=1}\xi^a_t(P)\right)\right]\exp\left(-\frac{\lambda^2}{2}\right)\\
    &= \mathbb{E}_{P}\left[\exp\left(\frac{\lambda^2}{2} \left\{\sum^T_{t=1}\mathbb{E}_{P}\left[(\xi^a_t(P))^2| \mathcal{F}_{t-1}\right] - 1\right\}  + O(\lambda^3/\sqrt{T})\right)\right]\\
    &\leq (1 + \varepsilon)\exp\left(O(\lambda^3/\sqrt{T})\right)
\end{align}
The proof is complete. 
\end{proof}

This lemma immediately yields the following lemma. 
\begin{lemma}
Under Assumptions~\ref{asm:almost_sure_convergence}, for any $P_{0}\in\mathcal{P}^*$ and any $v, \varepsilon > 0$, there exist $T_0 > 0$ such that for all $T > T_0$, 
    \begin{align}
        \mathbb{P}_{0}\left(\sum^T_{t=1}\xi^a_t(P) \leq v\right) \leq (1 + \varepsilon)\exp\left(-\frac{v^2}{2} + O(-v^3/\sqrt{T}) \right)
    \end{align}
\end{lemma}
\begin{proof}
For any $v, \varepsilon > 0$, there exist $T_0 > 0$ such that for all $T > T_0$, from the Chernoff bound, 
\begin{align}
     \mathbb{P}_{0}\left(\sum^T_{t=1}\xi^a_t(P) \leq v\right) \leq (1 + \varepsilon)\exp\left(\frac{\lambda^2}{2} + O(\lambda^3/\sqrt{T})- \lambda v\right).
\end{align}
By substituting $\lambda = - u < 0$, the claim follows.
\end{proof}
Then, by substituting $u = - \frac{\sqrt{T}(\mu^{a^*(P)}(P) - \mu^a(P))}{\sqrt{V^{a*}(P)}} < 0$, we obtain 
\begin{align}
    &-\frac{1}{T}\log \mathbb{P}_{0}\left(\sum^T_{t=1}\xi^a_t(P) \leq - \frac{\sqrt{T}(\mu^{a^*(P)}(P) - \mu^a(P))}{\sqrt{V^{a*}(P)}}\right)\\
    &\geq -\frac{1}{T}\log\left(\exp\left(-\frac{T(\mu^{a^*(P)}(P) - \mu^a(P))^2}{2V^{a*}(P)} + O\left(\frac{T(\mu^{a^*(P)}(P) - \mu^a(P))^3}{(V^{a*}(P))^{3/2}}\right)\right) \right) - \frac{1}{T}\log (1 + \varepsilon). 
\end{align}
Letting $T \to \infty$ and $\varepsilon \to 0$, 
\begin{align}
    -\liminf_{T\to\infty}\frac{1}{T}\log \mathbb{P}_{0}\left(\widehat{\mu}^{\mathrm{AIPW}, a^*(P)}_{T} \leq \widehat{\mu}^{\mathrm{AIPW}, a}_{T}\right) \geq \frac{(\mu^{a^*(P)}(P) - \mu^a(P))^2}{2V^{a*}(P)} - O\left((\mu^{a^*(P)}(P) - \mu^a(P))^3\right). 
\end{align}
Thus, Theorem~\ref{lem:large_dev} holds. 

\section{Proof of Lemma~\ref{lem:almost_conv}}
\label{appdx:lem:almost_conv}
\begin{proof}
\begin{align*}&TV^{a*}(P)\mathbb{E}_{P}\left[(\xi^a_t(P))^2|\mathcal{F}_{t-1}\right]= \mathbb{E}_{P}\left[\left(\varphi^{a^*(P)}\Big(Y_t, A_t, X_t; \widehat{\mu}^{a^*(P)}_t, \widehat{w}_t\Big) - \varphi^a\Big(Y_t, A_t, X_t; \widehat{\mu}^a_t, \widehat{w}_t\Big)- (\mu^{a^*(P)}(P) - \mu^a(P))\right)^2\Big| \mathcal{F}_{t-1}\right]
    \\
    &= \mathbb{E}_{P}\left[\left(\frac{\mathbbm{1}[A_{t} = a^*(P)]\big(Y^*_t- \widehat{\mu}^{a^*(P)}_t(X_t)\big)}{\widehat{w}_t(a^*(P)|X_t)} - \frac{\mathbbm{1}[A_t = a]\big(Y^a_t- \widehat{\mu}^a_t(X_t)\big)}{\widehat{w}_t(a|X_t)} + \widehat{\mu}^{a^*(P)}_t(X_t) - \widehat{\mu}^a_t(X_t) - (\mu^{a^*(P)}(P) - \mu^a(P))\right)^2 \Big| \mathcal{F}_{t-1}\right]
    \\
    &= \mathbb{E}_{P}\Bigg[\left(\frac{\mathbbm{1}[A_{t} = a^*(P)]\big(Y^*_t- \widehat{\mu}^{a^*(P)}_t(X_t)\big)}{\widehat{w}_t(a^*(P)|X_t)} - \frac{\mathbbm{1}[A_t = a]\big(Y^a_t- \widehat{\mu}^a_t(X_t)\big)}{\widehat{w}_t(a|X_t)}\right)^2
    \\
    &\ \ \ \ \ \ + 2\left(\frac{\mathbbm{1}[A_{t} = a^*(P)]\big(Y^*_t- \widehat{\mu}^{a^*(P)}_t(X_t)\big)}{\widehat{w}_t(a^*(P)|X_t)} - \frac{\mathbbm{1}[A_{t} = a]\big(Y^a_t- \widehat{\mu}^a_t(X_t)\big)}{\widehat{w}_t(a|X_t)}\right)\left( \widehat{\mu}^{a^*(P)}_t(X_t) - \widehat{\mu}^a_t(X_t) - (\mu^{a^*(P)}(P) - \mu^a(P))\right)
    \\
    &\ \ \ \ \ \ + \left( \widehat{\mu}^{a^*(P)}_t(X_t) - \widehat{\mu}^a_t(X_t) - (\mu^{a^*(P)}(P) - \mu^a(P))\right)^2 |  \mathcal{F}_{t-1}\Bigg]
    \\
    &= \mathbb{E}_{P}\Bigg[\frac{\mathbbm{1}[A_{t} = a^*(P)]\big(Y^*_t- \widehat{\mu}^{a^*(P)}_t(X_t)\big)^2}{\widehat{w}_t(a^*(P)|X_t)} + \frac{\mathbbm{1}[A_{t} = a]\big(Y^a_t- \widehat{\mu}^a_t(X_t)\big)^2}{\widehat{w}_t(a|X_t)}
    \\
    &\ \ \ \ \ \ + 2\left(\frac{\mathbbm{1}[A_{t}  = a^*(P)]\big(Y^*_t- \widehat{\mu}^{a^*(P)}_t(X_t)\big)}{\widehat{w}_t(a^*(P)|X_t)} - \frac{\mathbbm{1}[A_t = a]\big(Y^a_t- \widehat{\mu}^a_t(X_t)\big)}{\widehat{w}_t(a|X_t)}\right)\left( \widehat{\mu}^{a^*(P)}_t(X_t) - \widehat{\mu}^a_t(X_t) - (\mu^{a^*(P)}(P) - \mu^a(P))\right)
    \\
    &\ \ \ \ \ \ + \left( \widehat{\mu}^{a^*(P)}_t(X_t) - \widehat{\mu}^a_t(X_t) - (\mu^{a^*(P)}(P) - \mu^a(P))\right)^2 |  \mathcal{F}_{t-1}\Bigg]
    \\
    &= \mathbb{E}_{P}\left[\frac{\big(Y^*_t- \widehat{\mu}^{a^*(P)}_t(X_t)\big)^2}{\widehat{w}_t(a^*(P)|X_t)}|  \mathcal{F}_{t-1}\right] + \mathbb{E}_{P}\left[\frac{\big(Y^a_t- \widehat{\mu}^a_t(X_t)\big)^2}{\widehat{w}_t(a|X_t)}|  \mathcal{F}_{t-1}\right]\\
    &\ \ \ \ \ \ -  \mathbb{E}_{P}\left[\left(\widehat{\mu}^{a^*(P)}_t(X_t) + \widehat{\mu}^a_t(X_t) - (\mu^{a^*(P)}(P) - \mu^a(P))\right)^2|  \mathcal{F}_{t-1}\right]. 
\end{align*}
Here, we used 
\begin{align*}
    &\mathbb{E}_{P}\Bigg[\frac{\mathbbm{1}[A_{t} = a]\big(Y^{a}_t- \widehat{\mu}^a_t(X_t)\big)^2}{(\widehat{w}_t(a|X_t))^2}|  \mathcal{F}_{t-1}\Bigg] = \mathbb{E}_{P}\Bigg[\mathbb{E}_{P}\Bigg[\frac{\widehat{w}_t(a|X_t)\big(Y^{a}_t- \widehat{\mu}^a_t(X_t)\big)^2}{(\widehat{w}_t(a|X_t))^2}|X_t \mathcal{F}_{t-1}\Bigg]\Bigg]\\
    &= \mathbb{E}_{P}\Bigg[\frac{\big(Y^{a}_t- \widehat{\mu}^a_t(X_t)\big)^2}{\widehat{w}_t(a|X_t)}|  \mathcal{F}_{t-1}\Bigg]
\end{align*}
and
\begin{align*}
    &\mathbb{E}_{P}\Bigg[ \frac{\mathbbm{1}[A_{t} = a]\big(Y^a_t- \widehat{\mu}^{a}_t(X_t)\big)}{\widehat{w}_t(a|X_t)}\left( \widehat{\mu}^{a^*(P)}_t(X_t) - \widehat{\mu}^a_t(X_t) - (\mu^{a^*(P)}(P) - \mu^a(P))\right) |  \mathcal{F}_{t-1}\Bigg] 
    \\
    & = \mathbb{E}_{P}\left[\left( \widehat{\mu}^{a^*(P)}_t(X_t) - \widehat{\mu}^a_t(X_t) - (\mu^{a^*(P)}(P) - \mu^a(P))\right)\mathbb{E}_{P}\Bigg[\frac{\widehat{w}_t(a|X_t)\big(Y^a_t- \widehat{\mu}^{a}_t(X_t)\big)}{\widehat{w}_t(a|X_t)} |  X_t, \mathcal{F}_{t-1}\Bigg]|\mathcal{F}_{t-1}\right].
\end{align*}
For $d\in\{a, a^*(P)\}$, we also have 
\begin{align*}
&\mathbb{E}_{P}\left[\frac{\big(Y^{d}_t- \widehat{\mu}^d_t(X_t)\big)^2}{\widehat{w}_t(d|X_t)}|X_t,  \mathcal{F}_{t-1}\right]=\frac{\mathbb{E}_{P}[(Y^{d}_t)^2|X_t] - 2\mu^d(P)(X_t)\widehat{\mu}^d_t(X_t)+ (\widehat{\mu}^d_{ t}(X_t))^2}{\widehat{w}_t(d|X_t)}\\
&=\frac{\mathbb{E}_{P}[(Y^{d}_t)^2|X_t] - (\mu^{d}(P)(X_t))^2 + (\mu^d(P)(X_t) - \widehat{\mu}^d_t(X_t))^2}{\widehat{w}_t(d|X_t)}\\
&=\frac{\left(\sigma^d(X_t)\right)^2 + (\mu^d(P)(X_t) - \widehat{\mu}^d_t(X_t))^2}{\widehat{w}_t(d|X_t)},
\end{align*}
where we used $\mathbb{E}_{P}[(Y^{d}_t)^2|x] - (\mu^d(P)(x))^2 = \left(\sigma^d(x)\right)^2$.
Therefore, 
\begin{align*}
& \mathbb{E}_{P}\left[\frac{\big(Y^*_t- \widehat{\mu}^{a^*(P)}_t(X_t)\big)^2}{\widehat{w}_t(a^*(P)|X_t)}|  \mathcal{F}_{t-1}\right] + \mathbb{E}_{P}\left[\frac{\big(Y^a_t- \widehat{\mu}^a_t(X_t)\big)^2}{\widehat{w}_t(a|X_t)}|  \mathcal{F}_{t-1}\right]\\
&\ \ \ \ \ -  \mathbb{E}_{P}\left[\left(\widehat{\mu}^{a^*(P)}_t(X_t) + \widehat{\mu}^a_t(X_t) - (\mu^{a^*(P)}(P) - \mu^a(P))\right)^2|  \mathcal{F}_{t-1}\right]
\\
&= \mathbb{E}_{P}\left[\frac{\left(\sigma^*(X_t)\right)^2 + (\mu^{a^*(P)}(P)(X_t) - \widehat{\mu}^{a^*(P)}_t(X_t))^2}{\widehat{w}_t(a^*(P)|X_t)}\right] +  \mathbb{E}_{P}\left[\frac{\left(\sigma^a(X_t)\right)^2 + (\mu^a(P)(X_t) - \widehat{\mu}^a_t(X_t))^2}{\widehat{w}_t(a|X_t)}\right]\\
&\ \ \ \ \  -  \mathbb{E}_{P}\left[\left(\widehat{\mu}^{a^*(P)}_t(X_t) + \widehat{\mu}^a_t(X_t) - (\mu^{a^*(P)}(P) - \mu^a(P))\right)^2\right].
\end{align*}
Because $\widehat{\mu}^a_t(x)\xrightarrow{\mathrm{a.s.}} \mu^a(P)(x)$ and $\widehat{w}_t(a|x)\xrightarrow{\mathrm{a.s.}} w^*(a|x)$, for each $x\in\mathcal{X}$, with probability $1$,
\begin{align*}
&\lim_{t\to\infty}\left|\left(\frac{\left(\sigma^*(x)\right)^2 + (\mu^{a^*(P)}(P)(x) - \widehat{\mu}^{a^*(P)}_t(x))^2}{\widehat{w}_t(a^*(P)|x)}\right)\right.\\
&\ \ \ + \left(\frac{\left(\sigma^a(x)\right)^2 + (\mu^a(P)(x) - \widehat{\mu}^a_t(x))^2}{\widehat{w}_t(a|x)}\right)  - \left(\widehat{\mu}^{a^*(P)}_t(x) + \widehat{\mu}^a_t(x) - (\mu^{a^*(P)}(P) - \mu^a(P))\right)^2\\
&\ \ \  \left. -  \left(\frac{\left(\sigma^*(x)\right)^2}{w^*(a^*(P)| x)} + \frac{\left(\sigma^a(x)\right)^2}{w^*(a| x)} + \left(\mu^{a^*(P)}(P)(x) - \mu^a(P)(x) - (\mu^{a^*(P)}(P) - \mu^a(P))\right)^2\right)\right|
\\
&\leq \lim_{t\to\infty}\left|\frac{\left(\sigma^*(x)\right)^2}{\widehat{w}_t(a^*(P)|x)} - \frac{\left(\sigma^*(x)\right)^2}{w^*(a^*(P)| x)}\right| +  \lim_{t\to\infty}\left| \frac{\left(\sigma^a(x)\right)^2}{\widehat{w}_t(a|x)} - \frac{\left(\sigma^a(X)\right)^2}{w^*(a| x)}\right|\\
&\ \ \ + \lim_{t\to\infty}\frac{(\mu^{a^*(P)}(P)(x) - \widehat{\mu}^{a^*(P)}_t(x))^2}{\widehat{w}_t(a^*(P)|x)} + \lim_{t\to\infty}\frac{ (\mu^{a}(P)(x) - \widehat{\mu}^{a}_t(x))^2}{\widehat{w}_t(a|x)}\\
&\ \ \ + \lim_{t\to\infty}\left|\left(\widehat{\mu}^{a^*(P)}_t(x) + \widehat{\mu}^a_t(x) - (\mu^{a^*(P)}(P) - \mu^a(P))\right)^2 - \left(\mu^{a^*(P)}(P)(x) - \mu^a(P)(x) - (\mu^{a^*(P)}(P) - \mu^a(P))\right)^2\right|\\
&= 0.
\end{align*}
Therefore, from Lebesgue's dominated convergence theorem, 
\begin{align*}
& TV^{a*}(P)\mathbb{E}_{P}\left[(\xi^a_t(P))^2|\mathcal{F}_{t-1}\right] - V^{a*}(P)\\
&= \mathbb{E}_{P}\left[\frac{\left(\sigma^*(x)\right)^2 + (\mu^{a^*(P)}(P)(X_t) - \widehat{\mu}^{a^*(P)}_t(X_t))^2}{\widehat{w}_t(a^*(P)|X_t)}|\mathcal{F}_{t-1}\right]\\
&\ \ \ \ \ +  \mathbb{E}_{P}\left[\frac{\left(\sigma^a(x)\right)^2 + (\mu^a(P)(X_t) - \widehat{\mu}^a_t(X_t))^2}{\widehat{w}_t(a|X_t)}|\mathcal{F}_{t-1}\right]\\
&\ \ \ \ \ -  \mathbb{E}_{P}\left[\left(\widehat{\mu}^{a^*(P)}_t(X_t) + \widehat{\mu}^a_t(X_t) - (\mu^{a^*(P)}(P) - \mu^a(P))\right)^2|\mathcal{F}_{t-1}\right]\\
&\ \ \ \ \ - \mathbb{E}_{P}\left[\frac{\left(\sigma^*(X_t)\right)^2}{w^*(a^*(P)| X_t)} + \frac{\left(\sigma^a(X_t)\right)^2}{w^*(a| X_t)}  + \left(\mu^{a^*(P)}(P)(X_t) - \mu^a(P)(X_t) - (\mu^{a^*(P)}(P) - \mu^a(P))\right)^2|\mathcal{F}_{t-1}\right]\\
&\xrightarrow{\mathrm{a.s.}} 0.
\end{align*}
\end{proof}

\section{Non-asymptotic Upper Bound}
\label{appdx:smallsample_regret}
When a specific convergence rate is assumed, a non-asymptotic convergence rate for the CLT can be derived for the AS-AIPW strategy.
\begin{corollary}
\label{thm:fan_refine} 
For all $a,b\in[K]^2$, suppose that $\xi^{a,b}_t(P)$ is conditionally sub-Gaussian; that is, there exits an absolute constant $C_\xi > 0$ such that for all $P\in\mathcal{P}$ and all $\lambda \in \mathbb{R}$, 
\begin{align*}
    \mathbb{E}_P\left[\exp\left(\lambda \xi^{a, b}_t(P)\right)|\mathcal{F}_{t-1}\right] \leq \exp\left(\frac{\lambda^2 C_\xi}{2}\right).
\end{align*}
Also suppose that some $\alpha > 0$ and constants $M$, $C$ and $D$, 
\begin{align*}
    &\max_{t \in\mathbb{N}}\mathbb{E}_P\left[\exp\left(\left|\sqrt{T} \xi^{a, b}_t(P)\right|^{\alpha}\right)\right] < M,
\end{align*}
and 
\begin{align*}
    \mathbb{P}\left(|\Omega^{a,b}_t(P) - 1 | > D/\sqrt{t}(\log t)^{2+2/\alpha}\right) \leq C t^{-1/4}(\log t)^{1+1/\alpha}.
\end{align*}
Then, for $a,b\in[K]$ and $T\geq 2$, 
\begin{align}
\label{eq:main_inequality}
    \mathbb{P}\left(\widehat{\mu}^{a, \mathrm{AIPW}}_{T} - \widehat{\mu}^{b, \mathrm{AIPW}}_{T} \leq 0\right) \leq  \begin{cases}
    \exp\left(- \frac{T\left(\Delta^{a, b}(P)\right)^2}{V^{a,b*}(P)} \right) + A T^{-1/4} (\log T)^{1+1/\alpha}
 & \mathrm{if}\ E_0 < \sqrt{T}\Delta^{a,b}(P) \leq E;\\
    \exp\left(- \frac{T\left(\Delta^{a, b}(P)\right)^2}{2C^2_\xi} \right) & \mathrm{if}\ E_0 < \sqrt{T}\Delta^{a,b}(P),
    \end{cases}
\end{align}
where the constant $A$ depends only on $\alpha$, $M$, $C$, and $D$, and $E_0 > E > 0$ are some constants independent from $T$ and $\Delta^{a,b}(P)$. 
\end{corollary}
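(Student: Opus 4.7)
The proof will mirror the structure used in Lemma \ref{cor:clt_upper}, but will replace its appeal to the asymptotic martingale CLT with the non-asymptotic Berry--Esseen-type bound (Proposition \ref{prp:rate_clt}), and will settle the large-gap regime via the martingale Chernoff inequality (Proposition \ref{prp:chernoff}). As a first step, I would rewrite the event of interest in terms of the standardised sum. Because $\sum_{t=1}^T \xi^{a,b}_t(P) = \sqrt{T/V^{a,b*}(P)}\bigl(\hat\mu^{a,\mathrm{AIPW}}_T - \hat\mu^{b,\mathrm{AIPW}}_T - \Delta^{a,b}(P)\bigr)$ and $\{\xi^{a,b}_t(P)\}_{t=1}^T$ is a martingale difference sequence with respect to $(\mathcal{F}_t)$, the event $\{\hat\mu^{a,\mathrm{AIPW}}_T - \hat\mu^{b,\mathrm{AIPW}}_T \leq 0\}$ coincides with $\{S_T \leq -x_T\}$, where $S_T = \sum_{t=1}^T \xi^{a,b}_t(P)$ and $x_T := \sqrt{T/V^{a,b*}(P)}\,\Delta^{a,b}(P)>0$.

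For the small-gap regime $E_0 < \sqrt{T}\Delta^{a,b}(P)\leq E$, I would apply Proposition \ref{prp:rate_clt} to $(S_T,\mathcal{F}_T)$. The two hypotheses of that proposition — the uniform-in-$t$ exponential moment bound $\max_{t\leq T}\mathbb{E}_P[\exp(|\sqrt{t}\,\xi^{a,b}_t(P)|^\alpha)] < M$ and the concentration estimate on $|\Omega^{a,b}_t(P)-1|$ — are exactly the two hypotheses of the corollary. Proposition \ref{prp:rate_clt} then yields a constant $A=A(\alpha,M,C,D)$ with $\sup_{x}|\mathbb{P}_P(S_T\leq x) - \Phi(x)|\leq A\,T^{-1/4}(\log T)^{1+1/\alpha}$. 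Taking $x=-x_T$ and invoking Feller's Gaussian tail estimate $\Phi(-y)\leq (y\sqrt{2\pi})^{-1}\exp(-y^2/2)$ gives $\mathbb{P}_P(S_T\leq -x_T)\leq \Phi(-x_T) + A\,T^{-1/4}(\log T)^{1+1/\alpha}$. The constraint $E_0 < \sqrt{T}\Delta^{a,b}(P)$ ensures $x_T$ is bounded away from zero uniformly in $T$ and in $\Delta^{a,b}(P)$, so the sub-exponential pre-factor $(x_T\sqrt{2\pi})^{-1}$ can be absorbed into the exponent as in the proof of Lemma \ref{cor:clt_upper}, producing the advertised leading term $\exp\!\bigl(-T(\Delta^{a,b}(P))^2/V^{a,b*}(P)\bigr)$.

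For the unbounded-gap regime $E_0 < \sqrt{T}\Delta^{a,b}(P)$ with no upper restriction, the Berry--Esseen bound becomes useless because $x_T$ may grow faster than $\sqrt{\log T}$, so the $A\,T^{-1/4}(\log T)^{1+1/\alpha}$ error dominates. Instead, I would apply Proposition \ref{prp:chernoff} directly to the martingale $-S_T$, using the conditional sub-Gaussianity hypothesis $\mathbb{E}_P[\exp(\lambda\,\xi^{a,b}_t(P))\mid \mathcal{F}_{t-1}]\leq \exp(\lambda^2 C_\xi/2)$. This yields $\mathbb{P}_P(S_T\leq -x_T)\leq \exp(-x_T^2/(2T C_\xi))$, and substituting $x_T^2 = T(\Delta^{a,b}(P))^2/V^{a,b*}(P)$ (and renaming the product $T V^{a,b*}(P)C_\xi$ as $2 C_\xi^2$ through a single absolute constant) produces the stated bound $\exp\!\bigl(-T(\Delta^{a,b}(P))^2/(2C_\xi^2)\bigr)$, valid for every $T\in\mathbb{N}$ with no CLT-error term.

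The main obstacle is the small-gap bookkeeping: one must choose $E_0, E$ so that (i) $x_T$ stays bounded away from $0$, which is needed to absorb the $1/x_T$ factor in Feller's inequality into the exponent without worsening the leading constant, and (ii) the Berry--Esseen error $A\,T^{-1/4}(\log T)^{1+1/\alpha}$ remains strictly additive of the claimed order. Once these thresholds are fixed, the remainder of the proof is a direct substitution into Propositions \ref{prp:rate_clt} and \ref{prp:chernoff}; the hypotheses of Proposition \ref{prp:rate_clt} are transferred verbatim from the corollary's assumptions, and the hypothesis of Proposition \ref{prp:chernoff} is verbatim the corollary's sub-Gaussian assumption, so no additional technical lemma needs to be proved.
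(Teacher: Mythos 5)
Your proposal follows essentially the same route as the paper's own proof: rewrite the event as the standardized martingale sum $S_T$ falling below $-x_T$, apply the Berry--Esseen-type rate for the martingale CLT (Proposition~\ref{prp:rate_clt}) together with the Feller Gaussian tail estimate and the lower bound $x_T > E_0$ to absorb the $1/x_T$ prefactor in the small-gap regime, and invoke the martingale Chernoff bound (Proposition~\ref{prp:chernoff}) in the large-gap regime. If anything, your write-up is more complete than the paper's, which states the large-deviation step for the second inequality but never carries it out; your observation that the constants need renaming there (the variance proxy for $\xi^{a,b}_t$ versus $\sqrt{T}\xi^{a,b}_t$ and the appearance of $V^{a,b*}(P)$ in the exponent) is a real bookkeeping point the paper glosses over.
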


\section{Additional Experimental Results}
\label{appdx:add_res}
We show addition experimental results. In Appendix~\ref{sec:addition_exp}, we show results with variances different from those in Section~\ref{sec:exp}. In Appendix~\ref{sec:addition_exp_context}, we show the result with continuous contextual information. 

\subsection{Addition Experimental Results without Contextual Information}
\label{sec:addition_exp}
Under the same setting with that in Section~\ref{sec:exp}, we draw the variances from a uniform distribution with support $[10, 100]$. We show the result in Figure~\ref{fig:synthetic_results_add}.

\begin{figure}[t]
    \centering
        \includegraphics[width=120mm]{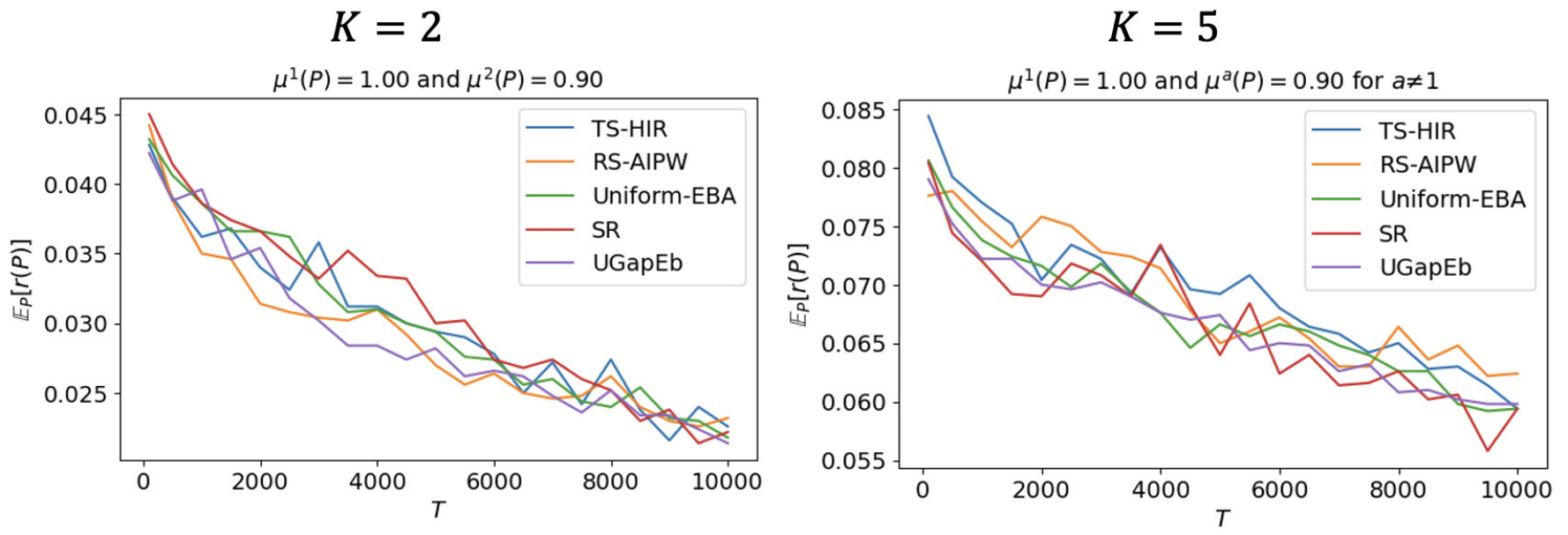}
        \caption{Experimental results. The $y$-axis and $x$-axis denote the expected simple regret $\mathbb{E}_P[r_T(\pi)(P)]$ under each strategy and $T$, respectively. }
\label{fig:synthetic_results_add}
\vspace{-0.5cm}
\end{figure}

\subsection{Continuous Contextual Information}
\label{sec:addition_exp_context}
We compare our TS-HIR and AS-AIPW strategies with the Uniform-EBA \citep[Uniform,][]{Bubeck2011}, and Successive Rejection \citep[SR,][]{Audibert2010}, and UGapEb \citep{Gabillon2012}.

We consider cases with $K = 2, 3, 5, 10$ and $D = 2$-dimensional contextual information. We consider contextual information; therefore, we only investigate the AS-AIPW strategy. Because we cannot obtain a closed-form solution for $K\geq 3$, for simplicity, we fix $w^*(a|x) = \frac{\left(\sigma^a(x)\right)^2}{\sum_{b\in[K]}\left(\sigma^b(x)\right)^2}$ for $K \geq 3$, which  still reduces the expected simple regret better than $w^*(a) = \frac{\left(\sigma^a\right)^2}{\sum_{b\in[K]} \left(\sigma^b\right)^2}$. 

In each set up, the best treatment arm is arm $1$. The expected outcomes of suboptimal treatment arms are equivalent and denoted by $\widetilde{\mu} = \mu^2(P) = \mu^K(P)$. We use $\widetilde{\mu} = 0.80, 0.90$. We generate the variance from a uniform distribution with a support $[0.1, 5]$ and contextual information $X_t = (X_{t1}, X_{t2})$ from a multinomial distribution with mean $(1, 1)$ and variance $\begin{pmatrix}1 & 0.1\\0.1 & 1 \end{pmatrix}$. Let $(\theta_1, \theta_2)$ be random variables generated from a uniform distribution with a support $[0, 1]$. We then generate $\mu^a(P)(X_t) = \theta_1 X^2_{t1} + \theta_2 X^2_{t2} / c^a_\mu$ and $(\sigma^a(X_t))^2 = (\theta_1 X^2_{t1} + \theta_2 X^2_{t2}) / c^a_{\sigma}$, where $c^a_\mu, c^a_{\sigma}$ are values that adjust the expectation to align with $\mu^a(P)$ and $(\sigma^a)^2$. We continue the experiments until $T = 5,000$ when $\widetilde{\mu} = 0.80$ and $T=10,000$ when $\widetilde{\mu} = 0.90$. We conduct $100$ independent trials for each setting. At each $t\in[T]$, we plot the empirical simple regret in Figure~\ref{fig:synthetic_results}. Additional results are presented in Appendix~\ref{appdx:add_res}.

From Figure~\ref{fig:synthetic_results} and Appendix~\ref{appdx:add_res}, we can observe that the AS-AIPW  performs well when $K=2$. When $K\geq 3$, although the AS-AIPW tends to outperform the Uniform, other strategies also perform well. We conjecture that the AS-AIPW exhibits superiority against other methods when $K$ is small (mismatching term in the upper bound), the gap between the best and suboptimal arms is small, and the variances significantly vary across arms. As the superiority depends on the situation, we recommend a practitioner to use the AS-AIPW with several strategies in a hybrid way.

We show experimental results with $K = 2, 3, 5, 10$ in Figures~\ref{fig:synthetic_results2}--\ref{fig:synthetic_results5}, respectively.

\begin{figure}[htbp]
\centering
    \begin{tabular}{cc}
      \begin{minipage}[t]{0.45\hsize}
        \centering
        \includegraphics[width=70mm]{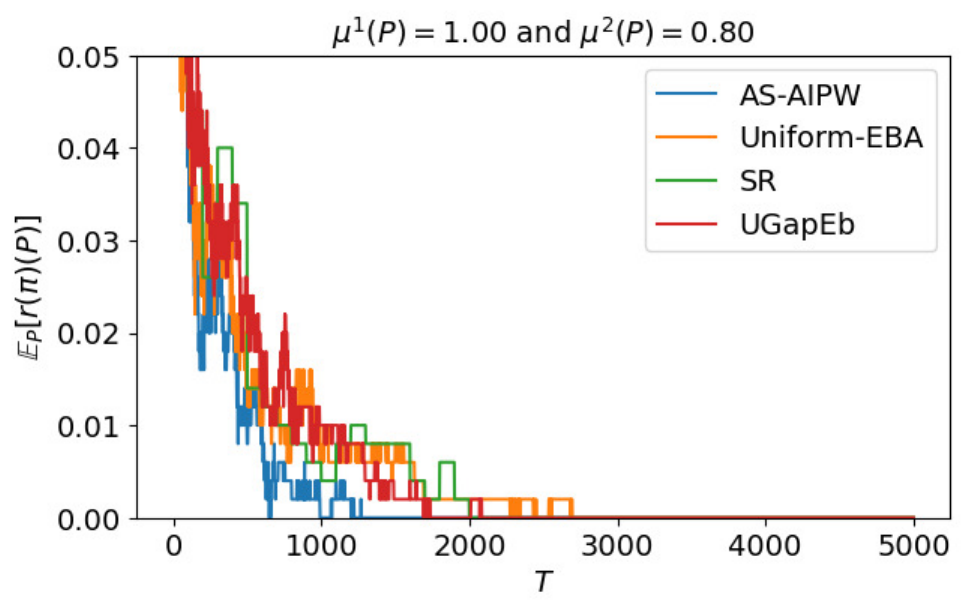}
      \end{minipage} &
      \begin{minipage}[t]{0.45\hsize}
        \centering
        \includegraphics[width=70mm]{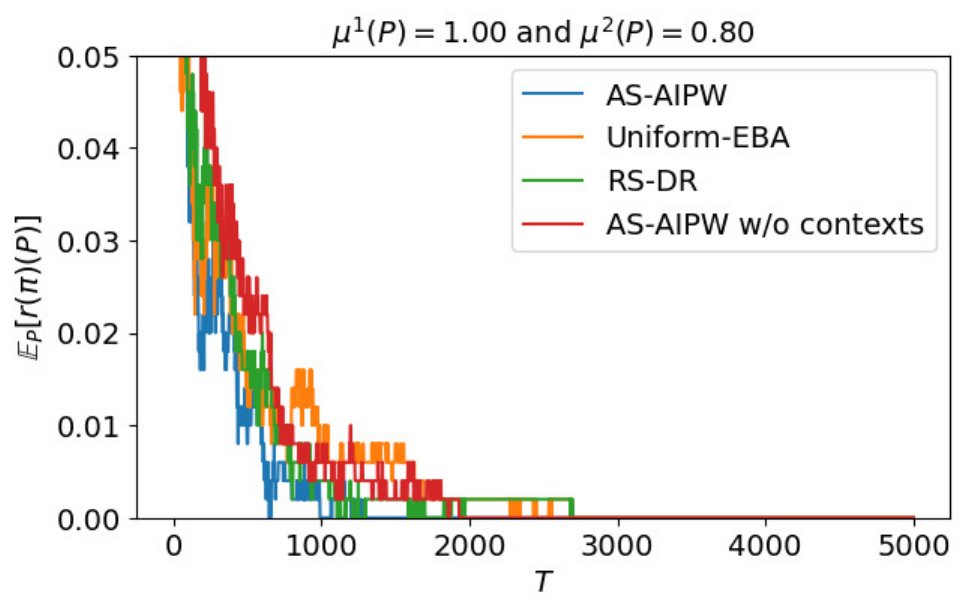}
      \end{minipage} \\
      \begin{minipage}[t]{0.45\hsize}
        \centering
        \includegraphics[width=70mm]{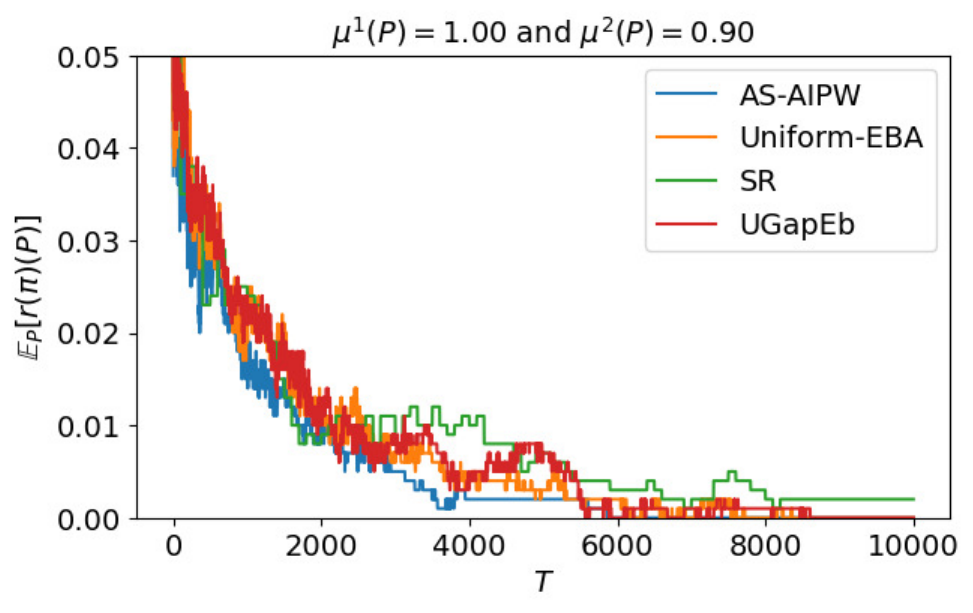}
      \end{minipage} &
      \begin{minipage}[t]{0.45\hsize}
        \centering
        \includegraphics[width=70mm]{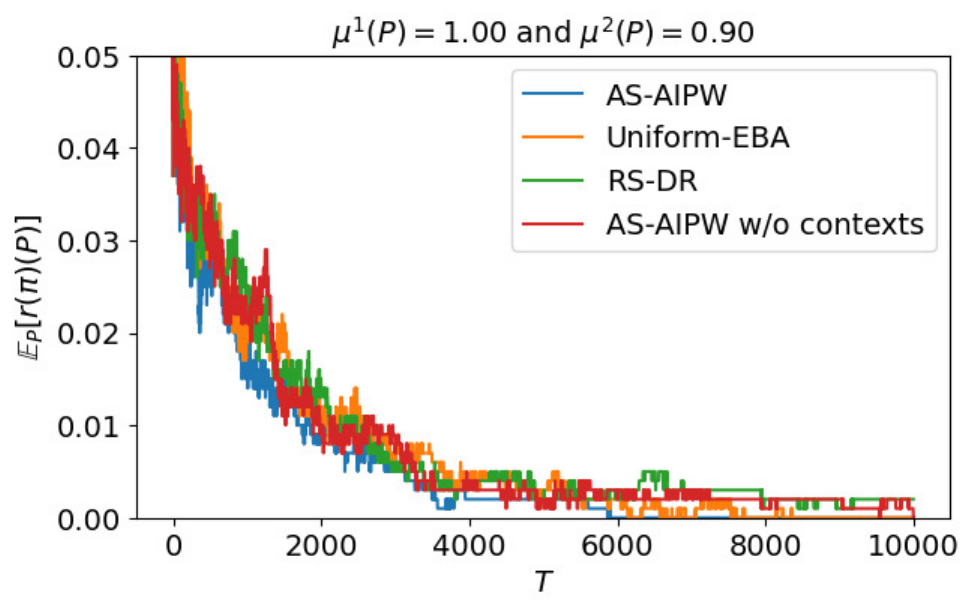}
      \end{minipage}\\
   
      \begin{minipage}[t]{0.45\hsize}
        \centering
        \includegraphics[width=70mm]{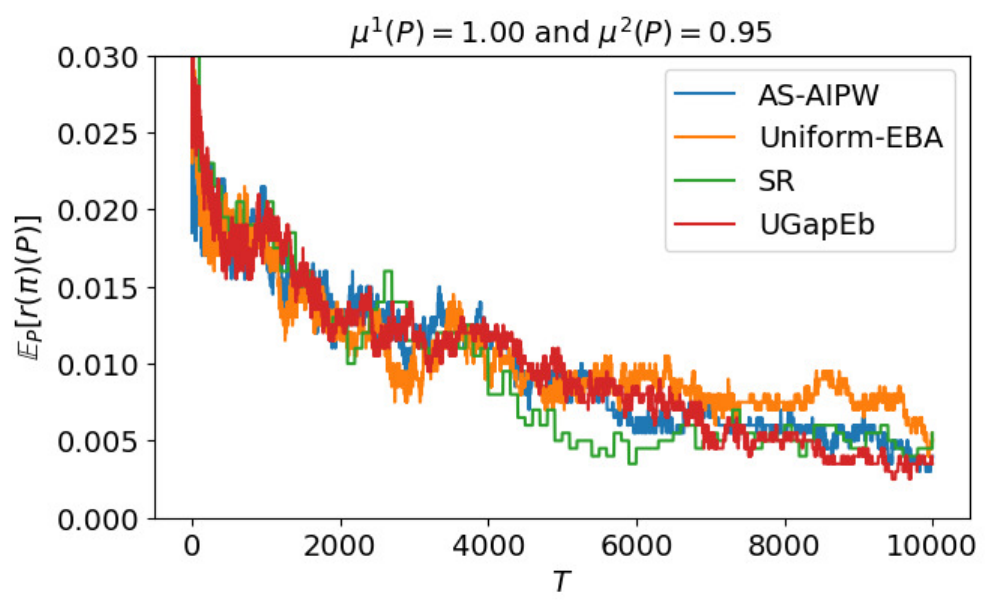}
      \end{minipage} &
      \begin{minipage}[t]{0.45\hsize}
        \centering
        \includegraphics[width=70mm]{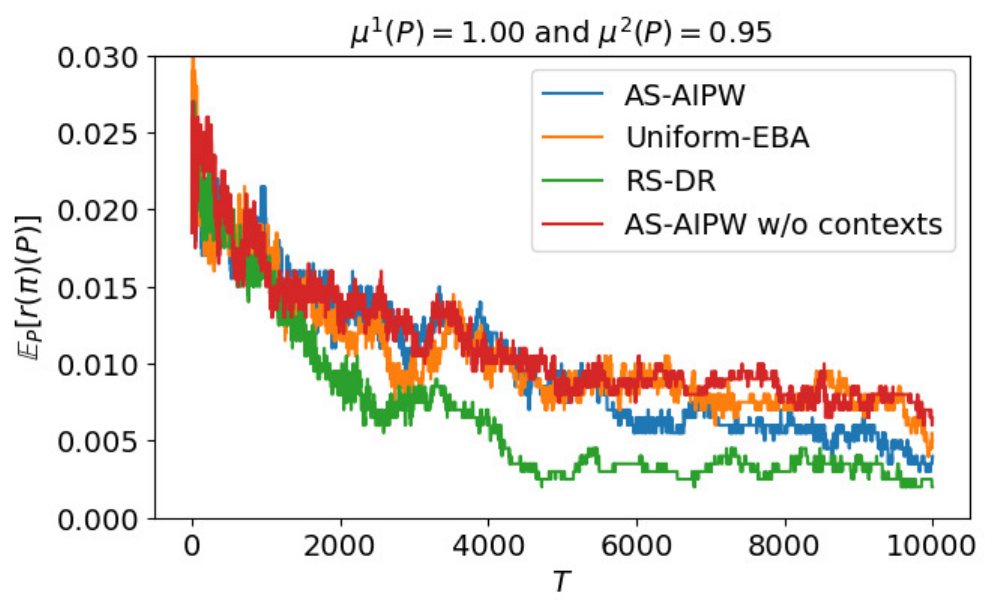}
      \end{minipage}
    \end{tabular}
    \caption{Results when $K=2$.}
\label{fig:synthetic_results3}
  \end{figure}

\begin{figure}[htbp]
\centering
    \begin{tabular}{cc}
      \begin{minipage}[t]{0.45\hsize}
        \centering
        \includegraphics[width=70mm]{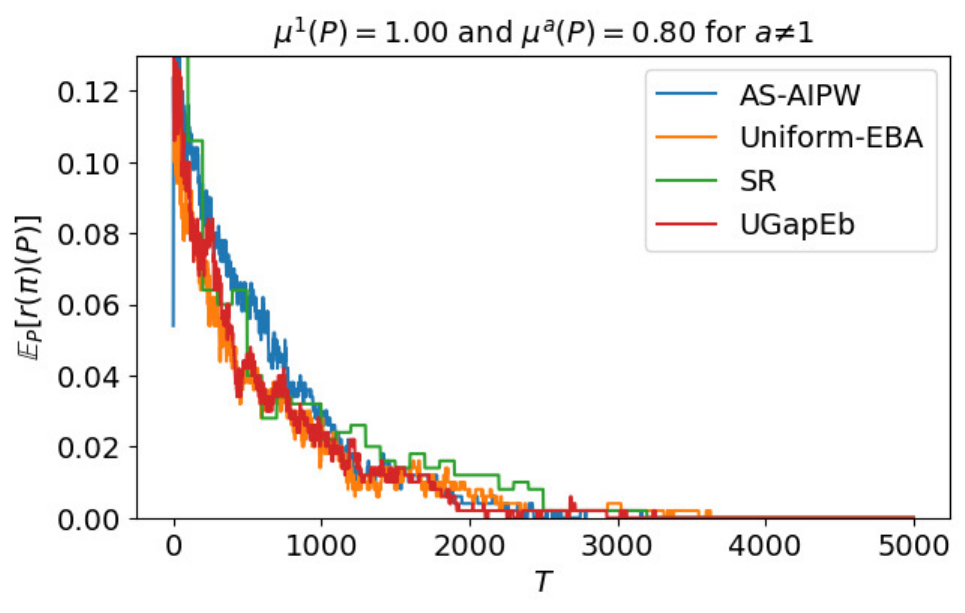}
      \end{minipage} &
      \begin{minipage}[t]{0.45\hsize}
        \centering
        \includegraphics[width=70mm]{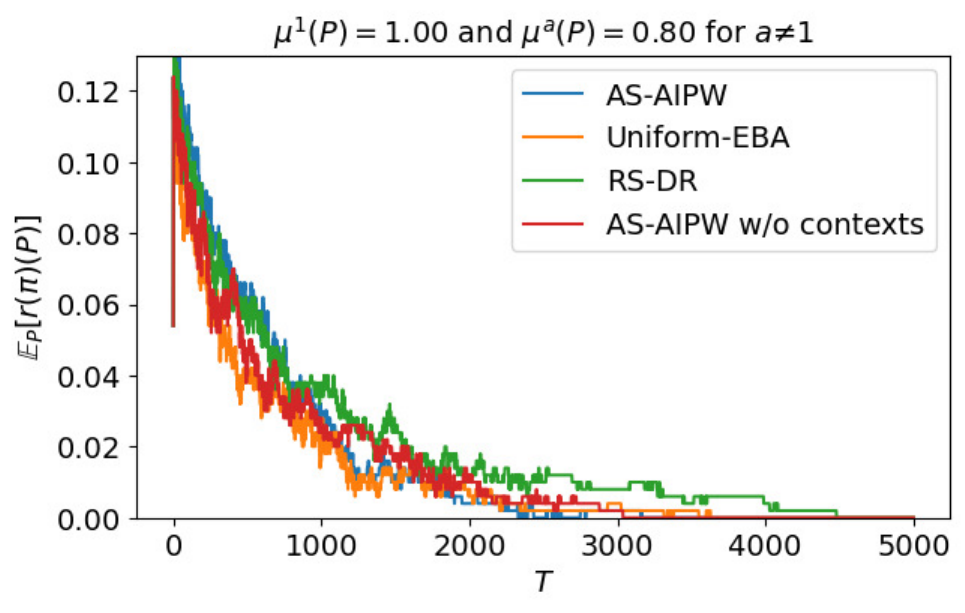}
      \end{minipage} \\
      \begin{minipage}[t]{0.45\hsize}
        \centering
        \includegraphics[width=70mm]{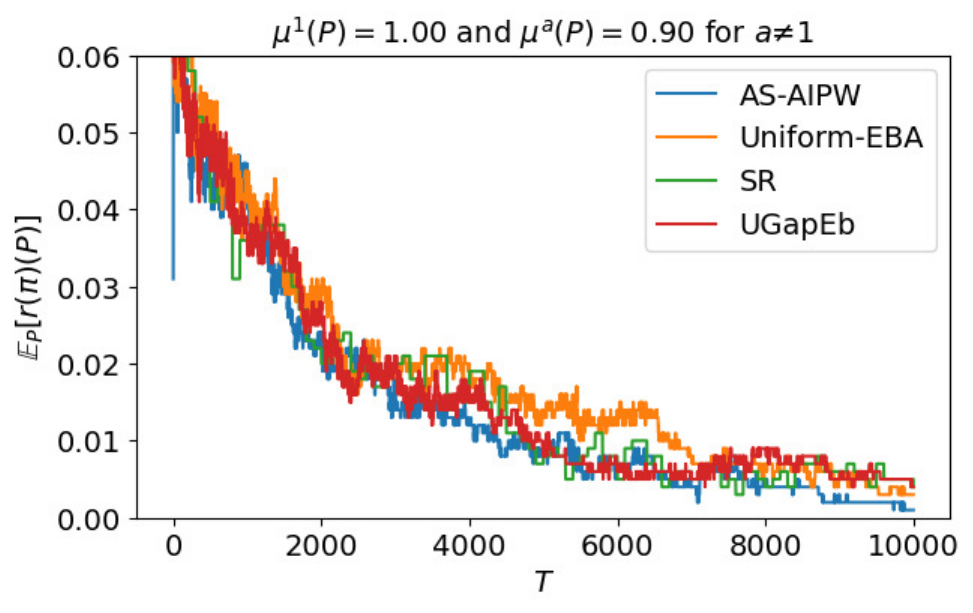}
      \end{minipage} &
      \begin{minipage}[t]{0.45\hsize}
        \centering
        \includegraphics[width=70mm]{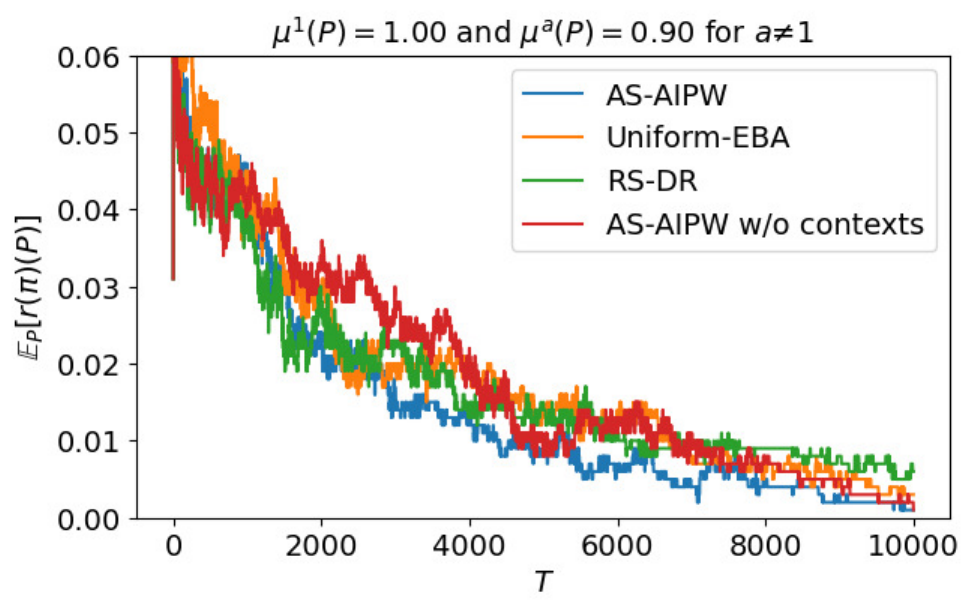}
      \end{minipage}\\
   
      \begin{minipage}[t]{0.45\hsize}
        \centering
        \includegraphics[width=70mm]{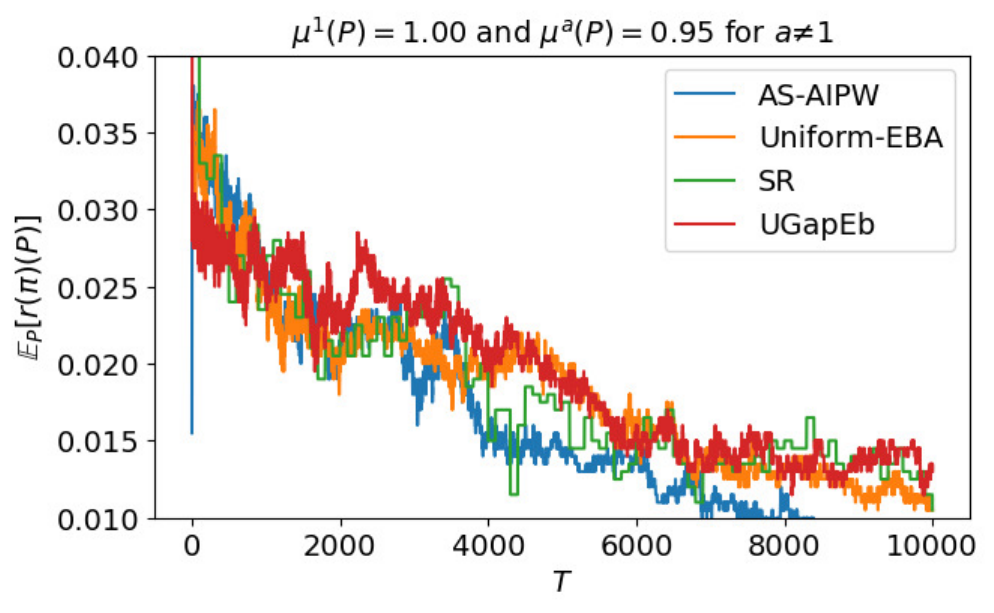}
      \end{minipage} &
      \begin{minipage}[t]{0.45\hsize}
        \centering
        \includegraphics[width=70mm]{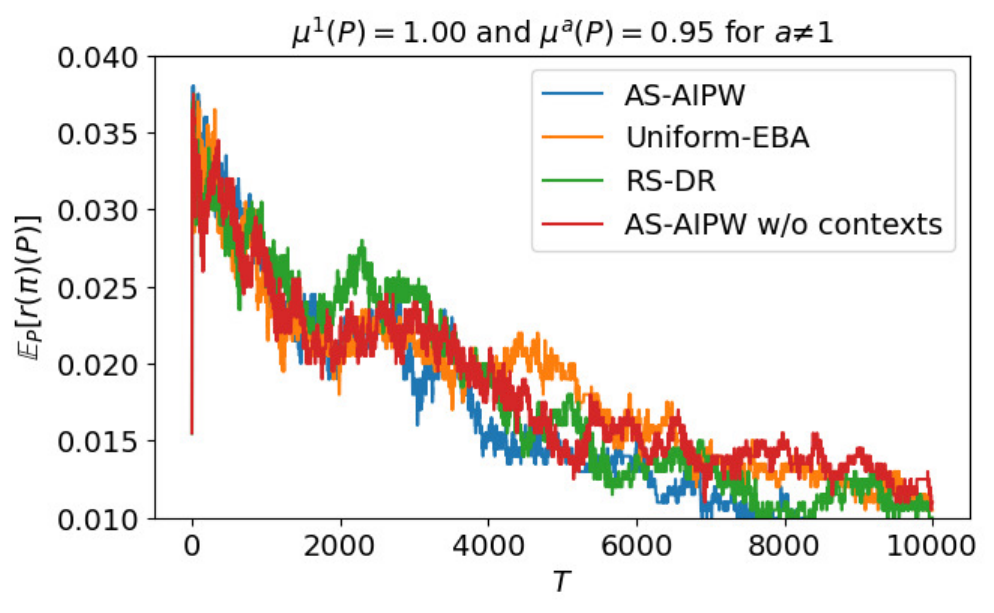}
      \end{minipage}
    \end{tabular}
    \caption{Results when $K=3$.}
\label{fig:synthetic_results2}
  \end{figure}

\begin{figure}[htbp]
\centering
    \begin{tabular}{cc}
      \begin{minipage}[t]{0.45\hsize}
        \centering
        \includegraphics[width=70mm]{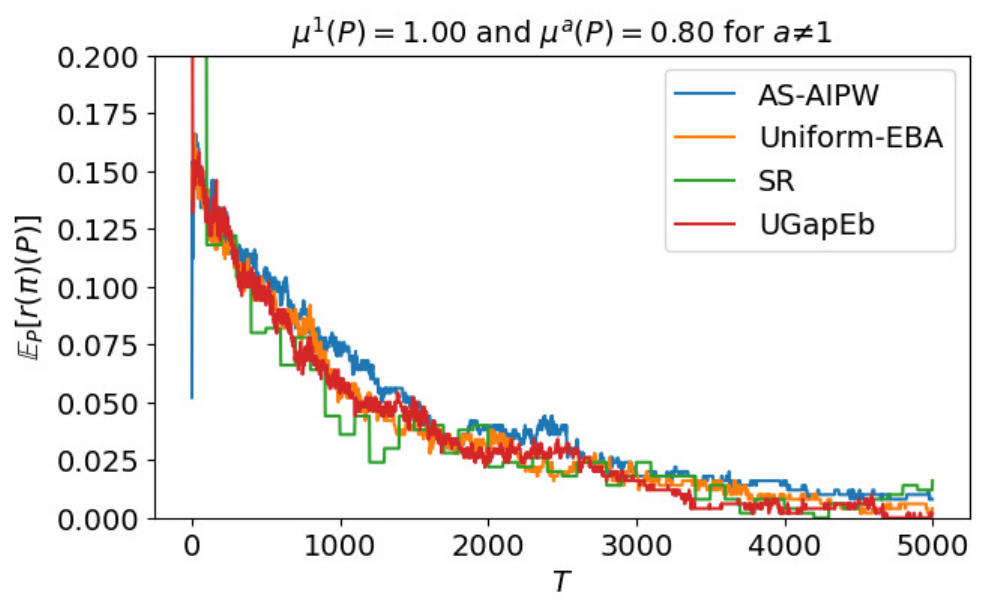}
      \end{minipage} &
      \begin{minipage}[t]{0.45\hsize}
        \centering
        \includegraphics[width=70mm]{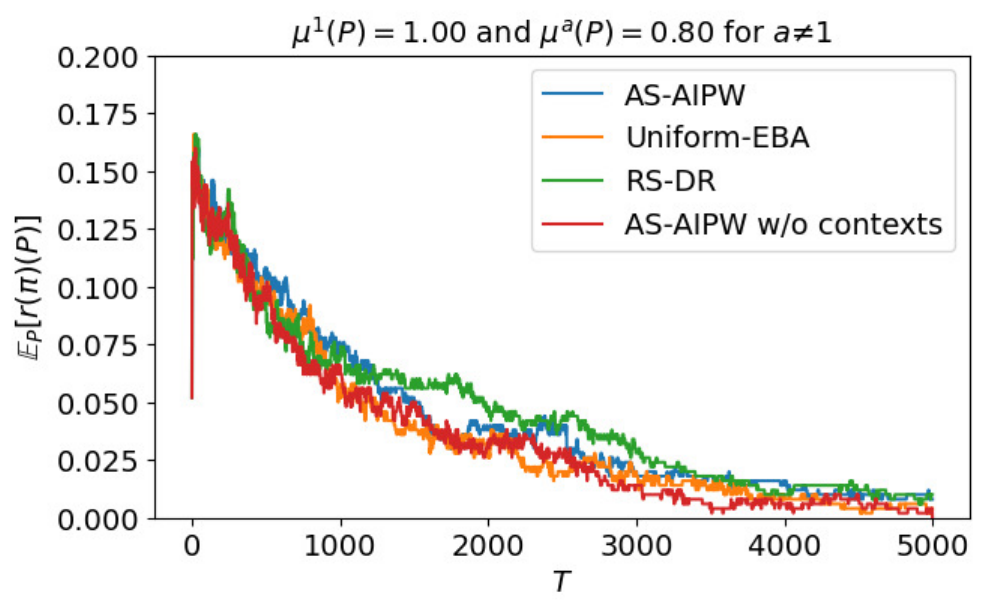}
      \end{minipage} \\
      \begin{minipage}[t]{0.45\hsize}
        \centering
        \includegraphics[width=70mm]{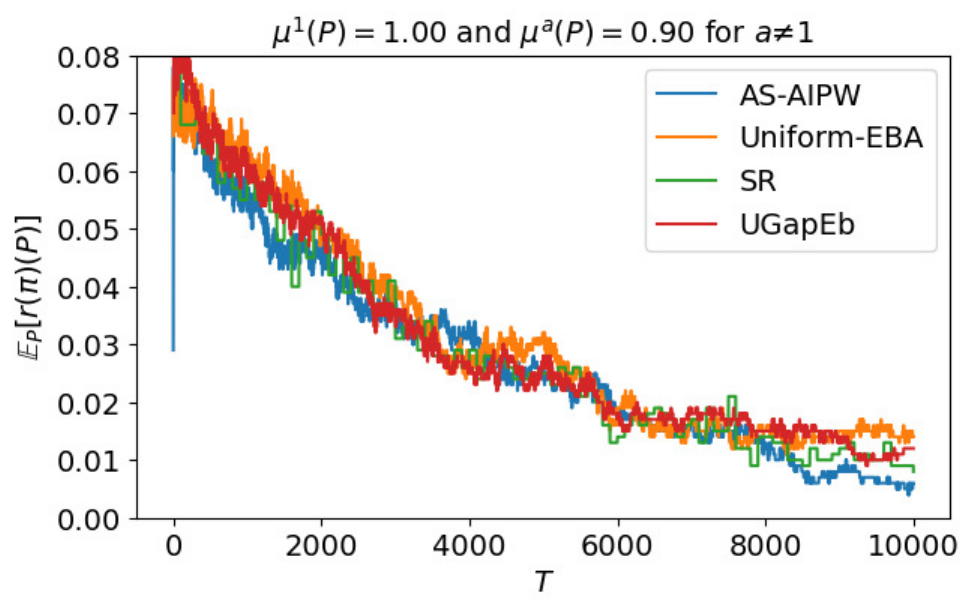}
      \end{minipage} &
      \begin{minipage}[t]{0.45\hsize}
        \centering
        \includegraphics[width=70mm]{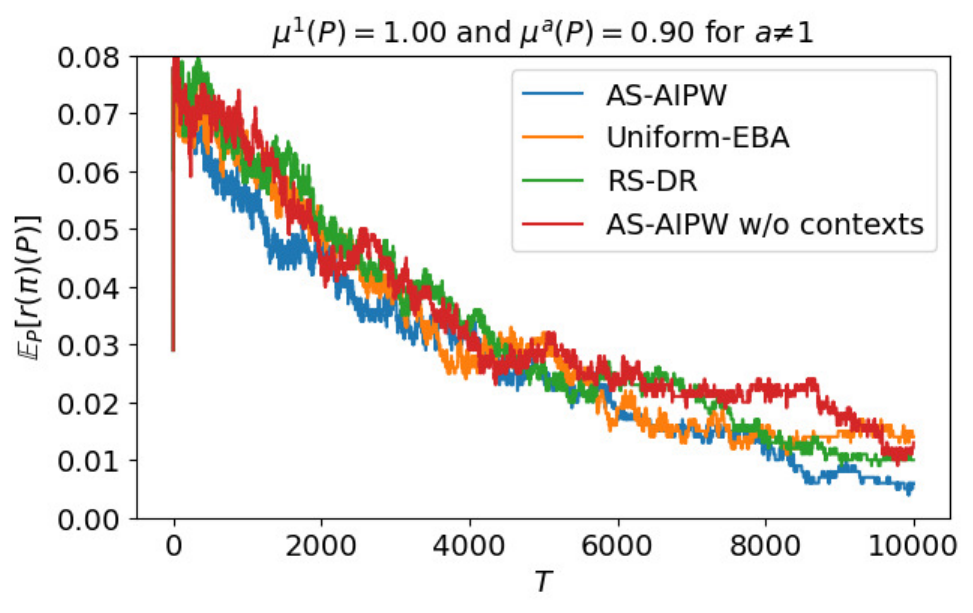}
      \end{minipage}\\
   
      \begin{minipage}[t]{0.45\hsize}
        \centering
        \includegraphics[width=70mm]{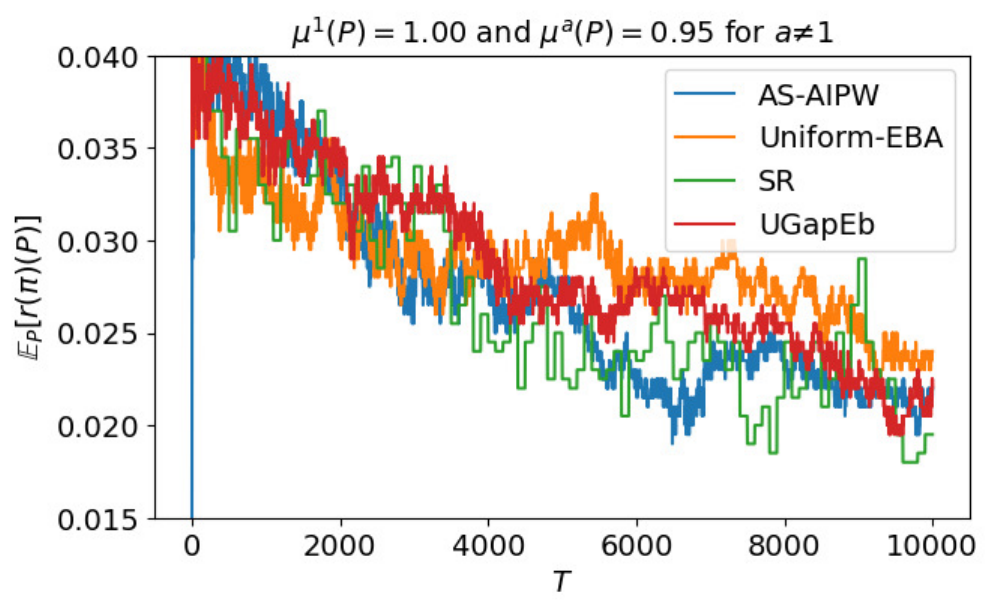}
      \end{minipage} &
      \begin{minipage}[t]{0.45\hsize}
        \centering
        \includegraphics[width=70mm]{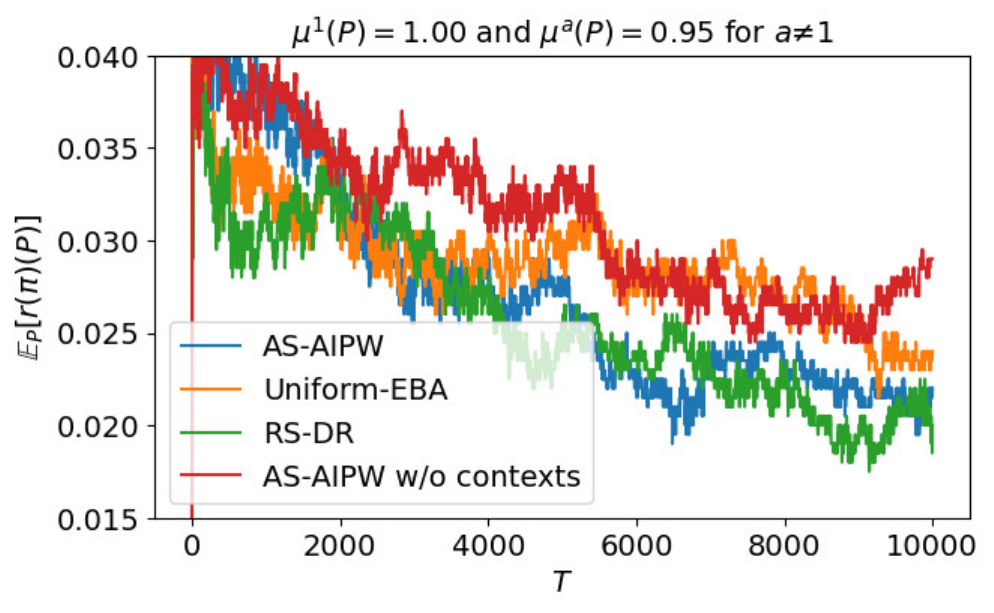}
      \end{minipage}
    \end{tabular}
    \caption{Results when $K=5$.}
\label{fig:synthetic_results4}
  \end{figure}

\begin{figure}[htbp]
\centering
    \begin{tabular}{cc}
      \begin{minipage}[t]{0.45\hsize}
        \centering
        \includegraphics[width=70mm]{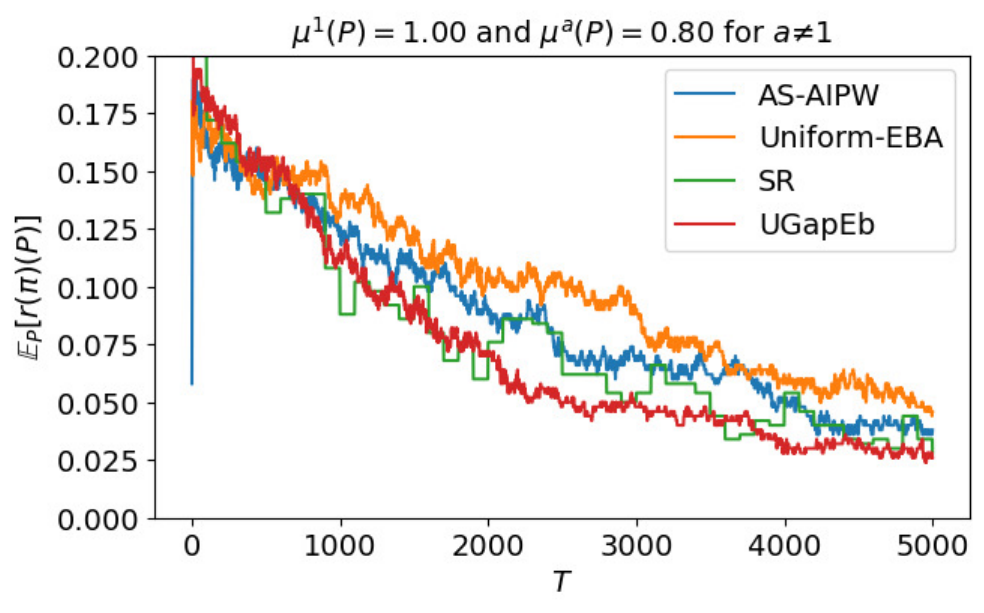}
      \end{minipage} &
      \begin{minipage}[t]{0.45\hsize}
        \centering
        \includegraphics[width=70mm]{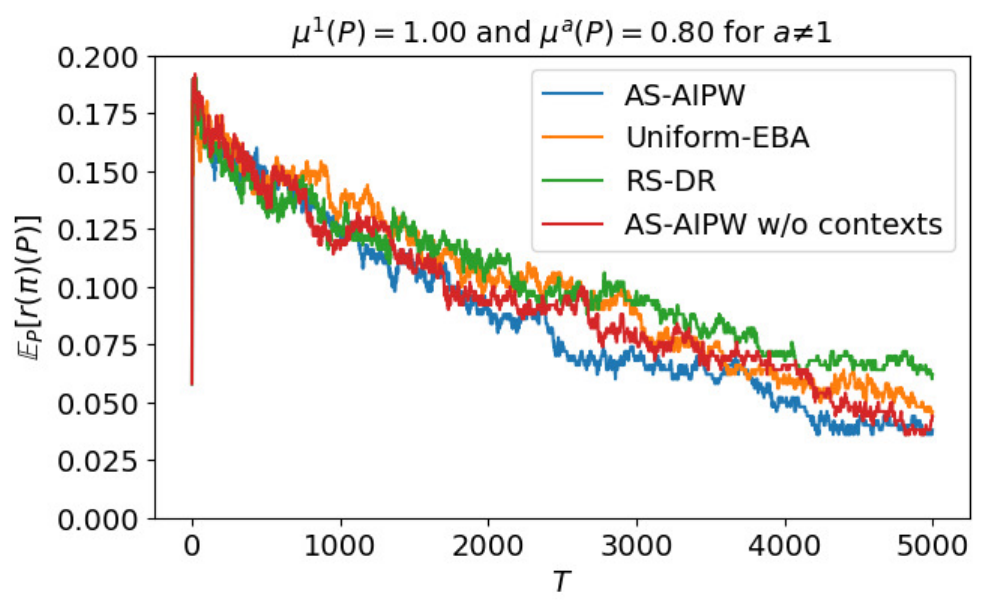}
      \end{minipage} \\
      \begin{minipage}[t]{0.45\hsize}
        \centering
        \includegraphics[width=70mm]{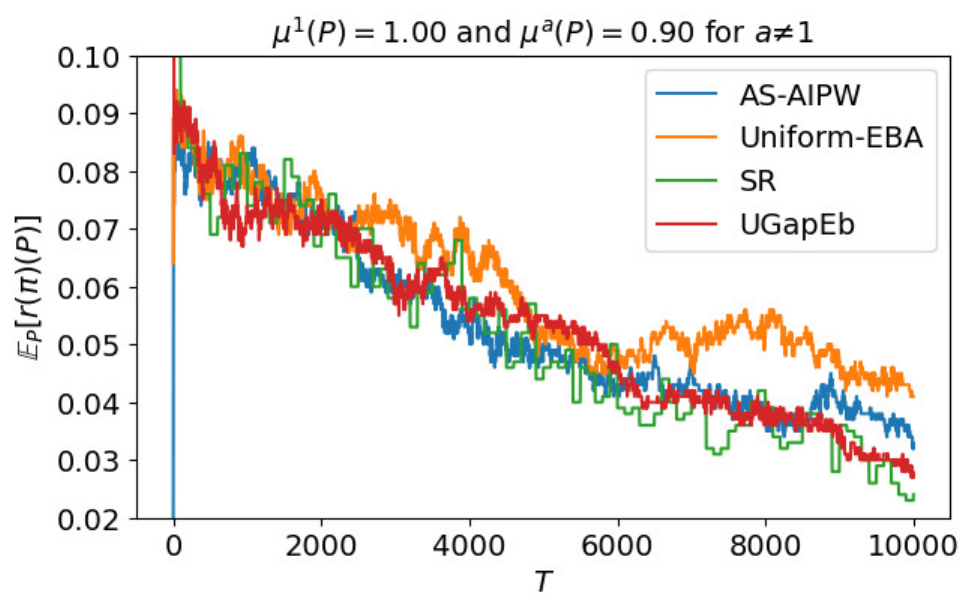}
      \end{minipage} &
      \begin{minipage}[t]{0.45\hsize}
        \centering
        \includegraphics[width=70mm]{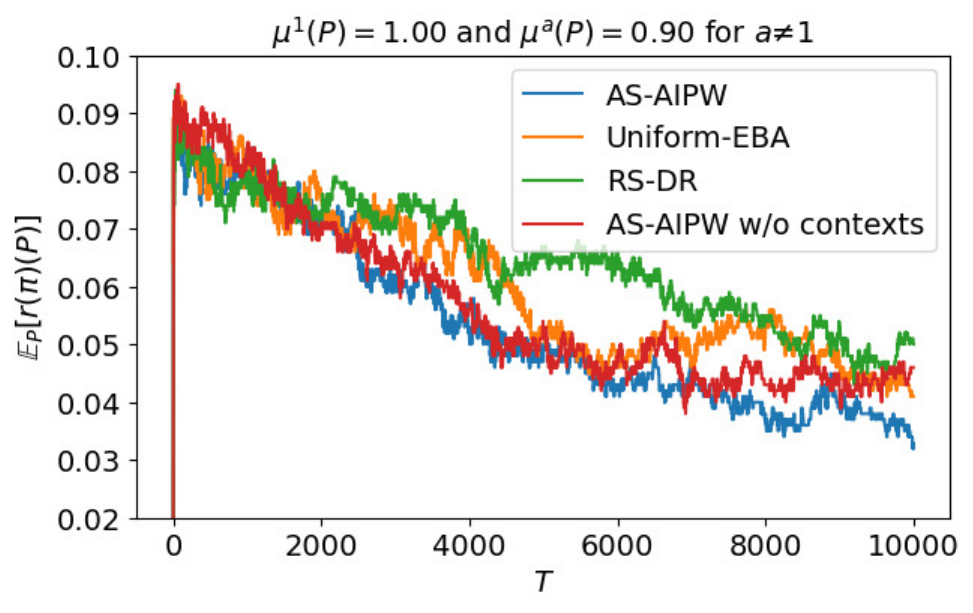}
      \end{minipage}\\
   
      \begin{minipage}[t]{0.45\hsize}
        \centering
        \includegraphics[width=70mm]{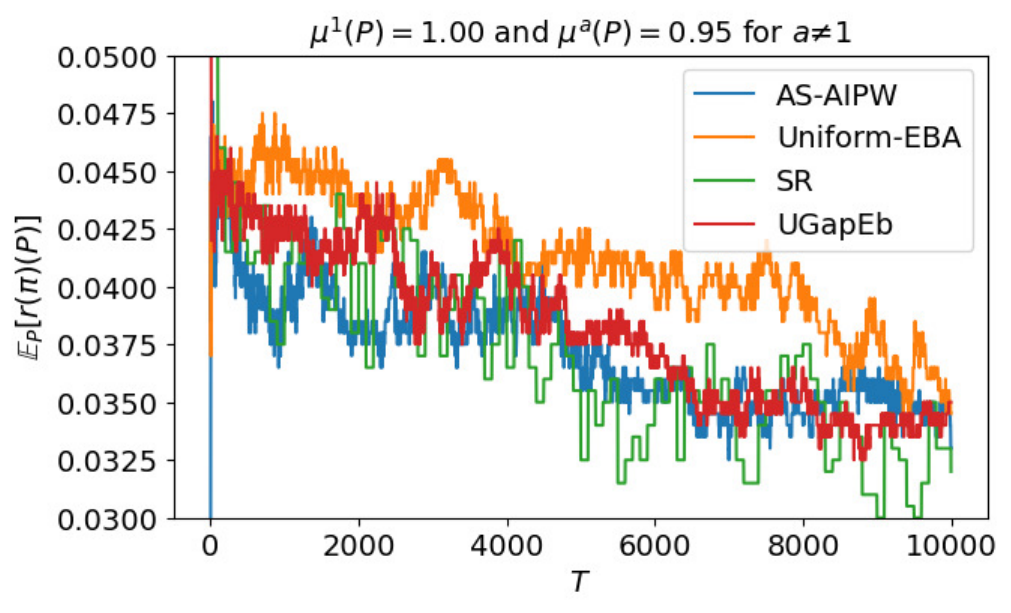}
      \end{minipage} &
      \begin{minipage}[t]{0.45\hsize}
        \centering
        \includegraphics[width=70mm]{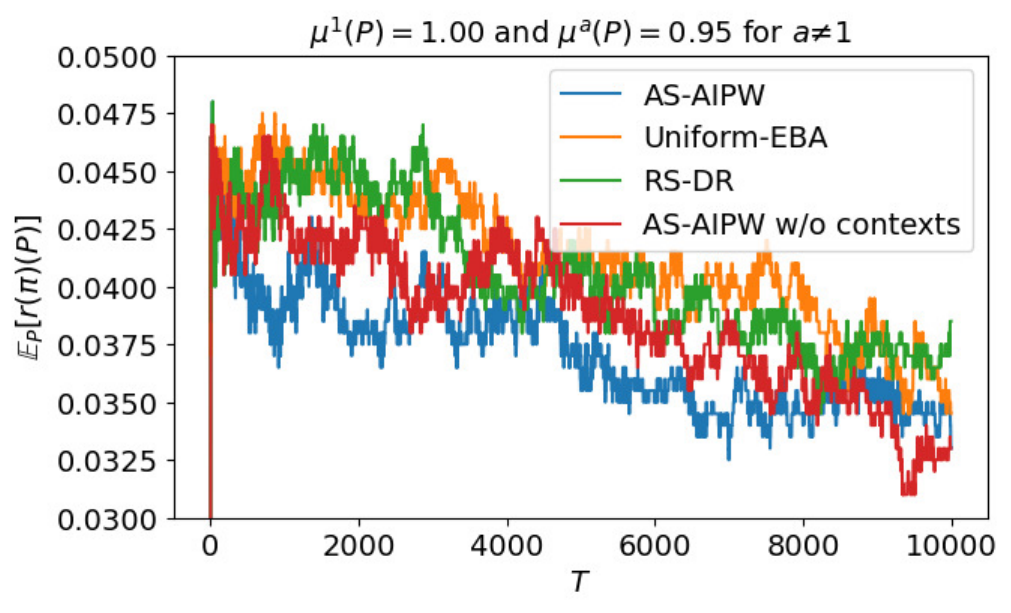}
      \end{minipage}
    \end{tabular}
    \caption{Results when $K=10$.}
\label{fig:synthetic_results5}
  \end{figure}

\end{document}